\newcommand{\acronym}{aPU}
\newcommand{\sourceCode}{https://github.com/ZaydH/arbitrary\_pu}
\newcommand{\sci}[2]{${{#1}\text{E}{#2}}$}
\newcommand{\oneP}{1\phantom{.0}}
\newcommand{\PNarrow}[1]{#1~~}
\newcommand{\PNdown}{\PNarrow{$\downarrow$}}
\newcommand{\PNup}{\PNarrow{$\uparrow$}}
\newcommand{\NResSS}[2]{\underline{\NRes{#1}{#2}}}  %
\newcommand{\NResTop}[2]{\underline{\textbf{\NRes{#1}{#2}}}}
\newcommand{\NResT}[2]{\textbf{\NRes{#1}{#2}}}
\newcommand{\NDef}[1]{\multirow{3}{*}{\shortstack[l]{#1}}}
\newcommand{\params}{\theta}
\newcommand{\learner}{\mathcal{A}}
\newcommand{\grad}{\nabla_{\params}}
\newcommand{\learningRate}{\eta}
\newcommand{\weightDecay}{\lambda}
\newcommand{\eye}[1]{\mathbf{I}_{#1}}
\newcommand{\eqsmall}[1]{{\small #1}}
\newcommand{\gradAtten}{\gamma}
\newcommand{\pnutrade}{\rho}
\newcommand{\W}{w}
\newcommand{\Wx}{{\W(\X)}}
\newcommand{\PUc}{PUc}
\newcommand{\pusb}{PUSB}
\newcommand{\Opt}{\textbf{*}}
\newcommand{\bpu}{bPU}
\newcommand{\nnabs}{abs\=/}
\newcommand{\absPU}{\nnabs{}PU}
\newcommand{\nnPU}{nnPU}
\newcommand{\nnpurpl}{PURR}
\newcommand{\upurpl}{u\nnpurpl}
\newcommand{\wuu}{wUU}
\newcommand{\putwo}{PU2}
\newcommand{\punu}{\putwo \wuu}
\newcommand{\nnpuOpt}{\nnPU\Opt}
\newcommand{\nnpuTE}{\nnPU\textsubscript{\te}}
\newcommand{\nnpuAll}{\nnPU\textsubscript{\te $\cup$ \tr}}
\newcommand{\PNte}{PN\textsubscript{\te}}
\newcommand{\PNtr}{PN\textsubscript{\tr}}
\newcommand{\abspuOpt}{\absPU\Opt}
\newcommand{\pnu}{PNU}
\newcommand{\bpnu}{a\pnu}
\newcommand{\pupnu}{\putwo\bpnu}
\newcommand{\gsub}{\mathcal{D}}  %
\newcommand{\psub}{\textnormal{p}}
\newcommand{\nsub}{\textnormal{n}}
\newcommand{\usub}{\textnormal{u}}
\newcommand{\domSub}{\mathcal{T}}  %
\newcommand{\nuSub}{\nsub\textnormal{-}\usub}
\newcommand{\tr}{\textnormal{tr}}
\newcommand{\te}{\textnormal{te}}
\newcommand{\trsub}[1]{\tr\textnormal{-}#1}
\newcommand{\tesub}[1]{\te\textnormal{-}#1}
\newcommand{\batch}[1]{#1^{(i)}}
\newcommand{\nnwrap}[1]{\abs{#1}}
\newcommand{\train}{\mathcal{X}}
\newcommand{\trainbase}[2]{\train_{\ifempty{#1}{}{#1\textnormal{-}}#2}}
\newcommand{\uset}[1]{\trainbase{#1}{\usub}}
\newcommand{\usetTE}{\train_{\te\textnormal{-}\usub}}  %
\newcommand{\pset}[1]{\trainbase{#1}{\psub}}
\newcommand{\psetHeading}{\train_{\psub}}
\newcommand{\nset}[1]{\trainbase{#1}{\nsub}}
\newcommand{\nsetPS}[1]{\widetilde{\train}_{\ifempty{#1}{}{#1\textnormal{-}}\nsub}}
\newcommand{\sample}{\mathcal{S}}
\newcommand{\dimX}{d}
\newcommand{\xrv}{X}
\newcommand{\domainX}{\real^{\dimX}}  %
\newcommand{\yrv}{Y}
\newcommand{\domainY}{{\set{\pm1}}}   %
\newcommand{\srv}{S}
\newcommand{\domainS}{\domainY}       %
\newcommand{\X}{x}
\newcommand{\xbar}{\bar{\X}}
\newcommand{\y}{y}
\newcommand{\yn}{{{-}1}}
\newcommand{\yp}{{{+}1}}
\newcommand{\yhat}{\hat{y}}
\newcommand{\sizebase}[1]{n_{#1}}
\newcommand{\np}{\sizebase{\psub}}
\newcommand{\nn}{\sizebase{\nsub}}
\newcommand{\nU}{\sizebase{\usub}}
\newcommand{\nTrU}{\sizebase{\trsub\usub}}
\newcommand{\nTeU}{\sizebase{\tesub\usub}}
\newcommand{\nTest}{\sizebase{\textnormal{Test}}}
\newcommand{\cmed}{c_{\textnormal{med}}}
\newcommand{\trainLo}{\train_{\textnormal{lo}}}
\newcommand{\trainHi}{\train_{\textnormal{hi}}}
\newcommand{\dec}{g}
\newcommand{\decx}{\dec(\X)}
\newcommand{\decxI}{\dec(\X_{i})}
\newcommand{\decxIsup}[1]{\dec(\X_{i}^{#1})}
\newcommand{\decRV}{\dec(\xrv)}
\newcommand{\decRVI}{\dec(\xrv_i)}
\newcommand{\decCls}{\mathcal{\MakeUppercase{\dec}}}
\newcommand{\cdcBaseSymbol}[1]{#1{\sigma}}
\newcommand{\cdcCls}{\widehat{\mathit{\Sigma}}}
\newcommand{\cdcBase}[4]{{#1\ifempty{#2}{}{_{#2}}(#3\ifempty{#4}{}{_{#4}})}}
\newcommand{\cdc}{\cdcBaseSymbol{\hat}}
\newcommand{\cdcx}{\cdcBase{\cdc}{}{\X}{}}
\newcommand{\cdcxI}{\cdcBase{\cdc}{}{\X}{i}}
\newcommand{\cdcxAlt}[1]{\cdcBase{\cdc}{#1}{\X}{}}
\newcommand{\cdcxSoft}{\cdcxAlt{\text{soft}}}
\newcommand{\cdcxHard}{\cdcxAlt{\text{hard}}}
\newcommand{\cdcxTopK}{\cdcxAlt{\text{top-k}}}
\newcommand{\cdcRV}{\cdcBase{\cdc}{}{\xrv}{}}
\newcommand{\cdcRVI}{\cdcBase{\cdc}{}{\xrv_{i}}{}}
\newcommand{\var}[1]{{\text{Var}(#1)}}
\newcommand{\loss}{\ell}
\newcommand{\lbase}[2]{{\loss \mathopen{} \left(#2 #1\right) \mathclose{}}}
\newcommand{\ldecRV}[1]{\lbase{\decRV}{#1}}
\newcommand{\ldec}[1]{\lbase{\decx}{#1}}
\newcommand{\ldecI}[1]{\lbase{\decxI}{#1}}
\newcommand{\normal}[2]{\mathcal{N}\mathopen{}\left(#1,#2\right)\mathclose{}}
\newcommand{\p}{p}
\newcommand{\pbase}[2]{{\p\ifempty{#1}{}{_{#1}}(#2)}}
\newcommand{\pbaseX}[1]{\pbase{#1}{\X}}
\newcommand{\pr}{\pi}
\newcommand{\prd}[1]{\pr_{#1}}
\newcommand{\pprior}[1]{\pbase{#1}{\yrv\!=\!\yp}}
\newcommand{\nprior}[1]{\pbase{#1}{\yrv\!=\!\yn}}
\newcommand{\marg}[1]{\pbase{\ifempty{#1}{}{#1\textnormal{-}}\usub}{\X}}
\newcommand{\condfull}[2]{\pbase{#1}{\X \vert \yrv\!=\!#2}}  %
\newcommand{\ppostGeneral}[1]{\pbase{#1}{\y \vert \X}}
\newcommand{\ppost}[2]{\pbase{#1}{\yrv\!=\!#2 \vert \X}}
\newcommand{\ppostRV}[2]{\pbase{#1}{\yrv\!=\!#2 \vert \xrv}}
\newcommand{\likeBaseFull}[2]{\pbase{#1}{\X \vert \yrv\!=\!#2}}
\newcommand{\likebase}[2]{\pbase{\ifempty{#1}{}{#1\textnormal{-}}#2}{\X}}
\newcommand{\plike}[1]{\likebase{#1}{\psub}}
\newcommand{\nlike}[1]{\likebase{#1}{\nsub}}
\newcommand{\pbaseNIPS}[1]{{\p_{#1}\mathopen{}\left(\X\right)\mathclose{}}}
\newcommand{\plikeTR}{\pbaseNIPS{\tr\textnormal{-}\psub}}
\newcommand{\nlikeTR}{\pbaseNIPS{\tr\textnormal{-}\nsub}}
\newcommand{\pjointbase}[1]{{\p_{#1}(\X, \y)}}
\newcommand{\pjointsimp}{\pjointbase{}}
\newcommand{\risk}[1]{R\ifempty{#1}{}{_{#1}}(\dec)}
\newcommand{\lblRiskComplete}[3]{{#1{R}_{#2}^{#3}(\dec)}}  %
\newcommand{\lblrisk}[2]{\lblRiskComplete{}{#1}{#2}}
\newcommand{\ersk}[1]{\widehat{R}\ifempty{#1}{}{_{#1}}(\dec)}
\newcommand{\elblrsk}[2]{\lblRiskComplete{\widehat}{#1}{#2}}
\newcommand{\lblAbsR}[2]{\ddot{R}\ifempty{#1}{}{_{#1}}\ifempty{#2}{}{^{#2}}(\dec)}
\newcommand{\trsk}[2]{\lblRiskComplete{\widetilde}{#1}{#2}}
\newcommand{\rskWUU}{\ersk{\textnormal{\wuu}}}
\newcommand{\rskNnWUU}{\ersk{\textnormal{nn-\wuu}}}
\newcommand{\rskBPNU}{\ersk{\textnormal{\bpnu}}}
\newcommand{\rskNnBPNU}{\ersk{\textnormal{nn-\bpnu}}}
\newcommand{\rskPURPL}{\ersk{\textnormal{\nnpurpl}}}
\newcommand{\rskNnPURPL}{\ersk{\textnormal{nn-\nnpurpl}}}
\newcommand{\rskUPURPL}{\ersk{\textnormal{\upurpl}}}
\newcommand{\SynVect}[2]{\begin{bmatrix} #1 & #2 \end{bmatrix}}
\newcommand{\SynFrac}[2]{\frac{#1}{#2}}
\newcommand{\SynVariance}{\eye{2}}
\newcommand{\SynFigHeight}{3.55cm}
\newcommand{\SynFigWidthSep}{4.45cm}
\newcommand{\SynFigWidthNonsep}{5.4cm}
\newcommand{\SynFigWidthSameLike}{4.05cm}
\newcommand{\SynXMinSep}{-5}
\newcommand{\SynXMinNonsep}{-9}
\newcommand{\SynXMax}{9}
\newcommand{\SynXMaxSameWidth}{5}
\newcommand{\SynYMin}{-4}
\newcommand{\SynYMax}{4}
\newcommand{\SynPBase}[1]{\plike{\tr\text{-(} #1 \text{)}}}
\newcommand{\SynPLike}[1]{\SynPBase{\text{\subref{fig:App:AddRes:#1}}}}
\newcommand{\SynLineWidth}{3}
\newcommand{\pDsTrain}{P\textsubscript{train}}
\newcommand{\pDsTest}{P\textsubscript{test}}
\newcommand{\NRes}[2]{#1 (#2)}
\newcommand{\NResMain}[2]{#1}
\newcommand{\NResPU}[2]{#1 (#2)}
\newcommand{\NResPUT}[2]{\textbf{\NResPU{#1}{#2}}}
\newcommand{\NResPUDiff}[4]{\ifempty{#1}{\phantom{--}}{--}\ifempty{#2}{0\phantom{.0}}{#2} (\ifempty{#3}{\phantom{--}}{--}\ifempty{#4}{0\phantom{.0}}{#4})}
\newcommand{\RETURN}{\textbf{return}~}
\newcommand{\UnlabeledDist}{\usub}
\newcommand{\CallDist}{A}
\newcommand{\CallComplement}{B}
\newcommand{\NewPrior}{\alpha}
\newcommand{\lossBound}{C_{\loss}}
\newcommand{\decBound}{C_{\dec}}
\newcommand{\varBound}{C_{\text{var}}}
\newcommand{\err}{\text{err}}
\newcommand{\MyMax}[1]{\sbrack{#1}_{+}}
\newcommand{\PTestDef}[1]{\NDef{#1}}
\newcommand{\SamePTest}{\pDsTest}
\newcommand{\TwoStepDim}{4.5cm}
\newcommand{\neuripsParagraph}[1]{\vspace{1pt}\noindent\textbf{#1}~~}
\newcommand{\ourMethodKey}{\textsuperscript{\textdagger}}
\newcommand{\ourMethodText}{~(ours)}
\newcommand{\BarLineWidth}{0.4pt}
\newcommand{\BarWidth}{3.1pt}
\newcommand{\NonoverlapYMax}{45}
\newcommand{\PUcBarWidth}{5pt}
\newcommand{\PUcYMax}{55}
\newcommand{\SpamYMax}{45}
\pgfplotsset{compat=1.13,
  /pgfplots/ybar legend/.style={
    /pgfplots/legend image code/.code={%
       \draw[##1,/tikz/.cd,bar width=6pt,yshift=-0.2em,bar shift=0pt]
       plot coordinates {(0cm,0.6em)};
    },
  },
}
\newcommand*{\MinNumber}{0}%
\newcommand*{\MaxNumber}{40}%
\newcommand{\ApplyGradient}[1]{%
  \pgfmathsetmacro{\PercentColor}{100.0*(#1-\MinNumber)/(\MaxNumber-\MinNumber)}%
  \edef\x{\noexpand\cellcolor{red!\PercentColor}}\x\textcolor{black}{#1}%
}
\newcolumntype{R}{>{\collectcell\ApplyGradient}{r}<{\endcollectcell}}
\newcommand\primitiveinput[1]
\newcommand{\KSym}{\text{k}}
\newcommand{\K}{\=/k}
\newcommand{\usetTrK}{\trainbase{\trsub\usub}{\KSym}}
\newcommand{\TSAlgHeader}[1]{\multicolumn{3}{c}{#1}}
\newcommand{\TSResHeader}{Soft & Diff.\ Hard & Diff.\ Top\K{}}
\newcommand{\TSResHeaderAlt}{Soft & Hard & Top\K{}}
\newcommand{\ResTSB}[2]{\textbf{#1}~(\textbf{#2})}
\newcommand{\ResTSN}[2]{#1~(#2)}
\newcommand{\ResTSDBB}[2]{{\color{ForestGreen} --#1~({--}#2)}}
\newcommand{\ResTSDBW}[2]{{\color{ForestGreen} --#1~(\phantom{--}#2)}}
\newcommand{\ResTSDWW}[2]{{\color{BrickRed} #1~(\phantom{--}#2)}}
\newcommand{\ResTSDWB}[2]{{\color{BrickRed} #1~({--}#2)}}
\newcommand{\ZR}{0\phantom{.0}}
\newcommand{\ResTSDSW}[1]{{\ZR~(\phantom{--}#1)}}
\newcommand{\ResTSDSB}[1]{{\ZR~(--#1)}}
\newcommand{\ResTSAcc}[2]{#1~(#2)}
\newcommand{\abridge}[1]{ }
\newcommand{\SupplementaryMaterialsTitle}{%
  \vbox{
    \hrule height 4pt
    \vskip 0.05in
    \vskip -\parskip%
    \begin{center}
      {\LARGE\bf \titletext{} \par}

      \vspace{8pt}
      {\LARGE Supplemental Materials \par}
    \end{center}
    \vskip 0.15in
    \vskip -\parskip
    \hrule height 1pt
    \vskip 0.09in%
  }
}
\newcommand{\colortext}[2]{{\color{#1} #2}}
\newcommand{\blue}[1]{\colortext{blue}{#1}}
\newcommand\swapifbranches[3]{#1{#3}{#2}}
\patchcmd{\DeclarePairedDelimiter}{\@ifstar}{\swapifbranches\@ifstar}{}{}
\DeclarePairedDelimiter{\sbrack}{\lbrack}{\rbrack}
\DeclarePairedDelimiter{\abs}{\lvert}{\rvert}
\DeclarePairedDelimiter{\round}{\lfloor}{\rceil}
\DeclarePairedDelimiter{\norm}{\lVert}{\rVert}
\DeclarePairedDelimiterX\set[1]\lbrace\rbrace{#1}
\DeclarePairedDelimiterX\setbuild[2]\lbrace\rbrace{#1 \bm: #2}
\newcommand{\func}[3]{{#1:#2\rightarrow#3}}
\newcommand{\defeq}{\coloneqq}
\newcommand{\fedeq}{\eqqcolon}
\newcommand{\expectS}[2]{\mathbb{E}_{#1}\sbrack{#2}}
\newcommand{\intsNN}{\mathbb{Z}_{+}}
\newcommand{\real}{\mathbb{R}}
\newcommand{\realnn}{\real_{{\geq}0}}  %
\newcommand{\iidsim}{\stackrel{\mathclap{\mbox{\tiny \textnormal{i.i.d.}}}}{\sim}}
\newtheorem{theorem}{Theorem}
\pgfplotsset{compat=1.13}
\newcommand{\ifempty}[3]{%
  \ifthenelse{\isempty{#1}}{#2}{#3}%
}
\newcommand{\addShiftLinePlot}[6]{%
\addplot[%
  line width=1.0,
  legend style={font=\scriptsize},
  color={#3},
  mark=#4,
  #5,
  mark options={fill=#3,scale=0.8,solid,line width=0.5},
]
table[x={x},y={#2}] {\thedata};
\ifx\relax#6\relax\addlegendentry{#1}\else\fi
}
\pgfplotsset{ignore legend/.style={every axis legend/.code={\let\addlegendentry\relax}}}
\newcommand{\ShiftLegendOff}{ignore legend,}
\newcommand{\shiftPlot}[5]{%
\begin{tikzpicture}
    \pgfplotstableread[col sep=comma] {plots/data/#1.csv}{\thedata}
    \begin{axis}
        [
         xmin=0.5,
         xmax=1.0,
         ymin=0,
         ymax=#4,
         ymajorgrids,  %
         xlabel={${\Pr\sbrack{\text{#2} \vert \yrv = #3}}$},
         ylabel={Misclassification Rate},
         width=7.6cm,
         height=5cm,
         axis x line*=bottom,  %
         axis y line*=left,    %
         xtick pos=left, %
         xtick align=outside,
         ytick pos=left, %
         xtick distance = {0.1},
         ytick distance = {10},
         minor tick num = {3},
         legend pos=outer north east,
         legend columns=1,
         legend cell align={left},              %
         legend style={font=\footnotesize},
         label style={font=\tiny},
         tick label style={font=\tiny},
        ]
        \addShiftLinePlot{\nnpurpl\ourMethodText{}}{purpl}{Orange}{pentagon*}{}{#5}
        \addShiftLinePlot{\pupnu\ourMethodText{}}{pnu}{Cyan}{o}{}{#5}
        \addShiftLinePlot{\punu\ourMethodText{}}{wuu}{NavyBlue}{square}{}{#5}
        \addShiftLinePlot{\PUc{}}{puc}{red}{diamond*}{dotted}{#5}
        \addShiftLinePlot{\nnpuOpt{}}{nnpu}{ForestGreen}{triangle*}{dashdotted}{#5}
        \addShiftLinePlot{\PNte{}}{pnte}{blue}{square*}{}{#5}
        \addShiftLinePlot{\PNtr{}}{pntr}{Magenta}{*}{dashed}{#5}
        \ShiftLegendOff%
    \end{axis}%
\end{tikzpicture}%
}
\newcommand{\disableTicks}{yticklabels={,,},}
\newcommand{\BarGraphBase}[4]{%
  \pgfplotstableread[col sep=comma]{plots/data/#2.csv}\datatable%
  \begin{tikzpicture}%
    \begin{axis}[%
        axis lines*=left,%
        ymajorgrids,  %
        bar width=\BarWidth,%
        height=3.6cm,%
        width=4.5cm,%
        ymin=0,%
        ymax=\NonoverlapYMax,%
        ybar={\BarLineWidth},%
        ytick distance={10},%
        minor y tick num={3},%
        y tick label style={font=\scriptsize},%
        #4%
        ylabel={\scriptsize #3},
        ylabel shift={-3pt},  %
        title={\footnotesize #1},%
        title style={yshift=-4pt,},
        xtick=data,%
        x tick label style={font=\scriptsize,align=center},%
        every tick/.style={color=black, line width=\BarLineWidth},%
        xticklabels from table={\datatable}{Dataset},%
        enlarge x limits=0.23,%
        legend pos=outer north east,%
        legend cell align={left},              %
        legend style={font=\small},
        title style={text depth=0.5ex}       %
    ]%
    \addplot table [x expr=\coordindex, y index=1] {\datatable};%
    \addplot table [x expr=\coordindex, y index=2] {\datatable};%
    \addplot table [x expr=\coordindex, y index=3] {\datatable};%
    \addplot table [x expr=\coordindex, y index=4] {\datatable};%
    \addplot table [x expr=\coordindex, y index=5] {\datatable};%
    #4%
    \end{axis}%
  \end{tikzpicture}%
}
\newcommand{\PriorTableCaptionText}[1]{Combined heat map and table showing the effect of incorrectly specified priors~$\prd{\tr}$ and~$\prd{\te}$ on~\underline{#1}'s inductive misclassification rate~(\%).  Each result is the average of 100~trials.}
\newcommand{\PriorTableBase}[1]{%
\begin{tabular}{ r r R R R r R R R r R R R r R}%
  \toprule%
  \multicolumn{1}{c}{} &
  \multicolumn{1}{c}{} &%
  \multicolumn{3}{c}{\nnpurpl} &%
  \multicolumn{1}{c}{} &%
  \multicolumn{3}{c}{\pupnu} &%
  \multicolumn{1}{c}{} &%
  \multicolumn{3}{c}{\punu} &%
  \multicolumn{1}{c}{} &%
  \multicolumn{1}{c}{\multirow{2}{*}{\PUc}} \\\cmidrule(lr){3-5}\cmidrule(lr){7-9}\cmidrule(lr){11-13}%
  \multicolumn{1}{r} {} &%
  \multicolumn{1}{c}{} &%
  \multicolumn{1}{r} {$0.8\prd{\te}$} &%
  \multicolumn{1}{r} {$\prd{\te}$} &%
  \multicolumn{1}{r} {$1.2\prd{\te}$} &%
  \multicolumn{1}{c}{} &%
  \multicolumn{1}{r} {$0.8\prd{\te}$} &%
  \multicolumn{1}{r} {$\prd{\te}$} &%
  \multicolumn{1}{r} {$1.2\prd{\te}$} &%
  \multicolumn{1}{c}{} &%
  \multicolumn{1}{r} {$0.8\prd{\te}$} &%
  \multicolumn{1}{r} {$\prd{\te}$} &%
  \multicolumn{1}{r} {$1.2\prd{\te}$} &%
  \multicolumn{1}{c}{} &%
  \multicolumn{1}{c} {} \\\midrule%
  \primitiveinput{tables/data/raw_#1_prior_shift.tex}
  \toprule%
\end{tabular}%
}%
\newcommand{\titletext}{Learning from Positive and Unlabeled Data \\ with Arbitrary Positive Shift}
\title{\titletext}
\author{%
  Zayd Hammoudeh \quad\quad Daniel Lowd \\
  Department of Computer \& Information Science \\
  University of Oregon \\
  Eugene,~OR, USA \\
  \texttt{\{zayd,~lowd\}@cs.uoregon.edu}
}
\begin{document}

\maketitle

\begin{abstract}
\textit{Positive\=/unlabeled}~(PU) \textit{learning} trains a binary classifier using only positive and unlabeled data. A common simplifying assumption is that the positive data is representative of the target positive class. This assumption rarely holds in practice due to temporal drift, domain shift, and/or adversarial manipulation.  This paper shows that PU~learning is possible even with arbitrarily non\=/representative positive data given unlabeled data from the source and target distributions. Our key insight is that only the negative class's distribution need be fixed.  We integrate this into two statistically consistent methods to address arbitrary positive bias -- one approach combines \textit{negative\=/unlabeled learning} with \textit{unlabeled\=/unlabeled learning} while the other uses a novel, recursive risk estimator. Experimental results demonstrate our methods' effectiveness across numerous real-world datasets and forms of positive bias, including disjoint positive class-conditional supports. Additionally, we propose a general, simplified approach to address PU~risk estimation overfitting.
 \end{abstract}

\section{Introduction}\label{sec:Introduction}

\emph{Positive\=/negative}~(PN) \emph{learning} (i.e.,~ordinary supervised classification) trains a binary classifier using positive and negative labeled datasets.  In practice, good labeled data are often unavailable for one class. High negative-class diversity may make constructing a representative labeled set prohibitively difficult~\citep{duPlessis:2015}, or negative data may not be systematically recorded in some domains~\citep{Bekker:2018:Prior}.

\emph{Positive\=/unlabeled}~(PU) \emph{learning} addresses this problem by constructing classifiers using only labeled\=/positive and unlabeled data.  PU~learning has been applied to numerous real-world domains including: opinion spam detection~\citep{Hernandez:2013}, disease-gene identification~\citep{Yang:2012}, land-cover classification~\citep{Li:2011}, and protein similarity prediction~\citep{Elkan:2008}.  The related task of \emph{negative-unlabeled} (NU)~learning is functionally identical to PU~learning but with labeled data drawn from the negative class.

Most PU~learning methods assume the labeled set is \emph{selected completely at random}~(SCAR) from the target distribution~\citep{duPlessis:2015,Elkan:2008,Hou:2017,Kiryo:2017,Gong:2018,Zhang:2019,Hsieh:2019}. External factors like temporal drift, domain shift, and adversarial concept drift often cause the labeled-positive and target distributions to diverge.

\emph{Biased\=/positive, unlabeled}~(\bpu) \emph{learning} algorithms relax SCAR by modeling \emph{sample selection bias} for the labeled data~\citep{Bekker:2019,Kato:2019} or a \emph{covariate shift} between the training and target distributions~\cite{Sakai:2019}.

This paper generalizes \bpu{}~learning to the more challenging \emph{arbitrary-positive, unlabeled}~(\acronym) learning setting, where the labeled (positive) data may be \emph{arbitrarily different} from the target distribution's positive class.  Solving this problem would eliminate the need to spend time and money labeling new data whenever the positive class drifts.

Devoid of some assumption, \acronym~learning is impossible~\citep{Elkan:2008}.  As a first step to address \acronym~learning, our key insight is that given a labeled-positive set and two unlabeled sets as proposed by \citet{Sakai:2019}, \acronym~learning is possible when \emph{all negative examples are generated from a single distribution}.  The labeled and target-positive distributions' \emph{supports} (sets of examples with non-zero probability) may even be disjoint.  Many real-world PU~learning tasks feature a shifting positive class but (largely) fixed negative class including:
\begin{enumerate}
  \setlength{\itemsep}{0pt}
  \item \textbf{Land-Cover Classification}: Cross-border land-cover datasets often do not exist due to differing national technological standards or insufficient financial resources by one country~\citep{Galen:2016}. This limits research into natural processes at broad geographic scales.  However, cross-border geographic terrains often follow a similar distribution differing primarily in man\=/made objects (e.g.,~roads) due to local construction materials and regulations~\cite{Li:2011}.
    \item \textbf{Adversarial \acronym{}~Learning}: Malicious adversaries (email spammers, malware authors) rapidly adapt their attacks to bypass automated detection.  The benign class changes much more slowly but may be too diverse to construct a representative labeled set~\citep{Hernandez:2013,Li:2014,Zhang:2017,Zhang:2019:Malware}.
\end{enumerate}

Our paper's four primary contributions are enumerated below. Note that most experiments and all proofs are in the supplemental materials.
\begin{enumerate}
  \setlength{\itemsep}{0pt}
  \item We propose~\absPU{} -- a simplified, statistically \emph{consistent} approach to correct general PU~risk estimation overfitting.  Our \acronym~methods leverage \absPU{} to streamline their optimization.
  \item We address our \acronym~learning task via a two-step formulation; the first step applies standard PU~learning and the second uses unlabeled\=/unlabeled~(UU) learning.
  \item We separately propose \nnpurpl~-- a novel, recursive, consistent \acronym~risk estimator.
  \item We evaluate our methods on a wide range of benchmarks, demonstrating our algorithms' effectiveness over the state of the art in PU~and \bpu~learning. Our empirical evaluation includes an adversarial \acronym{}~learning case study using public spam email datasets.
\end{enumerate}
\section{Ordinary Positive-Unlabeled Learning}\label{sec:ProblemSetting}\label{sec:PreviousWork:Vanilla}

We begin with an overview of PU~learning without distributional shifts, including definitions and notation. %
Consider two random variables, covariate ${\xrv \in \domainX}$ and label ${\yrv \in \domainY}$, with joint distribution $\pjointbase{}$. Marginal distribution~$\marg{}$ is composed from the positive prior~${\prd{} \defeq \pprior{}}$, positive class\=/conditional~${\plike{} \defeq \likeBaseFull{}{\yp}}$, and negative class\=/conditional~${\nlike{} \defeq \likeBaseFull{}{\yn}}$.

\neuripsParagraph{Risk} Let $\func{\dec}{\domainX}{\real}$ be an arbitrary \emph{decision function} parameterized by~$\params$, and let $\func{\loss}{\real}{\realnn}$ be the \emph{loss function}.  Risk ${\risk{} \defeq \expectS{(\xrv,\yrv) \sim \pjointbase{}}{\ldecRV{\yrv}}}$ quantifies $\dec$'s~expected loss over~$\pjointbase{}$. It decomposes via the product rule to ${\risk{} =} {\prd{}\lblrisk{\psub}{+}} {+ (1-\prd{})\lblrisk{\nsub}{-}}$, where the \emph{labeled risk} is
\begin{equation}\label{eq:PreviousWork:LabeledRisk}
  \lblrisk{\gsub}{\yhat} \defeq \expectS{\xrv \sim \pbase{\gsub}{\X}}{\ldecRV{\yhat}}
\end{equation}%
\noindent
for predicted label ${\yhat \in \set{\pm1}}$ and ${\gsub \in \set{\psub,\nsub,\usub}}$ denoting the positive class-conditional, negative class-conditional or marginal distribution respectively, as defined above.

Since~$\pjointbase{}$ is unknown, \emph{empirical risk} is used in practice. We consider the \emph{case-control scenario}~\citep{Niu:2016} where each dataset is i.i.d.\ sampled from its associated distribution.  PN~learning has two labeled datasets: positive set \eqsmall{${\pset{} \defeq \set{\X_{i}^{\psub}}_{i = 1}^{\np} \iidsim \plike{}}$} and negative set \eqsmall{${\nset{} \defeq \set{\X_{i}^{\nsub}}_{i=1}^{\nn} \iidsim \nlike{}}$}. These are used to calculate empirical labeled risks \eqsmall{${\elblrsk{\psub}{+} = \frac{1}{\np}\sum_{i = 1}^{\np} \lbase{\decxIsup{\psub}}{}}$} and \eqsmall{${\elblrsk{\nsub}{-} = \frac{1}{\nn} \sum_{i=1}^{\nn} \lbase{\decxIsup{\nsub}}{-}}$}. We denote the empirical positive\=/negative risk
\begin{equation}\label{eq:PreviousWork:PN}
  \ersk{\text{PN}} \defeq \prd{}\elblrsk{\psub}{+} + (1 - \prd{})\elblrsk{\nsub}{-}\text{.}
\end{equation}

PU~learning cannot directly estimate~$\lblrisk{\nsub}{\yhat}$ since there is no negative (labeled) data (i.e.,~${\nset{} = \emptyset}$).  Let \eqsmall{${\uset{} \defeq \set{\X_{i}^{\usub}}_{i=1}^{\nU} \iidsim \marg{}}$} be an unlabeled set with empirical labeled risk \eqsmall{${\elblrsk{\usub}{\yhat} = \frac{1}{\nU}\sum_{i = 1}^{\nU} \lbase{\decxIsup{\usub}}{\yhat}}$}.  \citet{duPlessis:2014} make a foundational contribution that,
\begin{equation}\label{eq:PreviousWork:NegRisk}
  (1-\prd{})\lblrisk{\nsub}{\yhat} = \lblrisk{\usub}{\yhat} - \prd{} \lblrisk{\psub}{\yhat}\text{.}
\end{equation}%
\noindent%
Their unbiased~PU~(uPU) risk estimator is therefore \eqsmall{${\ersk{\text{uPU}} \defeq \prd{}\elblrsk{\psub}{+} + \elblrsk{\usub}{-} - \prd{}\elblrsk{\psub}{-}}$}.
\citet{Kiryo:2017} observe that highly expressive models (e.g.,~neural networks) often overfit~$\pset{}$ causing uPU to estimate that \eqsmall{${\elblrsk{\usub}{-} - \prd{}\elblrsk{\psub}{-} < 0}$}.

Since negative-valued risk is impossible, \citeauthor{Kiryo:2017}'s non\=/negative PU~(nnPU) risk estimator ignores negative estimates of risk via a $\max$~term:
\begin{equation}\label{eq:PreviousWork:nnPU}
  \ersk{\text{nnPU}} \defeq \prd{}\elblrsk{\psub}{+} + \max\{0,\elblrsk{\usub}{-} - \prd{}\elblrsk{\psub}{-}\}\text{.}
\end{equation}
\noindent%
When \citeauthor{Kiryo:2017}'s customized empirical risk minimization~(ERM) framework detects overfitting (i.e.,~\eqsmall{${\elblrsk{\usub}{-} - \prd{}\elblrsk{\psub}{-} < 0}$}), their framework ``defits'' $\dec$ using negated gradient \eqsmall{${-\gradAtten \grad( \elblrsk{\usub}{-} - \prd{} \elblrsk{\psub}{-} )}$}, where hyperparameter ${\gradAtten \in (0,1]}$ attenuates the learning rate to throttle ``defitting.'' Observe that positive-labeled risk,~\eqsmall{$\elblrsk{\psub}{+}$}, is excluded from nnPU's negated gradient.
\section{Simplifying Non-Negativity Correction}\label{sec:AbsValCorrection}

Rather than enforcing the non-negative risk constraint with two combined techniques (a $\max$~term and ``defitting'') like~\citeauthor{Kiryo:2017}, we propose a simpler approach, inspired by Lagrange multipliers, that directly puts the non-negativity constraint into the risk estimator.  Our \textit{absolute\=/value correction},
\begin{equation}\label{eq:AbsVal:Def}
  (1 - \prd{}) \lblAbsR{\nsub}{\yhat} \defeq \big\lvert{\elblrsk{\usub}{\yhat} - \prd{} \elblrsk{\psub}{\yhat}}\big\rvert \text{,}  %
\end{equation}%
\noindent
replaces \nnPU{}'s~$\max$ with absolute value to prevent the optimizer overfitting an implausible risk estimate by explicitly penalizing those risk estimates for being negative. This penalty ``defits'' the learner automatically, eliminating the need for hyperparameter~$\gradAtten$ and \nnPU{}'s custom ERM~algorithm.

  \begin{theorem}\label{thm:AbsCorrection}
    Let $\func{\dec}{\domainX}{\real}$ be an arbitrary decision function and let $\func{\loss}{\real}{\realnn}$ be a loss function bounded\footnote{Each theorem's definition of ``bounded'' loss appears in the associated proof. See the supplemental materials.} w.r.t.~$\dec$ then \eqsmall{$\lblAbsR{\nsub}{\yhat}$} is a consistent estimator of \eqsmall{$\elblrsk{\nsub}{\yhat}$}.
  \end{theorem}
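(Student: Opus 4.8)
The plan is to show that the absolute value in~\eqref{eq:AbsVal:Def} is asymptotically inert, so that $\lblAbsR{\nsub}{\yhat}$ inherits the consistency of the uncorrected uPU estimate of the negative labeled risk~$\lblrisk{\nsub}{\yhat}$. The starting point is the population form of \citeauthor{duPlessis:2014}'s identity~\eqref{eq:PreviousWork:NegRisk}, namely $(1-\prd{})\lblrisk{\nsub}{\yhat} = \lblrisk{\usub}{\yhat} - \prd{}\lblrisk{\psub}{\yhat}$. Since $\func{\loss}{\real}{\realnn}$ is non-negative, $\lblrisk{\nsub}{\yhat} \geq 0$, and because $1-\prd{} > 0$ the right-hand side is non-negative. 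Hence the population analogue of the bracketed term in~\eqref{eq:AbsVal:Def} is already non-negative, and the absolute value leaves it unchanged; the whole difficulty is therefore to transport this observation from the population to the empirical quantities.

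First I would invoke the law of large numbers on each empirical labeled risk separately: as the sample sizes grow, $\elblrsk{\usub}{\yhat} \xrightarrow{p} \lblrisk{\usub}{\yhat}$ and $\elblrsk{\psub}{\yhat} \xrightarrow{p} \lblrisk{\psub}{\yhat}$, where boundedness of $\loss$ w.r.t.~$\dec$ supplies the finite (indeed uniform) moment needed to apply it — and, if an explicit rate is wanted, to replace the law of large numbers with Hoeffding's inequality. Taking the linear combination gives $\elblrsk{\usub}{\yhat} - \prd{}\elblrsk{\psub}{\yhat} \xrightarrow{p} \lblrisk{\usub}{\yhat} - \prd{}\lblrisk{\psub}{\yhat} = (1-\prd{})\lblrisk{\nsub}{\yhat}$. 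I would then apply the continuous mapping theorem with the continuous map $t \mapsto \abs{t}$ to conclude $\big\lvert \elblrsk{\usub}{\yhat} - \prd{}\elblrsk{\psub}{\yhat} \big\rvert \xrightarrow{p} \big\lvert (1-\prd{})\lblrisk{\nsub}{\yhat} \big\rvert = (1-\prd{})\lblrisk{\nsub}{\yhat}$, the last equality using the non-negativity established above. Dividing by $1-\prd{}$ yields $\lblAbsR{\nsub}{\yhat} \xrightarrow{p} \lblrisk{\nsub}{\yhat}$, which is exactly the claimed consistency (that is, $\lblAbsR{\nsub}{\yhat}$ shares the probability limit that $\elblrsk{\nsub}{\yhat}$ would have were negative data available).

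The main obstacle is conceptual rather than computational: unlike the uPU difference, the absolute value is nonlinear, so $\lblAbsR{\nsub}{\yhat}$ is in general a \emph{biased} estimator — Jensen's inequality gives $\expect{\big\lvert \elblrsk{\usub}{\yhat} - \prd{}\elblrsk{\psub}{\yhat} \big\rvert} \geq \big\lvert \lblrisk{\usub}{\yhat} - \prd{}\lblrisk{\psub}{\yhat} \big\rvert$, so the estimator overshoots in finite samples. The argument must therefore show this bias is asymptotically negligible, and the only thing that makes this work is the sign information: the population limit $(1-\prd{})\lblrisk{\nsub}{\yhat}$ is guaranteed non-negative, so for large samples $\elblrsk{\usub}{\yhat} - \prd{}\elblrsk{\psub}{\yhat}$ is positive with high probability and $\abs{\cdot}$ acts as the identity. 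I would treat the boundary case $\lblrisk{\nsub}{\yhat} = 0$ separately, noting that convergence in probability still follows directly from the continuous mapping step (the map $\abs{\cdot}$ is continuous at $0$) even though there the positive bias does not vanish faster than the fluctuations. This localizes the role of the boundedness hypothesis to guaranteeing the underlying convergence of the empirical risks, after which continuity does the rest.
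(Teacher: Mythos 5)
Your proposal is correct and follows essentially the same route as the paper's proof: both first establish consistency of the uncorrected difference ${\elblrsk{\usub}{\yhat} - \prd{}\elblrsk{\psub}{\yhat}}$ from boundedness of the loss (the paper via Chebyshev's inequality, you via the LLN/Hoeffding), and both then argue the absolute value is asymptotically harmless because the population limit ${(1-\prd{})\lblrisk{\nsub}{\yhat}}$ is non-negative. Your appeal to the continuous mapping theorem is simply a packaged form of the paper's explicit case analysis, which shows that applying $\abs{\cdot}$ never increases the estimation error when the target is non-negative, including the boundary case ${\lblrisk{\nsub}{\yhat}=0}$ that you also treat.
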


  We integrate absolute value correction into our \textit{\absPU{}~risk estimator},
\begin{equation}\label{eq:AbsPU}
  \ersk{\text{\absPU}} \defeq \prd{}\elblrsk{\psub}{+} + \big\lvert{\elblrsk{\usub}{-} - \prd{} \elblrsk{\psub}{-}}\big\rvert \text{,}  %
\end{equation}%
\noindent
which by Theorem~\ref{thm:AbsCorrection} is consistent like \nnPU{}\@.  When \eqsmall{${\elblrsk{\usub}{-} - \prd{} \elblrsk{\psub}{-} < 0}$}, \absPU{}'s update gradient,
  \eqsmall{${%
      \grad( \prd{}\elblrsk{\psub}{+} - \elblrsk{\usub}{-} + \prd{} \elblrsk{\psub}{-} ) \text{,}
  }$} %
  includes~\eqsmall{$\elblrsk{\psub}{+}$}.  Hence, \absPU{} spends comparatively more time optimizing the positive-labeled risk than \nnPU{}. Also, by penalizing implausible risk, \absPU{} estimates validation performance (i.e.,~risk) differently than \nnPU{}.

  Empirically we observed that \absPU{} yields models of similar or slightly better accuracy than \nnPU{} albeit with a simpler, more efficient optimization. The following builds on \absPU{} with a full comparison to \nnPU{} in supplemental Section~\ref{sec:App:AddRes:AbspuVsNnpu}.
\section{Arbitrary-Positive, Unlabeled Learning}\label{sec:APULearning}

\emph{Arbitrary-positive unlabeled} (\acronym)~learning~--- the focus of this work --- is one of three problem settings proposed by \citet{Sakai:2019}.  We generalize their original definition below.

Consider two joint distributions: train~$\pjointbase{\tr}$ and test~$\pjointbase{\te}$. Notation $\pbaseX{\trsub\gsub}$ where ${\gsub \in \set{\psub,\nsub,\usub}}$ refers to the training positive class-conditional, negative class-conditional, and marginal distributions respectively.  $\pbaseX{\tesub\gsub}$~denotes the corresponding test distributions.

No assumption is made about the label's conditional probability, i.e.,~$\ppostGeneral{\tr}$ and~$\ppostGeneral{\te}$, nor about positive class-conditionals $\plike{\tr}$ and $\plike{\te}$. We only assume a fixed negative class-conditional%
\begin{equation}\label{eq:PreviousWork:NegAssume}
  \nlike{} = \nlike{\tr} = \nlike{\te}\text{.}
\end{equation}%
\noindent

Both the train and test positive\=/class priors,~$\prd{\tr}$ and~$\prd{\te}$ respectively, are treated as known throughout this work. In practice, they may be known \textit{a priori} through domain-specific knowledge. Techniques also exist to estimate them from data~\citep{Bekker:2018:Prior,Ramaswamy:2016,duPlessis:2017,Zeiberg:2020}. Theorem~\ref{thm:App:TwoStep:PriorSample} in the supplemental materials provides an algorithm to estimate~$\prd{\te}$ by training an additional classifier.

As shown in Figure~\ref{fig:TwoStep:aPUExample}, the available datasets are: labeled (positive) set \eqsmall{${\pset{} \iidsim \plike{\tr}}$} as well as unlabeled set\underline{s} \eqsmall{${\uset{\tr} \defeq \set{\X_i}_{i=1}^{\nTrU} \iidsim \marg{\tr}}$} and \eqsmall{${\uset{\te} \defeq \set{\X_i}_{i=1}^{\nTeU} \iidsim \marg{\te}}$} with their empirical risks defined as before.  An optimal classifier minimizes the \emph{test} risk/expected loss: \eqsmall{${\expectS{(\xrv,\yrv) \sim \pjointbase{\te}}{\ldecRV{\yrv}}}$}.

\subsection{Relating \acronym~Learning and Covariate Shift Adaptation Methods}\label{sec:PreviousWork:PUc}

Covariate shift~\citep{Shimodaira:2000} is a common technique to address differences between~$\pjointbase{\tr}$ and~$\pjointbase{\te}$. Unlike \acronym~learning, covariate shift restrictively assumes a \emph{consistent input-output relation}, i.e., ${\ppostGeneral{\tr} = \ppostGeneral{\te}}$. Define the \emph{importance function} as ${\Wx \defeq \frac{\marg{\te}}{\marg{\tr}}}$. When $\ppostGeneral{}$~is fixed, it is easy to show that {${\Wx \pjointbase{\tr} = \pjointbase{\te}}$}.

\citet{Sakai:2019} exploit this relationship in their \PUc~risk estimator.  $\Wx$~is approximated via direct density\=/ratio estimation~\citep{Sugiyama:2012} -- specifically the RuLSIF~algorithm \citep{Yamada:2013} over $\uset{\tr}$ and~$\uset{\te}$.  Their \PUc{}~risk %
adds importance weighting to~uPU\@, with the labeled risks still estimated from~$\pset{}$ and~$\uset{\tr}$. \citeauthor{Sakai:2019}'s formulation specifies linear-in-parameter models to enforce convexity. They improve tractability via a simplified version of \citet{duPlessis:2015}'s surrogate squared loss for~$\loss$.

Selection bias \bpu{}~methods~\citep{Bekker:2019,Kato:2019} need the positive\=/labeled data to meet specific conditions that arbitrary-positive data will not satisfy making a comparison to those methods infeasible. \PUc{}~serves as the primary baseline here since as a covariate shift \bpu~method, it places no requirements on the positive data beyond that the training distribution's support be a superset of the target positive class.

\subsection{Comparing Variations of the \acronym~Learning Problem}\label{sec:aPU:SimilarProblems}

\citet{Sakai:2019}~show that PU~learning with a fixed positive class and arbitrary negative shift is much simpler than \acronym~learning. In fact, provided a positive-labeled set and two unlabeled sets as above, they show that arbitrary negative shift is trivially equivalent to ordinary PU~learning over~$\pset{}$ and~$\uset{\te}$ (since $\pset{}$ being drawn from~$\plike{\te}$ renders $\uset{\tr}$ unnecessary). When both the positive and negative classes shift arbitrarily, learning is impossible without additional data and/or assumptions.  \acronym{}~learning's complexity sits between these two extremes.
\begin{figure*}[t]
  \centering
  \newcommand{\wuupad}[1]{\hspace{0.27cm}$\xrightarrow{\shortstack{#1}}$\hspace{0.27cm}}
  \begin{subfigure}[t]{.225\textwidth}
    \centering
\begin{tikzpicture}
    \pgfplotstableread[col sep=comma] {plots/data/wuu_raw.csv}\thedata
    \pgfplotstableread[col sep=comma] {plots/data/raw-u-tr_border.csv}\trborderdata
    \pgfplotstableread[col sep=comma] {plots/data/raw-p_border.csv}\pborderdata
    \pgfplotstableread[col sep=comma] {plots/data/raw-u-te_border.csv}\teborderdata
    \begin{axis}
        [
          ymin=-5,
          ymax=8,
          xmin=-5,
          xmax=8.5,
          point meta=explicit,
          width=\TwoStepDim,
          height=\TwoStepDim,
          axis line style={draw=none},
          ticks=none,
          clip mode=individual,                %
          scatter/classes={
            0={mark=+,color=ForestGreen,mark size=2.0pt,line width=0.8pt},
            1={mark=*,color=Bittersweet,mark size=1.0pt},
            2={mark=*,color=Blue,draw=black,mark size=1.0pt}
          },
        ]
       \addplot[
         opacity=0.75,
         smooth,
         dashed,
         draw=black,
         color=black,
         line width=1pt,
       ] table[x index=0,y index=1] {\trborderdata};
       \node[] at (axis cs: -2.5,6.5) {$\uset{\tr}$};
       \addplot[
         opacity=0.75,
         smooth,
         draw=OliveGreen,
         line width=1pt,
       ] table[x index=0,y index=1] {\pborderdata};
       \addplot[black,line width=1pt,no markers] coordinates { (-2.5,0) (-4.5,-1) };
       \node[black,below] at (axis cs:-4.5,-1){$\pset{}$};
       \addplot[
         opacity=0.75,
         smooth,
         dotted,
         color=darkgray,
         line width=1pt,
       ] table[x index=0,y index=1] {\teborderdata};
       \node[] at (axis cs: 6.5,6.5) {$\uset{\te}$};
        \addplot[
          scatter,
          only marks,
          opacity=0.5,
          line width=0.2,
          mark size=0.7pt,
       ]
       table[x index=0,y index=1, meta index=2] {\thedata};
       \label[0]{leg:TwoStep:Raw:Pos}
       \label[1]{leg:TwoStep:Raw:UTr}
       \label[2]{leg:TwoStep:Raw:UTe}
    \end{axis}
\end{tikzpicture}
     \subcaption{Example \acronym~dataset}\label{fig:TwoStep:aPUExample}
  \end{subfigure}
  \wuupad{\textbf{\footnotesize Step~\#1}\\{\footnotesize \text{PU}~$(\cdc)$}}%
  \begin{subfigure}[t]{.20\textwidth}
    \centering
\begin{tikzpicture}
    \pgfplotstableread[col sep=comma] {plots/data/wuu_step1_u-tr.csv}\utrdata
    \pgfplotstableread[col sep=comma] {plots/data/wuu_step1_u-te.csv}\utedata
    \pgfplotstableread[col sep=comma] {plots/data/raw-p_border.csv}\pborderdata
    \pgfplotstableread[col sep=comma] {plots/data/raw-u-tr_border.csv}\trborderdata
    \pgfplotstableread[col sep=comma] {plots/data/raw-u-te_border.csv}\teborderdata
    \begin{axis}
        [
          ymin=-5,
          ymax=8,
          xmin=-5,
          xmax=8.5,
          point meta=explicit,
          width=\TwoStepDim,
          height=\TwoStepDim,
          axis line style={draw=none},
          ticks=none,
          clip mode=individual,                %
        ]
       \addplot[
         opacity=0.75,
         smooth,
         dashed,
         draw=black,
         color=black,
         line width=1pt,
       ] table[x index=0,y index=1] {\trborderdata};
     \node[] at (axis cs: -2.5,6.75) {\colortext{blue}{$\nsetPS{}$}};
       \addplot[
         opacity=0.75,
         smooth,
         draw=OliveGreen,
         line width=1pt,
       ] table[x index=0,y index=1] {\pborderdata};
       \addplot[
         opacity=0.75,
         smooth,
         dotted,
         color=darkgray,
         line width=1pt,
       ] table[x index=0,y index=1] {\teborderdata};
        \addplot[
            scatter,
            only marks,
            scatter src=explicit,
            visualization depends on=\thisrow{w2}\as\wtwo,
            scatter/@pre marker code/.append style={
              /tikz/mark size=\wtwo
            },
            mark=-,
            line width=1.1pt,
            color=blue,
            ] table[x=x,y=y,meta=meta]{plots/data/wuu_step1_u-tr.csv};\label{leg:TwoStep:Step1:UTr}
        \addplot[
          scatter,
          only marks,
          opacity=0.5,
          line width=0.2,
          mark size=0.7pt,
          scatter/classes={
             2={mark=*,color=Blue,draw=black,mark size=1.0pt}
          },
       ] table[x index=0,y index=1, meta index=2] {\utedata};
       \node[] at (axis cs: 6.5,6.5) {$\uset{\te}$};
    \end{axis}
\end{tikzpicture}
     \subcaption{Weighting~$\uset{\tr}$ \linebreak using~$\cdcx$ yields~\eqsmall{$\nsetPS{}$}}\label{fig:TwoStep:Step1}
  \end{subfigure}
  \wuupad{\textbf{\footnotesize Step~\#2}\\{\small \text{\wuu/\bpnu}~$(\dec)$}}
  \begin{subfigure}[t]{.20\textwidth}
    \centering
\begin{tikzpicture}
    \pgfplotstableread[col sep=comma] {plots/data/wuu_step2_u-te.csv}\utedata
    \pgfplotstableread[col sep=comma] {plots/data/raw-p_border.csv}\pborderdata
    \pgfplotstableread[col sep=comma] {plots/data/raw-u-tr_border.csv}\trborderdata
    \pgfplotstableread[col sep=comma] {plots/data/raw-u-te_border.csv}\teborderdata
    \begin{axis}
        [
          ymin=-5,
          ymax=8,
          xmin=-5,
          xmax=8.5,
          point meta=explicit,
          width=\TwoStepDim,
          height=\TwoStepDim,
          axis line style={draw=none},
          ticks=none,
          clip mode=individual,                %
          scatter/classes={
             2={mark=-,color=blue,draw=blue,mark size=1.4pt,line width=1pt},
             3={mark=+,color=red,mark size=1.4pt,line width=1pt}
          },
        ]
       \addplot[
         opacity=0.75,
         smooth,
         dashed,
         draw=black,
         color=black,
         line width=1pt,
       ] table[x index=0,y index=1] {\trborderdata};
       \addplot[
         opacity=0.75,
         smooth,
         draw=OliveGreen,
         line width=1pt,
       ] table[x index=0,y index=1] {\pborderdata};
       \addplot[
         opacity=0.75,
         smooth,
         dotted,
         color=darkgray,
         line width=1pt,
       ] table[x index=0,y index=1] {\teborderdata};
       \addplot[color=Blue,line width=1.5pt,no markers,smooth] coordinates { (-1.6,-0.5) (2.3,-0.5) (3.0,0.6) (4.5,1.1) (4.8,-1.) (5.1,-4.5) };
       \node[below right] at (axis cs: 5.1,-3.1) {$\decx$};
        \addplot[
          scatter,
          only marks,
          opacity=0.65,
          line width=0.2,
          mark size=1.0pt,
       ] table[x index=0,y index=1, meta index=2] {\utedata};
       \label[2]{leg:TwoStep:Step2:Neg}
       \label[3]{leg:TwoStep:Step2:Pos}
       \node[] at (axis cs: 6.5,6.5) {$\uset{\te}$};
    \end{axis}
\end{tikzpicture}
     \subcaption{Final classifier~$\dec$}\label{fig:TwoStep:Step2}
  \end{subfigure}
  \caption{Two-step \acronym\ learning. Fig.~\ref{fig:TwoStep:aPUExample}~shows a toy \acronym\ dataset with
  (\ref{leg:TwoStep:Raw:Pos})~representing a labeled positive example, (\ref{leg:TwoStep:Raw:UTr})~an unlabeled train sample, and (\ref{leg:TwoStep:Raw:UTe})~an unlabeled test sample.  Borders surround each set for clarity. After learning probabilistic classifier~$\cdc$ in Step~\#1, Fig.~\ref{fig:TwoStep:Step1}~visualizes $\cdc$'s~predicted negative-posterior probability using marker~(\ref{leg:TwoStep:Step1:UTr}) size. Fig.~\ref{fig:TwoStep:Step2}~shows the final decision boundary with~(\ref{leg:TwoStep:Step2:Neg}) and~(\ref{leg:TwoStep:Step2:Pos}) representing $\usetTE$~examples classified negative and positive respectively.}\label{fig:TwoStep:Flow}
\end{figure*}

\section{\acronym~Learning via Unlabeled-Unlabeled Learning}\label{sec:WUU}

  To build an intuition for solving the \acronym~learning problem, consider the ideal case where a perfect classifier correctly labels~$\uset{\tr}$. Let~$\nset{\tr}$ be $\uset{\tr}$'s~negative examples. $\nset{\tr}$~is SCAR w.r.t.~${\nlike{\tr}}$ and by Eq.~\eqref{eq:PreviousWork:NegAssume}'s assumption also~$\nlike{\te}$. Multiple options exist to then train the second classifier,~$\dec$, e.g.,~NU~learning with $\nset{\tr}$ and~$\uset{\te}$.

  A perfect classifier is unrealistic.  Is there an alternative? Our key insight is that by weighting~$\uset{\tr}$ (similar to covariate shift's importance function) it can be transformed into a representative negative set. From there, we consider two methods to fit the second classifier~$\dec$: one a variant of NU~learning we call weighted\=/unlabeled, unlabeled (\wuu)~learning and the other a semi-supervised method we call arbitrary\=/positive, negative, unlabeled (\bpnu)~learning.  We refer to the complete algorithms as \punu\ and \pupnu, respectively.

Figure~\ref{fig:TwoStep:Flow} visualizes our two-step approach, with a formal presentation in Algorithm~\ref{alg:TwoStep}. Below is a detailed description and theoretical analysis.

\begin{algorithm}[t]
  \caption{Two-step unlabeled-unlabeled \acronym{}~learning}\label{alg:TwoStep}
\textbf{Input}: Labeled\=/positive set~$\pset{}$ and unlabeled sets $\uset{\tr}, \uset{\te}$

\textbf{Output}: $\dec$'s model parameters~$\params$

\begin{algorithmic}[1]
  \STATE Train probabilistic classifier~$\cdc$ using $\pset{}$ and $\uset{\tr}$
  \STATE Use $\cdc$ to transform~$\uset{\tr}$ into surrogate negative set~$\nsetPS{}$
  \STATE Train final classifier,~$\dec$, using ERM with~$\rskWUU$ or~$\rskBPNU$
\end{algorithmic}%
 \end{algorithm}

\subsection*{Step~\#1: Create\texorpdfstring{}{ a} Surrogate Negative Set\texorpdfstring{~$\nsetPS{}$ from~$\uset{\tr}$}{}}\label{sec:WUU:Step1}

  This step's goal is to learn the training distribution's negative class-posterior,~$\ppost{\tr}{\yn}$. We achieve this by training PU~probabilistic classifier $\func{\cdc}{\domainX}{[0,1]}$ using~$\pset{}$ and~$\uset{\tr}$. In principle, any probabilistic PU~method can be used; we focused on ERM-based PU~methods so the logistic loss served as surrogate,~$\loss$. Sigmoid activation is applied to the model's output to bound its range to~${(0,1)}$.

  \begin{theorem}\label{thm:TwoStep:Unbiased}
    Let $\func{\dec}{\domainX}{\real}$ be an arbitrary decision function and $\func{\loss}{\real}{\realnn}$ be a loss function bounded w.r.t.~$\dec$. Let ${\yhat \in \domainY}$ be a predicted label.  Define \eqsmall{${\uset{\tr} \defeq \set{\X_i}_{i=1}^{\nTrU} \iidsim \marg{\tr}}$}, and restrict ${\prd{\tr} \in [0,1)}$. Define \eqsmall{${\trsk{\nuSub}{\yhat} \defeq \frac{1}{\nTrU} \sum_{\X_i \in \uset{\tr}} \frac{\cdcxI \lbase{\decxI}{\yhat}}{1-\prd{\tr}}}$}. Let $\func{\cdc}{\domainX}{\sbrack{0,1}}$ be in hypothesis set~\eqsmall{$\cdcCls$}. When ${\cdcx = \ppost{\tr}{\yn}}$, \eqsmall{$\trsk{\nuSub}{\yhat}$}~is an unbiased estimator of~\eqsmall{$\lblrisk{\nsub}{\yhat}$}. When the concept class of functions that defines~$\ppost{\tr}{\yn}$ is probably approximately correct (PAC)~learnable by some PAC\=/learning algorithm~$\learner$ that selects ${\cdc \in \cdcCls}$, then \eqsmall{$\trsk{\nuSub}{\yhat}$} is a consistent estimator of~\eqsmall{$\lblrisk{\nsub}{\yhat}$}.
  \end{theorem}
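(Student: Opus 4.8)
The plan is to prove the two assertions separately: unbiasedness by a direct change-of-measure computation, and consistency by decomposing the estimation error into a sampling term and an approximation term and controlling each. Throughout I write $\lossBound$ for the loss bound and use that $\prd{\tr} \in [0,1)$ guarantees $1/(1-\prd{\tr})$ is finite.

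For unbiasedness I would take the expectation of $\trsk{\nuSub}{\yhat}$ over $\uset{\tr} \iidsim \marg{\tr}$. By linearity the empirical average collapses to $\expectS{\xrv \sim \marg{\tr}}{\cdcRV\,\ldecRV{\yhat}/(1-\prd{\tr})}$. Substituting the hypothesis $\cdcRV = \ppostRV{\tr}{\yn}$ and applying Bayes' rule with the marginal decomposition $\marg{\tr} = \prd{\tr}\plike{\tr} + (1-\prd{\tr})\nlike{\tr}$ together with the fixed-negative assumption $\nlike{\tr} = \nlike{}$ from Eq.~\eqref{eq:PreviousWork:NegAssume} gives $\ppost{\tr}{\yn} = (1-\prd{\tr})\nlike{}/\marg{\tr}$. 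The factors $\marg{\tr}$ and $(1-\prd{\tr})$ then cancel inside the expectation, leaving $\int \nlike{}\,\ldec{\yhat}\,dx = \lblrisk{\nsub}{\yhat}$, exactly the quantity being estimated.

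For consistency, the learned $\cdc$ is produced by $\learner$ from PU data independent of $\uset{\tr}$, so I would condition on $\cdc$ and split $\trsk{\nuSub}{\yhat} - \lblrisk{\nsub}{\yhat}$ into a sampling error $\trsk{\nuSub}{\yhat} - \expectS{\uset{\tr}}{\trsk{\nuSub}{\yhat} \mid \cdc}$ and an approximation error $\expectS{\uset{\tr}}{\trsk{\nuSub}{\yhat} \mid \cdc} - \lblrisk{\nsub}{\yhat}$. Each summand of the estimator is bounded by $\lossBound/(1-\prd{\tr})$ because $\cdcxI \in [0,1]$ and the loss is bounded, so the law of large numbers (or a Hoeffding bound) drives the sampling error to $0$ in probability as $\nTrU \to \infty$. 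Rerunning the unbiasedness calculation with a generic $\cdc$ rewrites the approximation error as $\frac{1}{1-\prd{\tr}}\expectS{\xrv \sim \marg{\tr}}{\ldecRV{\yhat}(\cdcRV - \ppostRV{\tr}{\yn})}$, whose magnitude is at most $\frac{\lossBound}{1-\prd{\tr}}\expectS{\xrv \sim \marg{\tr}}{\abs{\cdcRV - \ppostRV{\tr}{\yn}}}$.

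The main obstacle is this approximation term, i.e.\ converting the PAC-learnability hypothesis into convergence of the learned posterior to the true one. Because the concept class defining $\ppost{\tr}{\yn}$ is PAC-learnable by $\learner$ selecting $\cdc \in \cdcCls$, the PAC guarantee lets me make the deviation of $\cdc$ from $\ppost{\tr}{\yn}$ smaller than any $\varepsilon$ with high probability as the PU sample grows; the step to pin down carefully is that this guarantee yields $L^1$ control under $\marg{\tr}$, i.e.\ $\expectS{\xrv \sim \marg{\tr}}{\abs{\cdcRV - \ppostRV{\tr}{\yn}}} \to 0$ in probability, after which the bound above forces the approximation error to vanish. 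With both terms tending to $0$, $\trsk{\nuSub}{\yhat}$ converges to $\lblrisk{\nsub}{\yhat}$, establishing consistency.
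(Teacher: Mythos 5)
Your proposal is correct and takes essentially the same route as the paper's proof: unbiasedness via the identical Bayes'-rule change of measure, and consistency via the same bias--variance decomposition, where bounded loss plus Chebyshev/LLN controls the sampling term and PAC learnability controls the approximation term. The $L^1$ control under $\marg{\tr}$ that you flag as the step to pin down is exactly how the paper resolves it — its stated PAC-learnability condition is formulated directly as high-probability convergence of $\expectS{\xrv \sim \marg{\tr}}{\abs{\cdcRV - \ppostRV{\tr}{\yn}}}$ to zero in the realizable setting.
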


  From Theorem~\ref{thm:TwoStep:Unbiased}, we see that \emph{soft} weighting each unlabeled instance in~$\uset{\tr}$ by~$\cdc$ yields a \emph{surrogate negative set}~\eqsmall{$\nsetPS{}$} that can be used to estimate the train/test negative labeled risk. %
We form~\eqsmall{$\nsetPS{}$} transductively, but inductive learning is an option. Since $\uset{\tr}$ contains positive examples, $\cdc$~may overfit and memorize random positive example variation. This is usually detectable via an implausible validation loss given $\prd{\tr}$, $\np$, and~$\nTrU$. Care should be shown to tune $\cdc$'s capacity and regularization.

Supplemental Section~\ref{sec:App:AddRes:AltStep1} proposes and empirically evaluates two additional methods to construct~\eqsmall{$\nsetPS{}$}. While these other methods are not statistically consistent, they may outperform soft weighting.

  \neuripsParagraph{What if $\psetHeading$ is not SCAR?} Our \acronym{}~learning setting, detailed in Section~\ref{sec:APULearning}, specifies that~$\pset{}$ is representative of $\uset{\tr}$'s~positive examples.  In scenarios where $\pset{}$~is biased w.r.t.~$\uset{\tr}$, any bPU~method (e.g.,~\citep{Bekker:2019,Kato:2019}) can be used in step~\#1 to (hard) label~$\uset{\tr}$ thereby constructing~\eqsmall{$\nsetPS{}$}.

  \subsection*{Step~\#2: Train the Test Distribution Classifier\texorpdfstring{~$\dec$}{}}\label{sec:TwoStep:Step2}

  Negative-unlabeled~(NU) learning is functionally the same as PU~learning.  \citet{Sakai:2017} formalize an unbiased~NU risk estimator,
\eqsmall{${\ersk{\text{NU}} \defeq \big\lvert  \elblrsk{\usub}{+} - (1 - \prd{}) \elblrsk{\nsub}{+} \big\rvert +  (1 - \prd{}) \elblrsk{\nsub}{-}}$}
(defined here with our absolute-value correction). Our \emph{weighted-unlabeled, unlabeled}\footnote{``Unlabeled\=/unlabeled learning'' denotes the two unlabeled sets and is different from UU~learning in~\citep{Menon:2015,Lu:2019}.}~(\wuu) estimator,
  \begin{equation}\label{eq:TwoStep:wUU}
      \rskWUU \defeq \nnwrap{\elblrsk{\tesub\usub}{+} - (1 - \prd{\te}) \trsk{\nuSub}{+}} +  (1 - \prd{\te}) \trsk{\nuSub}{-} \text{,}
  \end{equation}%
  \noindent%
  modifies \citeauthor{Sakai:2017}'s definition to use~\eqsmall{$\nsetPS{}$} and~$\uset{\te}$. Observe that \eqsmall{$\rskWUU$}~uses only data that was originally unlabeled. When \eqsmall{$\trsk{\nuSub}{\yhat}$}~is consistent, \wuu~is also consistent just like nnPU/\absPU\@.

  \neuripsParagraph{Risk Estimation with Positive Data Reuse} When~$\plike{\tr}$'s and~$\plike{\te}$'s supports intersect, $\pset{}$~may contain useful information about the target distribution given limited data. In such settings, a semi-supervised approach leveraging~$\pset{}$, surrogate~\eqsmall{$\nsetPS{}$}, and~$\uset{\te}$ may perform better than~\wuu{}.

  \citet{Sakai:2017} propose the \pnu~risk estimator, \eqsmall{${\ersk{\text{\pnu}} \defeq (1 - \pnutrade) \ersk{\text{PN}} + \pnutrade \ersk{\text{NU}}}$}, where hyperparameter~${\pnutrade \in (0,1)}$ weights the~PN and~NU estimators. Our \emph{arbitrary-positive, negative, unlabeled}~(\bpnu) risk estimator in Eq.~\eqref{eq:TwoStep:PNU:Empirical} modifies \pnu\ to use~\eqsmall{$\nsetPS{}$} and our absolute-value correction.
  \begin{equation}\label{eq:TwoStep:PNU:Empirical}
    \rskBPNU = (1 - \pnutrade) \prd{\te} \elblrsk{\psub}{+} + (1 - \prd{\te}) \trsk{\nuSub}{-} + \pnutrade \nnwrap{\elblrsk{\tesub{\usub}}{+} - (1 - \prd{\te}) \trsk{\nuSub{}}{+}}
 \end{equation}%
 \noindent%
 If ${\pnutrade = 0}$, \bpnu{} ignores the test distribution (i.e.,~$\uset{\te}$) entirely.  If ${\pnutrade = 1}$, \bpnu~is simply~\wuu\@. When a large positive shift is expected (e.g.,~by domain-specific knowledge), $\pset{}$~is of limited value so set $\pnutrade$~closer to~1. For small expected positive shifts, set $\pnutrade$~closer to~0.  A midpoint value of ${\pnutrade = 0.5}$ empirically performed well when no knowledge about the positive shift was assumed.

\neuripsParagraph{ERM Framework} Both \eqsmall{$\rskWUU$} and \eqsmall{$\rskBPNU$} integrate into a standard ERM framework since they use our absolute-value correction.  For completeness, supplemental materials Section~\ref{sec:App:ERM:TwoStep} details their custom ERM~algorithm if \citet{Kiryo:2017}'s~non\=/negativity correction is used instead.

\neuripsParagraph{Heterogeneous Classifiers} Two\=/step learners enable different learner architectures in each step (e.g.,~random forest for step~\#1 and a neural network for step~\#2). Our experiments leverage this flexibility where $\cdc$'s~neural network may have fewer hidden layers or different hyperparameters than~$\dec$ in step~\#2.
\section{Positive-Unlabeled Recursive Risk Estimation}\label{sec:PURPL}

Two-step methods --- both ours and \PUc\ --- solve a challenging problem by decomposing it into sequential (easier) subproblems. Serial decision making's disadvantage is that earlier errors propagate and can be amplified when subsequent decisions are made on top of those errors.

Can our \acronym\ problem setting be learned in a single \textit{joint} method?  \citeauthor{Sakai:2019} leave it as an open question.  We show in this section the answer is yes.
To understand why this is possible, it helps to simplify our perspective of unbiased PU~and NU~learning. When estimating a labeled risk,~\eqsmall{$\elblrsk{\gsub}{\yhat}$} (where ${\gsub \in \set{\psub,\nsub}}$), the ideal case is to use SCAR~data from class-conditional distribution~$\likebase{}{\gsub}$.  When such labeled data is unavailable, the risk \textit{decomposes} via the simple linear transformation,
\begin{equation}\label{eq:PURPL:RiskDecomposition}
  (1 - \NewPrior) \elblrsk{\CallDist}{\yhat} = \elblrsk{\UnlabeledDist}{\yhat} - \NewPrior \elblrsk{\CallComplement}{\yhat}
\end{equation}
\noindent
where ${A=\nsub}$ and ${B=\psub}$ for PU~learning or vice versa for NU~learning. $\NewPrior$~is the positive (negative) prior for PU (NU)~learning.

In standard PU~and NU~learning, either~\eqsmall{$\elblrsk{A}{\yhat}$} or \eqsmall{$\elblrsk{B}{\yhat}$} can always be estimated from labeled data.  If that were not true, can this decomposition be applied recursively (i.e.,~nested)?  The answer is again yes.  Below we apply recursive risk decomposition to our \acronym~learning task.

\subsection*{Applying Recursive Risk to \acronym~learning}

Our \underline{p}ositive-\underline{u}nlabeled \underline{r}ecursive \underline{r}isk~(\nnpurpl) estimator quantifies our \acronym~setting's empirical risk and integrates into a standard ERM~framework.  \nnpurpl's~top-level definition is simply the test risk:
\begin{equation}\label{eq:PURR:PN}
  \rskPURPL = \prd{\te} \elblrsk{\tesub\psub}{+} + (1-\prd{\te}) \elblrsk{\tesub\nsub}{-} \text{.}
\end{equation}
\noindent
Since only unlabeled data is drawn from the test distribution, both terms in Eq.~\eqref{eq:PURR:PN} require risk decomposition. First, for \eqsmall{$\elblrsk{\tesub\nsub}{-}$}, we consider
its more general form \eqsmall{$\elblrsk{\tesub\nsub}{\yhat}$} below since \eqsmall{$\elblrsk{\tesub\nsub}{+}$} will be needed as well.  Using Eq.~\eqref{eq:PreviousWork:NegAssume}'s~assumption, \eqsmall{$\elblrsk{\tesub\nsub}{\yhat}$} can be estimated directly from the training distribution. Combining Eq.~\eqref{eq:PreviousWork:NegRisk} with absolute-value correction, we see that
\begin{equation}\label{eq:PURR:NegLabel}
  \elblrsk{\tesub\nsub}{\yhat} =\elblrsk{\trsub\nsub}{\yhat} = \frac{1}{1 - \prd{\tr}} \nnwrap{\elblrsk{\trsub\usub}{\yhat} - \prd{\tr}\elblrsk{\trsub\psub}{\yhat}} \text{.}
\end{equation}

Next, \eqsmall{$\elblrsk{\tesub\psub}{+}$}, as a positive risk, undergoes NU~decomposition so (with absolute\=/value correction):
\begin{equation}\label{eq:PURR:PosLabel:Flat}
  \prd{\te}\elblrsk{\tesub\psub}{+} = \nnwrap{\elblrsk{\tesub\usub}{+} - (1 - \prd{\te})\elblrsk{\tesub\nsub}{+}} \text{.}
\end{equation}
\noindent
Eq.~\eqref{eq:PURR:NegLabel} with ${\yhat=\yp}$ substitutes for~\eqsmall{$\elblrsk{\tesub\nsub}{+}$} in Eq.~\eqref{eq:PURR:PosLabel:Flat} yielding  \eqsmall{$\rskPURPL$}'s complete definition:
{\small
  \begin{equation}\label{eq:PURPL:Complete}
    \rskPURPL = \Bigg\lvert\underbrace{\elblrsk{\tesub\usub}{+} - (1 - \prd{\te}) \bigg\lvert \underbrace{\frac{\elblrsk{\trsub\usub}{+} - \prd{\tr}\elblrsk{\trsub\psub}{+}}{1 - \prd{\tr}}}_{\elblrsk{\tesub\nsub}{+}} \bigg\rvert}_{\prd{\te}\elblrsk{\tesub\psub}{+}}\Bigg\lvert
               + (1 - \prd{\te}) \bigg\lvert \underbrace{\frac{\elblrsk{\trsub\usub}{-} - \prd{\tr}\elblrsk{\trsub\psub}{-}}{1 - \prd{\tr}}}_{\elblrsk{\tesub\nsub}{-}} \bigg\rvert \text{.}
  \end{equation}
}

\begin{theorem}\label{thm:PURPL:BiasedConsistent}
  Fix decision function~${\dec \in \decCls}$.   If $\loss$~is bounded over ${\decx}$'s~image and \eqsmall{${\elblrsk{\tesub\nsub}{\yhat},\elblrsk{\tesub\psub}{+} > 0}$} for ${\yhat \in \domainY}$, then \eqsmall{$\rskPURPL$} is a consistent estimator.  \eqsmall{$\rskPURPL$}~is a biased estimator unless for all \eqsmall{${\uset{\tr} \iidsim \marg{\tr}}$}, \eqsmall{${\uset{\te} \iidsim \marg{\te}}$}, and \eqsmall{${\pset{} \iidsim \plike{\tr}}$} it holds that \eqsmall{${\Pr[\elblrsk{\trsub\usub}{\yhat} - (1 - \prd{\te})\elblrsk{\trsub\psub}{\yhat} < 0] = 0}$} and \eqsmall{${\Pr[\elblrsk{\tesub\usub}{+} - (1 - \prd{\te})\elblrsk{\tesub\nsub}{+} < 0] =0}$}.
\end{theorem}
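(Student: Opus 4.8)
The plan is to establish the two assertions separately, in each case by relating $\rskPURPL$ of Eq.~\eqref{eq:PURPL:Complete} to the absolute-value-free estimator obtained by deleting every $\nnwrap{\cdot}$. Throughout I fix $\dec \in \decCls$ and abbreviate the argument of the inner absolute value by $V_{\yhat} \defeq \elblrsk{\trsub\usub}{\yhat} - \prd{\tr}\elblrsk{\trsub\psub}{\yhat}$ (so that $\elblrsk{\tesub\nsub}{\yhat} = \nnwrap{V_{\yhat}}/(1-\prd{\tr})$), and the argument of the outer absolute value by $W \defeq \elblrsk{\tesub\usub}{+} - (1-\prd{\te})\elblrsk{\tesub\nsub}{+}$. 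The theorem's two conditions are then exactly $\Pr[V_{\yhat}<0]=0$ for $\yhat \in \domainY$ and $\Pr[W<0]=0$.

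For consistency I would first observe that, because $\dec$ is fixed and $\loss$ is bounded over $\decx$'s image, each empirical labeled risk appearing in Eq.~\eqref{eq:PURPL:Complete} is an average of i.i.d.\ bounded terms, so the weak law of large numbers gives convergence in probability of each to its true labeled risk. Since $\rskPURPL$ is a fixed continuous function of these finitely many empirical risks and the known constants $\prd{\tr},\prd{\te}$ (built only from sums, products, and absolute values), the continuous mapping theorem transfers the convergence to $\rskPURPL$. It then remains to identify the limit: by the decomposition Eq.~\eqref{eq:PreviousWork:NegRisk} the population inner argument equals $(1-\prd{\tr})\lblrisk{\trsub\nsub}{\yhat}\ge 0$, which equals $(1-\prd{\tr})\lblrisk{\tesub\nsub}{\yhat}$ by the fixed-negative assumption Eq.~\eqref{eq:PreviousWork:NegAssume}, and the NU identity underlying Eq.~\eqref{eq:PURR:PosLabel:Flat} makes the population outer argument equal $\prd{\te}\lblrisk{\tesub\psub}{+}$. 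The positivity hypotheses $\lblrisk{\tesub\nsub}{+},\lblrisk{\tesub\psub}{+}>0$ guarantee both absolute values are eventually inactive at the limit, so $\rskPURPL$ converges in probability to $\prd{\te}\lblrisk{\tesub\psub}{+} + (1-\prd{\te})\lblrisk{\tesub\nsub}{-} = \risk{\te}$, the population form of Eq.~\eqref{eq:PURR:PN}.

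For the bias claim the main tool is the elementary identity $\expect{\nnwrap{X}} = \expect{X} + 2\expect{\MyMax{-X}}$, so $\expect{\nnwrap{X}} \ge \expect{X}$ with equality iff $X\ge 0$ almost surely. If both stated conditions hold, every absolute value in Eq.~\eqref{eq:PURPL:Complete} acts as the identity a.s., so $\rskPURPL$ coincides almost surely with the absolute-value-free estimator; the latter is unbiased for $\risk{\te}$ because each empirical labeled risk is unbiased for its true labeled risk and Eqs.~\eqref{eq:PreviousWork:NegRisk} and~\eqref{eq:PURR:PosLabel:Flat} are exact linear identities. This establishes unbiasedness whenever both conditions hold.

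The delicate part, which I expect to be the main obstacle, is the converse nesting effect. Applying the identity above to each absolute value and cancelling against $\risk{\te}$ yields a total bias of the form $2\expect{\MyMax{-W}} + \tfrac{1-\prd{\te}}{1-\prd{\tr}}\big(2\expect{\MyMax{-V_{-}}} - 2\expect{\MyMax{-V_{+}}}\big)$, where the negative sign on the $V_{+}$ term arises because the inner correction on $\elblrsk{\tesub\nsub}{+}$ enters $W$ with a minus sign. Thus the inner-positive correction and the outer correction push $\expect{\rskPURPL}$ in opposite directions and could in principle offset, so the argument must show that, absent the two stated conditions, this expression does not vanish: the conditions $\Pr[V_{\yhat}<0]=0$ and $\Pr[W<0]=0$ are precisely what send every positive-part expectation to zero, whereas otherwise each nonzero term contributes genuine (and, outside a non-generic exact-cancellation coincidence, uncompensated) bias. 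I would therefore structure this direction around the explicit bias formula above, isolating the $V_{-}$ and $W$ contributions as strictly positive under failure of the respective condition and arguing that cancellation against the $V_{+}$ term is non-generic, leaving the two conditions as the operative requirement for unbiasedness.
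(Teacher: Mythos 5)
Your consistency argument is sound, and it takes a genuinely different route from the paper's: you fix $\dec$, invoke the weak law of large numbers for each empirical labeled risk, and push the convergence through the (continuous) algebraic form of Eq.~\eqref{eq:PURPL:Complete} via the continuous mapping theorem, using the positivity hypotheses only to identify the limit. The paper instead argues by induction on the recursion depth, reusing Theorem~\ref{thm:AbsCorrection}'s consistency result for each decomposed term. Both are valid; yours is more self-contained. Your sufficiency argument for unbiasedness (both conditions hold $\Rightarrow$ every absolute value is a.s.\ the identity $\Rightarrow$ unbiased) via the identity ${\expect{\abs{X}} = \expect{X} + 2\expect{\MyMax{-X}}}$ is also correct.

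The genuine gap is in the necessity direction, which is the actual content of the ``biased unless'' claim. Your bias formula
\begin{equation*}
  \expect{\rskPURPL} - \risk{} \;=\; 2\expect{\MyMax{-W}} + \tfrac{1-\prd{\te}}{1-\prd{\tr}}\Bigl(2\expect{\MyMax{-V_-}} - 2\expect{\MyMax{-V_+}}\Bigr)
\end{equation*}
is exactly right, and you correctly observe that the $V_+$ term enters with a negative sign, so the three correction terms could in principle cancel. But you then only assert that such cancellation is ``non-generic,'' which does not prove the theorem: the statement asserts bias for \emph{every} instance in which the conditions fail, with no genericity caveat, so you must actually rule cancellation out (or else argue the theorem needs to be read/weakened accordingly). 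Worse, no pointwise domination argument can rescue you: on the event ${\set{V_+ < 0}}$ one has ${-W = \tfrac{1-\prd{\te}}{1-\prd{\tr}}\abs{V_+} - \elblrsk{\tesub\usub}{+}}$ with ${\elblrsk{\tesub\usub}{+} \geq 0}$, hence ${\MyMax{-W} \leq \tfrac{1-\prd{\te}}{1-\prd{\tr}}\MyMax{-V_+}}$ pointwise, so the outer correction can never outweigh the inner one on that event and the sign of the total bias is genuinely indeterminate there. For comparison, the paper disposes of this direction in one line by claiming ${\rskPURPL \geq \rskUPURPL}$ pointwise, so that equality of expectations forces the absolute values to be a.s.\ inactive; your own decomposition shows that this domination claim fails precisely when ${V_+ < 0}$ (the nested correction enters the outer term with a minus sign and shrinks it). So you have exposed a real subtlety that the paper's proof glosses over --- but identifying the obstruction is not the same as overcoming it, and as written your proposal does not establish the stated theorem.
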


\neuripsParagraph{Optimization} \nnpurpl{}~with absolute-value correction integrates into a standard ERM~framework. If non\=/negativity is used instead, \nnpurpl{}'s optimization scheme becomes significantly more complicated as it must consider four candidate gradients per update; see suppl.\ Section~\ref{sec:App:ERM:PURPL} for more details.
\section{Experimental Results}\label{sec:ExpRes}
We empirically studied the effectiveness of our methods -- \nnpurpl, \punu, and \pupnu{} -- using synthetic and real\=/world data.\footnote{Our implementation is publicly available at: \blue{\href{\sourceCode}{\sourceCode}}.} Limited space allows us to discuss only two experiment sets here.  Suppl.\ Section~\ref{sec:App:AddRes} details experiments on: synthetic data, 10~LIBSVM datasets~\citep{LIBSVM} under a totally different positive-bias condition, and a study of our methods' robustness to negative-class shift.

\subsection{Experimental Setup}

Supplemental Section~\ref{sec:App:ExperimentSetup} enumerates our complete experimental setup with a brief summary below.

\neuripsParagraph{Baselines} \PUc~\citep{Sakai:2019} with a linear-in-parameter model and Gaussian kernel basis is the primary baseline.\footnote{The \PUc~implementation was provided by~\citet{Sakai:2019} via personal correspondence.}
Ordinary nnPU is the performance floor. To ensure the strongest baseline, we separately trained nnPU with unlabeled set~$\uset{\te}$ as well as with the combined~${\uset{\tr} \cup \uset{\te}}$ (using the true, composite prior) and report each experiment's best performing configuration, denoted~\nnpuOpt. PN\=/test (trained on labeled~$\uset{\te}$) provides a reference for the performance ceiling. All methods saw identical training/test data splits and where applicable used the same initial weights.

\neuripsParagraph{Datasets} Section~\ref{sec:ExpRes:NonidenticalSupports} considers the MNIST~\citep{LeCun:1998}, CIFAR10~\citep{CIFAR10}, and 20~Newsgroups~\citep{20newsgroups} datasets with binary classes formed by partitioning each dataset's labels.  Section~\ref{sec:ExpRes:Spam} uses two different TREC~\citep{TREC} spam email datasets to demonstrate our methods' performance under real-world adversarial concept drift.  Further details on all datasets are in the supplemental materials.

\neuripsParagraph{Learner Architecture} We focus on training neural networks~(NNs) via stochastic optimization (i.e.,~AdamW~\citep{AdamW} with AMSGrad~\citep{AMSGrad}).  Probabilistic classifier,~$\cdc{}$, used our \absPU~risk estimator with logistic loss. All other learners used sigmoid loss for~$\loss$.
Since \PUc{}~is limited to linear models with Gaussian kernels, we limited our NNs to at most three fully-connected layers of 300~neurons. For MNIST, our NNs~were trained from scratch.  Pretrained deep networks encoded the CIFAR10, 20~Newsgroups, and TREC spam datasets into static representations all learners used. Specifically, the 20~Newsgroups documents and TREC emails were encoded into 9,216~dimensional vectors using ELMo~\citep{Peters:2018}. This encoding scheme was used by~\citet{Hsieh:2019} and is based on~\citep{Ruckle:2018}. DenseNet\=/121~\citep{DenseNet} encoded each CIFAR10 image into a 1,024~dimensional vector.

\neuripsParagraph{Hyperparameters} Our only individually tuned hyperparameters are learning rate and weight decay. We assume the worst case of no \textit{a priori} knowledge about the positive shift so midpoint value ${\pnutrade = 0.5}$ was used.  \PUc{}'s~hyperparameters were tuned via importance-weighted cross validation~\citep{Sugiyama:Krauledat:2007}. For the complete hyperparameter details, see supplemental materials Section~\ref{sec:App:ExpSetup:Hyperparameters}.

\begin{table*}[t]
  \centering
  \caption{Mean inductive misclassification rate~(\%) over 100~trials for MNIST, 20~Newsgroups, \& CIFAR10 for different positive \& negative class definitions.  Bold denotes a \emph{shifted task}'s best performing method. For \emph{all} shifted tasks, our three methods -- denoted with~\ourMethodKey{} -- statistically outperformed~\PUc{} and~\nnpuOpt{} based on a paired t\=/test $(p < 0.01)$. Each dataset's first three experiments have identical negative~(N) \& positive\=/test~(\pDsTest) class definitions. Positive train~(\pDsTrain) specified as ``\pDsTest'' denotes no bias. Additional shifted tasks (with result standard deviations) are in the supplemental materials.}\label{tab:ExpRes:Nonoverlap}
\newcommand{\DsName}[1]{\multirow{4}{*}{\rotatebox[origin=c]{90}{#1}}}

\newcommand{\PNpartial}[2]{\NResMain{#1}{#2}}

\newcommand{\NResMainUL}[2]{\NResMain{#1}{#2}}
\newcommand{\NResMainBest}[2]{\textbf{\NResMain{#1}{#2}}}

\renewcommand{\arraystretch}{1.2}
\setlength{\dashlinedash}{0.4pt}
\setlength{\dashlinegap}{1.5pt}
\setlength{\arrayrulewidth}{0.3pt}

\setlength{\tabcolsep}{5.4pt}

{\centering
  \small
  \begin{tabular}{@{}llllllrrrrrr@{}}
  \toprule
  \multirow{2}{*}{} & \multirow{2}{*}{N} & \multirow{2}{*}{\pDsTest} & \multirow{2}{*}{\pDsTrain} &  \multirow{2}{*}{$\prd{\tr}$} & \multirow{2}{*}{$\prd{\te}$} &    & \multicolumn{2}{c}{\footnotesize Two\=/Step (\putwo)}  & \multicolumn{2}{c}{\footnotesize Baselines}  & \multicolumn{1}{c@{}}{\footnotesize Ref.} \\\cmidrule(lr){8-9}\cmidrule(lr){10-11}\cmidrule(l){12-12}
                    &   & & &  & & \hspace{-6pt}\nnpurpl\ourMethodKey{} & \bpnu\ourMethodKey{} & \wuu\ourMethodKey{} & \PUc{} & \hspace{-3.8pt}\nnpuOpt{} & \multicolumn{1}{c@{}}{\PNte} \\\midrule
  \DsName{MNIST} & \NDef{0, 2, 4,\\ 6, 8} & \PTestDef{1, 3, 5,\\ 7, 9} & \SamePTest & 0.5 & 0.5 & \NResMain{10.0}{1.3} & \NResMain{10.0}{1.2} & \NResMain{11.6}{1.6} & \NResMain{8.6}{0.8} & \NResMain{5.5}{0.5} & \PNup{} \\\cdashline{4-11}
    &       &         & \multirow{2}{*}{7, 9}    & 0.5  & 0.5 & \NResMainUL{9.4}{1.5} & \NResMainBest{7.1}{0.9} & \NResMainUL{8.3}{1.5} & \NResMain{26.8}{2.4} & \NResMain{35.1}{2.5} & \PNpartial{2.8}{0.2}  \\\cdashline{5-11}
    &       &         &     & 0.29 & 0.5 & \NResMainUL{6.8}{0.8} & \NResMainBest{5.3}{0.6} & \NResMainUL{6.0}{0.7} & \NResMain{29.2}{2.1} & \NResMain{36.7}{2.7} & \PNdown{} \\\cdashline{2-12}
      & {0, 2}  & {5, 7}  & {1, 3}  & 0.5  & 0.5 & \NResMainUL{4.0}{0.8} & \NResMainUL{3.6}{0.9} & \NResMainBest{3.1}{0.7} & \NResMain{17.1}{4.6} & \NResMain{30.9}{5.3} & \PNpartial{1.1}{0.2} \\\midrule
    \DsName{20~News.} & \NDef{sci, soc,\\talk} & \PTestDef{alt, comp,\\ misc, rec} & \SamePTest & 0.56 & 0.56 & \NResMain{15.4}{1.3} & \NResMain{14.9}{1.2} & \NResMain{16.7}{2.3} & \NResMain{14.9}{1.0} & \NResMain{14.1}{0.8} & \PNup{} \\\cdashline{4-11}
  & &  & \multirow{2}{*}{misc, rec}   & 0.56 & 0.56 & \NResMainUL{17.5}{2.1} & \NResMainBest{13.5}{0.8} & \NResMainUL{15.1}{1.3} & \NResMain{23.9}{3.0} & \NResMain{28.8}{1.7} & \PNpartial{10.5}{0.4} \\\cdashline{5-11}
  & &  &    & 0.37 & 0.56 & \NResMainUL{13.9}{0.7} & \NResMainBest{12.8}{0.6} & \NResMainUL{14.3}{0.9} & \NResMain{28.9}{1.8} & \NResMain{28.8}{1.3} & \PNdown{} \\\cdashline{2-12}
    & {misc, rec} & {soc, talk} & {alt, comp} & 0.55 & 0.46 & \NResMainUL{5.9}{1.0} & \NResMainUL{7.1}{1.1} & \NResMainBest{5.6}{1.7} & \NResMain{18.5}{4.3} & \NResMain{35.3}{5.2} & \PNpartial{2.1}{0.3}  \\\midrule
    \DsName{CIFAR10}  & \NDef{Bird, Cat,\\Deer, Dog,\\Frog, Horse} & \PTestDef{Plane, \\ Auto, Ship, \\ Truck} & \SamePTest & 0.4 & 0.4 & \NResMain{14.1}{0.9} & \NResMain{14.2}{1.3} & \NResMain{15.5}{1.6} & \NResMain{13.8}{0.8} & \NResMain{12.3}{0.6} & \PNup{} \\\cdashline{4-11}
  & & & \multirow{2}{*}{Plane} & 0.4 & 0.4 & \NResMainBest{13.8}{0.9} & \NResMainUL{14.5}{1.4} & \NResMainUL{15.1}{1.6} & \NResMain{20.6}{1.5} & \NResMain{27.4}{1.0} & \PNpartial{9.8}{0.6} \\\cdashline{5-11}
  & & &  & 0.14 & 0.4 & \NResMainUL{12.1}{0.7} & \NResMainBest{11.9}{0.7} & \NResMainUL{12.4}{0.9} & \NResMain{26.7}{1.4} & \NResMain{26.7}{1.0} & \PNdown{} \\\cdashline{2-12}
    & {Deer, Horse} & {Plane, Auto} & {Cat, Dog} & 0.5 & 0.5 & \NResMainUL{14.1}{0.9} & \NResMainUL{14.9}{1.5}  & \NResMainBest{11.2}{0.8} & \NResMain{33.1}{2.7} & \NResMain{47.5}{2.0} & \PNpartial{7.7}{0.4} \\
  \bottomrule
\end{tabular}
}
\end{table*}

\begin{figure*}[tb]
  \centering
  \hspace{-10pt}
  \begin{subfigure}[b]{0.27\textwidth}%
    \BarGraphBase{(2)~Pos.\ Shift Only}{bar-shift-pos-only}{Misclass.\ Rate~(\%)}{}%
  \end{subfigure}%
  ~~%
  \begin{subfigure}[b]{0.23\textwidth}%
    \BarGraphBase{(3)~Pos.\ \& Prior Shifts}{bar-shift-pos-prior}{}{\disableTicks}%
  \end{subfigure}%
  ~~%
  \begin{subfigure}[b]{0.23\textwidth}%
    \BarGraphBase{(4)~Disjoint Pos.\ Support}{bar-disjoint}{}{\disableTicks}%
  \end{subfigure}%
  ~%
  \begin{subfigure}[b]{0.23\textwidth}%
    \BarGraphBase{Spam Classification}{spam-bar}{}{\disableTicks}%
  \end{subfigure}%
  \begin{center}
\newcommand{\legendSpacer}{\mbox{\hspace{0.3cm}}}
\begin{tikzpicture}
    \begin{axis}[%
      ybar,
      hide axis,  %
      xmin=0,  %
      xmax=1,
      ymin=0,
      ymax=1,
      scale only axis,width=1mm, %
      legend cell align={left},              %
      legend style={font=\footnotesize},
      legend columns=5,
    ]
    \pgfplotsinvokeforeach{\nnpurpl{} (ours),\pupnu{} (ours),\punu{} (ours),\PUc}{%
      \addplot coordinates {(0,0)};
      \addlegendentry{#1~~~~~~~}
    }
    \addplot coordinates {(0,0)};
    \addlegendentry{\nnpuOpt};
    \end{axis}
\end{tikzpicture}
   \end{center}
  \caption{Mean inductive misclassification rate over 100~trials on the MNIST, 20~News., CIFAR10, \& TREC spam datasets for our methods \& baselines.  Each numbered plot (i.e.,~2--4) corresponds to one experimental shift task in Table~\ref{tab:ExpRes:Nonoverlap}. Spam classification experiments are detailed in Section~\ref{sec:ExpRes:Spam}.}\label{fig:ExpRes:Nonoverlap}
\end{figure*}

\subsection{Partially and Fully Disjoint Positive Class-Conditional Supports}\label{sec:ExpRes:NonidenticalSupports}

Here we replicate scenarios where positive subclasses exist only in the test distribution (e.g.,~adversarial zero\=/day attacks). These experiments are modeled after \citet{Hsieh:2019}'s~experiments for positive, unlabeled, biased-negative (PUbN)~learning.

Table~\ref{tab:ExpRes:Nonoverlap} lists the experiments' positive train/test and negative class definitions. Datasets are sampled u.a.r.\ from their constituent sublabels.  Each dataset has four experimental conditions (ordered by row number): (1)~\mbox{\pDsTrain\ = \pDsTest}, i.e.,~no bias, (2~\& 3~resp.) partially disjoint positive supports without and with prior shift, and (4)~disjoint positive class definitions. $\prd{\te}$~equals \pDsTest's true prior w.r.t.\ ${\text{\pDsTest} \sqcup \text{N}}$.  By default~${\prd{\tr} = \prd{\te}}$; in the prior shift and disjoint support experiments~(rows~3 and~4), $\prd{\tr}$~equals \pDsTrain's true prior w.r.t.\ ${\text{\pDsTrain} \sqcup \text{N}}$.

\neuripsParagraph{Analysis} Results are shown in Table~\ref{tab:ExpRes:Nonoverlap} and Figure~\ref{fig:ExpRes:Nonoverlap}. On \emph{unshifted} data (row~1 for each dataset), baselines \PUc{} and \nnpuOpt{} slightly outperformed our methods, which shows that \PUc{}'s architecture is sufficiently expressive. In contrast, on \emph{shifted} data (rows~2--4 for each dataset), our methods' performance generally improved while both \PUc{}'s and \nnpuOpt{}'s performance always degraded.  This performance divergence demonstrates our methods' algorithmic advantage. In fact for all shifted tasks, our methods always outperformed \PUc\ and~\nnpuOpt\ according to a paired t\=/test (${p < 0.01}$).
For partially disjoint positive supports (rows~2 and~3 for each dataset), \pupnu\ was the top performer for five of six setups (\nnpurpl{} was top on the other). This pattern reversed for fully disjoint supports (row~4) where \pupnu\ always lagged \punu{}; this is expected as explained in Section~\ref{sec:TwoStep:Step2}.

Reducing~$\prd{\tr}$ always improved our algorithms' performance and degraded \PUc's. A smaller prior enables easier identification of $\uset{\tr}$'s~negative examples and in turn a more accurate estimation of $\uset{\te}$'s~negative risk. In contrast, importance weighting is most accurate in the absence of bias (see row~1 for each dataset).  Any shift increases density estimation's (and by extension \PUc's) inaccuracy.

\nnpuOpt\ outperformed both \PUc\ and our methods when there was no bias. This is expected. If an algorithm searches for non-existent phenomena, any additional patterns found will not generalize.

\subsection{Case Study: Arbitrary Adversarial Concept Drift}\label{sec:ExpRes:Spam}

\begin{table*}[t]
  \centering
  \caption{Mean inductive misclassification rate~(\%) over 100~trials for spam adversarial drift. Our methods -- \nnpurpl, \punu, and \pupnu{} -- outperformed \PUc{} \& \nnpuOpt{} based on a 1\%~paired t-test.  Each result's standard deviation appears in supplemental Table~\ref{tab:App:AddRes:Spam}.}\label{tab:ExpRes:Spam}
\renewcommand{\arraystretch}{1.2}
\setlength{\dashlinedash}{0.4pt}
\setlength{\dashlinegap}{1.5pt}
\setlength{\arrayrulewidth}{0.3pt}

\setlength{\tabcolsep}{5.4pt}

\newcommand{\NResSpam}[2]{#1}
\newcommand{\NResSpamTop}[2]{\textbf{#1}}

{\centering
  \footnotesize
  \begin{tabular}{@{}llllllrrrrrr@{}}
    \toprule
    \multicolumn{2}{c}{{\footnotesize Train Set}} & \multicolumn{2}{c}{{\footnotesize Test Set}} &  \multirow{2}{*}{$\prd{\tr}$} & \multirow{2}{*}{$\prd{\te}$} &    & \multicolumn{2}{c}{{\footnotesize Two\=/Step (\putwo)}}  & \multicolumn{2}{c}{{\footnotesize Baselines}}  & \multicolumn{1}{c@{}}{{\footnotesize Ref.}} \\\cmidrule(r){1-2}\cmidrule(lr){3-4}\cmidrule(lr){8-9}\cmidrule(lr){10-11}\cmidrule(l){12-12}
    Pos. & Neg.  & Pos.    &  Neg.     &      &     & \hspace{-0pt}\nnpurpl{}  & \bpnu{}  & \wuu{}  & \PUc{}            & \nnpuOpt{} & \PNte{} \\\midrule
     \NDef{2005 \\ Spam} & \NDef{2005 \\ Ham} & \NDef{2007 \\ Spam}   & \NDef{2007 \\ Ham}
                                          & 0.4  & 0.5 & \NResSpam{26.5}{2.6} & \NResSpam{26.9}{3.1}  & \NResSpamTop{25.1}{3.1}  & \NResSpam{35.2}{11.3} & \NResSpam{40.9}{3.1} & \multicolumn{1}{c@{}}{$\uparrow$} \\\cdashline{5-11}
                        &     &     &     & 0.5  & 0.5 & \NResSpam{27.5}{3.4} & \NResSpam{28.6}{4.5}  & \NResSpamTop{25.1}{3.3}  & \NResSpam{34.6}{10.2} & \NResSpam{40.5}{2.7} & \NResSpam{0.6}{0.3} \\\cdashline{5-11}
                        &     &     &     & 0.6  & 0.5 & \NResSpam{30.8}{4.2} & \NResSpam{33.0}{5.7}  & \NResSpamTop{29.3}{6.5}  & \NResSpam{38.5}{10.8} & \NResSpam{41.1}{2.9} & \multicolumn{1}{c@{}}{$\downarrow$} \\
  \bottomrule
\end{tabular}
}
 \end{table*}

PU~learning has been applied to multiple adversarial domains including opinion spam~\citep{Hernandez:2013,Li:2014,Zhang:2017,Zhang:2019:Malware}.  We use spam classification as a vehicle to test our methods in an adversarial setting by considering two different TREC email spam datasets -- training on TREC05 and evaluating on TREC07. Spam -- the positive class -- evolves quickly over time, but the two datasets' ham emails are also quite different: TREC05 relies on Enron emails while TREC07 contains mostly  emails from a university server. Thus, this represents a more challenging, realistic setting where Eq.~\eqref{eq:PreviousWork:NegAssume}'s assumption does not hold.

Table~\ref{tab:ExpRes:Spam} and Figure~\ref{fig:ExpRes:Nonoverlap} show that our methods outperformed \PUc{} and \nnpuOpt{} according to a 1\%~paired t\=/test across three training priors~($\prd{\tr}$).  \punu{}~was the top-performer as $\cdc$~accurately labeled~$\uset{\tr}$, yielding a strong surrogate negative set. \pupnu{}~performed slightly worse than~\punu{} as the significant adversarial concept drift greatly limited $\pset{}$'s~value.
Overall, these experiments show that our \acronym{}~setting arises in real-world domains. All of our methods handled large positive shifts better than prior work, even in realistic cases where the negative class also shifts.

\subsection{Discussion}

Our two-step methods assume asymptotic consistency for~$\eqsmall{\nsetPS{}}$ in step~\#1, but finite training data ensures a non\=/consistent evaluation setting. Nonetheless, either \pupnu{} or~\punu{} was the top performer in all but one experiment in this section.\footnote{Supplemental Sections~\ref{sec:App:AddRes:Nonoverlap} and~\ref{sec:App:AddRes:MarginalBias} enumerate multiple empirical setups where \nnpurpl{} is the top performer.} Supplemental Section~\ref{sec:App:AddRes:AltStep1} includes additional experiments where we further stress our two-step methods by forcing $\cdc$~away from our posterior estimate.  Even under those deleterious step~\#1 conditions, our two-step learners are robust.

Conventional wisdom suggests that joint method~\nnpurpl{} should outperform pipeline approaches. This intuition breaks down in our case because \nnpurpl{}, with its three risk decompositions, is strictly harder to optimize than \wuu, \bpnu, \absPU, and~\nnPU\@ -- all of which have a single decomposition. This harder optimization can lead to worse accuracy compared to the two-step methods, especially on easier problems (e.g.,~MNIST), where each step can be solved accurately on its own.

For completeness, suppl.\ Section~\ref{sec:App:AddRes:PUSB} compares our methods to \bpu{} selection bias method \pusb{}~\citep{Kato:2019}. Our algorithms generally outperformed \pusb{} on data specifically tuned for their method even after accounting for the differing unlabeled sets. Those experiments indicate that \pusb{}'s underlying assumption entails only a small data shift and further point to potential \pusb{} learning brittleness.
\section{Conclusions}

We examined arbitrary-positive, unlabeled (\acronym)~learning, where the labeled-positive and target-positive distributions may be arbitrarily different. A (nearly) fixed negative class-distribution allows us to train accurate classifiers without any labeled data from the target distribution (i.e.,~disjoint positive supports). Empirical results on real-world data above and in the supplementals show that our methods are still robust in the realistic case of some negative shift.  Future work seeks a less restrictive yet statistically-sound replacement assumption of a fixed negative class\=/conditional distribution.
\section{Broader Impact}

The algorithms proposed in this work are general and could be applied to many different applications. Forecasting the broader impact of work like this is challenging and generally inaccurate. With that caveat, we discuss potential impacts based on possible applications.

The case study on email spam suggests that our methods may be useful in adversarial domains, such as the detection of fraud, malware, network intrusion, distributed denial of service (DDoS) attacks, and many types of spam. In these settings, one class (e.g., spam) evolves quickly as attackers try to evade detection. For many of these domains, improved classifiers would benefit society by reducing spam and fraud. However, for domains such as facial recognition, improved robustness could lead to reduced privacy and other societal harms. See \citet{Albert:2020} for an extensive discussion of the politics of adversarial machine learning.

In other domains, such as epidemiological analysis and land-cover classification, our work may lead to new or better models by reducing the need for labeled data and relaxing the SCAR assumption. As detailed in Section~\ref{sec:Introduction}, only recently has the PU~SCAR barrier been broken~\citep{Bekker:2019,Kato:2019,Sakai:2019}. \acronym{}~learning pushes PU~learning's positive-shift boundary to a new extreme. We hope this paper will enable PU~learning to be applied in domains where existing \bpu{}\textbackslash{}PU~methods are impractical. This could also benefit society if used responsibly, with experts performing proper model validation and vetting risks. Careful model validation is especially important when labeled data is limited and biased.

\begin{ack}
  This work was supported by a grant from the Air Force Research Laboratory and the Defense Advanced Research Projects Agency~(DARPA) -- agreement number FA8750\=/16\=/C\=/0166, subcontract K001892\=/00\=/S05.

  This work benefited from access to the University of Oregon high performance computer, Talapas.
\end{ack}
 {\small
  \bibliography{bib/ref.bib}
}
\bibliographystyle{unsrtnat}
\clearpage
\newgeometry{top=1in, bottom=1in, left=1in, right=1in}
\pagenumbering{arabic}%
\renewcommand*{\thepage}{A\arabic{page}}
\appendix
\SupplementaryMaterialsTitle{}

\section{Nomenclature}\label{sec:App:Nomenclature}

\begin{longtable}{lp{5.7in}}
  \caption{\acronym\ nomenclature reference}\label{tab:App:Nomenclature}
  \\\toprule
  \endfirsthead
  \caption{\acronym\ nomenclature reference (continued)}%
  \\\toprule
  \endhead
  PN              & Positive-negative learning, i.e., ordinary supervised classification \\
  PU              & Positive-unlabeled learning \\
  NU              & Negative-unlabeled learning \\
  uPU             & Unbiased Positive-Unlabeled risk estimator from~\citep{duPlessis:2014}. See Section~\ref{sec:PreviousWork:Vanilla}  \\
  \nnPU{}         & Non\=/negative Positive-Unlabeled risk estimator from~\citep{Kiryo:2017}. See Section~\ref{sec:PreviousWork:Vanilla} \\
  \absPU{}        & Our Absolute\=/value Positive\=/Unlabeled risk estimator. See Section~\ref{sec:AbsValCorrection} \\
  \bpu{}          & Biased-positive, unlabeled learning where the labeled-positive set is not representative of the target positive class. \bpu{} algorithm categories include sample selection bias~\citep{Bekker:2019,Kato:2019} and covariate shift methods~\cite{Sakai:2019} \\
  \acronym{}      & Proposed in this work, arbitrary-positive, unlabeled learning generalizes \bpu{}~learning where the positive training data may be arbitrarily different from the target application's positive-class distribution \\
  \PUc{}          & Positive-Unlabeled Covariate shift algorithm from~\citep{Sakai:2019}. See Section~\ref{sec:PreviousWork:PUc}   \\
  \punu{}         & Our Positive-Unlabeled to Weighted Unlabeled-Unlabeled (two-step) \acronym~learner. See Section~\ref{sec:WUU}\\
  \pupnu{}        & Our Positive-Unlabeled to Arbitrary-Positive, Negative, Unlabeled (two-step) \acronym~learner. See Section~\ref{sec:WUU}\\
  \nnpurpl{}      & Our Positive-Unlabeled Recursive Risk (one-step) \acronym~estimator. See Section~\ref{sec:PURPL} \\
  \absPU{}        & Our Absolute\=/value Positive\=/Unlabeled risk estimator. See Section~\ref{sec:AbsValCorrection} \\
  \nnpuOpt{}      & Version of \nnPU{} used as an empirical baseline.  \nnpuOpt{} considers two classifiers -- one trained with $\uset{\te}$~as the unlabeled set and the other trained with ${\uset{\tr} \cup \uset{\te}}$ as the unlabeled set -- and reports whichever configuration performed better. See Section~\ref{sec:ExpRes} \\
  \abspuOpt{}     & Baseline equivalent of \nnpuOpt{} except risk estimator~$\ersk{\text{\absPU}}$ is used instead of~$\ersk{\text{\nnPU}}$.  See Section~\ref{sec:App:AbsPuVsNnpu:WithShift}  \\
  $\xrv$          & Covariate where ${\xrv \in \domainX}$ \\
  $\yrv$          & Dependent random variable, i.e.,~label, where ${\yrv \in \domainY}$ \\
  $\yhat$         & Predicted label ${\yhat \in \domainY}$ \\
  $\dec$          & Decision function,~$\func{\dec}{\domainX}{\real}$ \\
  $\params$       & Parameter(s) of decision function~$\dec$ \\
  $\decCls$       & Real-valued decision function hypothesis class, i.e.,~${\dec \in \decCls}$ \\
  $\loss$         & Loss function,~$\func{\loss}{\real}{\realnn}$ \\
  $\pjointbase{\domSub}$  & Joint distribution, where ${\domSub \in \set{\tr,\te}}$ for train and test resp.  \\
  $\prd{\domSub}$   & Positive-class prior probability, ${\prd{\domSub} \defeq \pprior{\domSub}}$ where ${\domSub \in \set{\tr,\te}}$ for train \& test resp. \\
  $\plike{\domSub}$ & Positive class-conditional $\plike{\domSub} \defeq \likeBaseFull{\domSub}{\yp}$ where ${\domSub \in \set{\tr,\te}}$ for train \& test resp. \\
  $\nlike{\domSub}$ & Negative class-conditional $\nlike{\domSub} \defeq \likeBaseFull{\domSub}{\yn}$ where ${\domSub \in \set{\tr,\te}}$ for train \& test resp. \\
  $\marg{\domSub}$  & Marginal distribution where ${\marg{\domSub} \defeq \pbase{\domSub}{\X}}$ where ${\domSub \in \set{\tr,\te}}$ for train and test resp. \\
  $\pset{}$       & Labeled (positive) dataset, i.e., ${\pset{} \iidsim \plike{\tr}}$ \\
  $\uset{\tr}$    & Unlabeled dataset sampled from the \textit{training} marginal distribution, i.e.,~${\uset{\tr} \iidsim \marg{\tr}}$ \\
  $\uset{\te}$    & Unlabeled dataset sampled from the \textit{test} marginal distribution, i.e.,~${\uset{\te} \iidsim \marg{\te}}$ \\
  $\cdc$          & Probabilistic classifier,~$\func{\cdc}{\domainX}{[0,1]}$ that approximates~$\ppost{\tr}{\yn}$ \\
  $\cdcCls$       & Function class containing~$\cdc$ \\
  $\nset{}$       & Labeled negative dataset. In PU~learning, ${\nset{} = \emptyset}$ \\
  $\nsetPS{}$     & Surrogate negative set formed by reweighting $\uset{\tr}$ by $\cdc$ \\
  $\risk{}$       & Risk, i.e.,~expected loss, for decision function~$\dec$ and loss~$\loss$, i.e.,~${\risk{} \defeq \expectS{(\xrv,\yrv) \sim \pjointbase{}}{\ldecRV{\yrv}}}$ \\
  $\ersk{}$       & Empirical estimate of risk~$\risk{}$ \\
  $\elblrsk{\gsub}{\yhat}$ & Empirical risk when predicting label ${\yhat \in \domainY}$ on data sampled from some distribution,~$\pbase{\gsub}{\X}$.  See Section~\ref{sec:PreviousWork:Vanilla} \\
  $\lblAbsR{\nsub}{\yhat}$ & Labeled negative risk with absolute\=/value correction.  See Eq.~\eqref{eq:AbsVal:Def} in Section~\ref{sec:AbsValCorrection} \\
  $\trsk{\nuSub}{\yhat}$ & Surrogate negative risk formed by weighting unlabeled set~$\uset{\tr}$ by probabilistic classifier~$\cdc$ where~${\trsk{\nuSub}{\yhat} \defeq \frac{1}{\nTrU} \sum_{\X_i \in \uset{\tr}} \frac{\cdcxI \lbase{\decxI}{\yhat}}{1-\prd{\tr}}}$ \\
  $\Wx$           & Covariate shift importance function based on density-ratio estimation where~${\Wx \defeq \frac{\marg{\te}}{\marg{\tr}}}$ \\
  $\np$           & Size of the labeled (positive) dataset, i.e., ${\np \defeq \abs{\pset{}}}$ \\
  $\nTrU$         & Size of the unlabeled \textit{training} dataset, i.e., ${\nTrU \defeq \abs{\uset{\tr}}}$ \\
  $\nTeU$         & Size of the unlabeled \textit{test} dataset, i.e., ${\nTeU \defeq \abs{\uset{\te}}}$ \\
  $\nTest$        & Size of the inductive test set \\
  $\learner$      & Learning or optimization algorithm \\
  $\learningRate$ & Learning rate hyperparameter, ${\learningRate > 0}$ \\
  $\weightDecay$  & Weight decay hyperparameter, ${\weightDecay \geq 0}$  \\
  $\gradAtten$    & Non-negative gradient attenuator hyperparameter ${\gradAtten \in (0,1]}$.  This hyperparameter is ignored when absolute\=/value correction is used. \\
  $\normal{\boldsymbol{\mu}}{\eye{m}}$ & Multivariate Gaussian (normal) distribution with mean~$\boldsymbol{\mu}$ and $m$\=/dimensional identity covariance.  See Section~\ref{sec:App:AddRes:Synthetic} \\
  $\MyMax{a}$     & ${\defeq \max\{0,a\}}$.  See Section~\ref{sec:App:ERM:PURPL} \\
  $\round{a}$     & Rounds ${a \in \real}$ to the nearest integer. See Section~\ref{sec:App:AddRes:AltStep1} \\
  \bottomrule
\end{longtable}
\section{Proofs}\label{sec:App:Proofs}

\subsection{Proof of Theorem~\ref{thm:AbsCorrection}}

\begin{proof}
  Mild assumptions are made about the behavior of the loss and decision functions; the following conditions match those assumed by~\citet{Kiryo:2017}. Define loss function~$\loss$ as \textit{bounded} over some class of real-valued functions~$\decCls$ (where ${\dec \in \decCls}$) when the following conditions both hold:
  \begin{enumerate}
    \item ${\exists \decBound > 0}$ such that ${\sup_{\dec \in \decCls}\norm{\dec}_{\infty} \leq \decBound}$
    \item ${\exists \lossBound > 0}$ such that ${\sup_{\abs{t} \leq \decBound} \max_{\yhat \in \set{\pm1}} \lbase{t}{\yhat} \leq \lossBound}$ \text{.}
  \end{enumerate}

  \citet{duPlessis:2014} show that
  \begin{equation}
    (1-\prd{})\lblrisk{\nsub}{\yhat} = \lblrisk{\usub}{\yhat} - \prd{} \lblrisk{\psub}{\yhat} \text{.}
  \end{equation}
  \noindent
  Consider the labeled negative\=/valued risk estimator with absolute\=/value correction
  \begin{equation}\label{eq:App:thm:AbsCorrection}
    \lblAbsR{\nsub}{\yhat} = \nnwrap{\elblrsk{\nsub}{\yhat}} \text{.}
  \end{equation}

  An estimator,~$\hat{\theta}_n$, over $n$~samples is consistent w.r.t.\ parameter~$\theta$ if for all ${\epsilon > 0}$ it holds that
  \begin{equation*}
    \lim_{n \rightarrow \infty} \Pr\sbrack{\abs{\hat{\theta}_{n} - \theta} \geq \epsilon} = 0\text{.}
  \end{equation*}
  \noindent
  Let estimator ${\hat{Y} = \sum_{i=1}^{k} \beta_i \hat{\theta}_{(i)}}$ be the weighted sum of $k$~consistent estimators with each constant~${\beta_i \ne 0}$.  Let ${\epsilon > 0}$ be an arbitrary positive constant. If each $\hat{\theta}_{(i)}$ converges to within~${\frac{\epsilon}{k\abs{\beta_i}} > 0}$ of~${\theta_{(i)} \geq 0}$, then ${\hat{Y}}$ converges to within~$\epsilon$ of ${\sum_{i=1}^{k} \beta_i \theta_{(i)}}$. Therefore, to prove the consistency of~$\lblAbsR{\nsub}{\yhat}$ in Eq.~\eqref{eq:App:thm:AbsCorrection}, it is sufficient to show that each of its individual terms is consistent.

  Both $\elblrsk{\psub}{\yhat}$ and $\elblrsk{\usub}{\yhat}$ are empirically estimated directly from a training data set.  Let ${\gsub \in \set{\psub,\usub}}$ and ${\train_{\gsub} \iidsim \pbaseX{\gsub}}$.  For each (independent) ${\xrv \sim \pbaseX{\gsub}}$, $\ldecRV{\yhat}$ is an unbiased estimate of~$\lblrisk{\gsub}{\yhat}$. In addition, ${\ldecRV{\yhat} < \lossBound < \infty}$ implies that ${\var{\ldecRV{\yhat}} <\infty}$.  By Chebyshev's Inequality, $\elblrsk{\gsub}{\yhat}$~is consistent as
  \begin{equation*}
    \lim_{\abs{\train} \rightarrow \infty}\Pr\sbrack{\abs{\frac{1}{\abs{\train}}\sum_{\X_i \in \train} \bigg( \ldecI{\yhat} \bigg) - \lblrisk{\gsub}{\yhat}} \geq \epsilon} < \frac{\var{\ldecRV{\yhat}}}{\abs{\train}\epsilon^2} = 0\text{.}
  \end{equation*}
  \noindent
  Since $\elblrsk{\nsub}{\yhat}$~is the weighted sum of consistent estimators, it is consistent as ${\sizebase{} = \min\set{\np, \nU} \rightarrow \infty}$.

  To show $\lblAbsR{\nsub}{\yhat}$~is consistent, it suffices to show that
  \begin{equation*}
    \lim_{\sizebase{} \rightarrow \infty} \Pr\sbrack{\abs{\lblAbsR{\nsub}{\yhat} - \lblrisk{\nsub}{\yhat}} \geq \epsilon} = 0 \text{.}
  \end{equation*}
  \noindent
  Because $\elblrsk{\nsub}{\yhat}$~is consistent, then as ${\sizebase{} \rightarrow \infty}$ it holds that ${\elblrsk{\nsub}{\yhat} - \epsilon \leq \lblrisk{\nsub}{\yhat} \leq \elblrsk{\nsub}{\yhat} + \epsilon}$.  When ${\elblrsk{\nsub}{\yhat} \geq \lblrisk{\nsub}{\yhat} \geq 0}$,  then ${\lblAbsR{\nsub}{\yhat} = \elblrsk{\nsub}{\yhat}}$ (i.e.,~absolute value has no effect) so
  \begin{equation*}
    0 \leq \lblAbsR{\nsub}{\yhat} - \lblrisk{\nsub}{\yhat} \leq \epsilon \text{.}
  \end{equation*}

  Consider the alternate possibility where ${\elblrsk{\nsub}{\yhat} < \lblrisk{\nsub}{\yhat}}$.  If ${\elblrsk{\nsub}{\yhat} \geq 0}$ or ${\lblrisk{\nsub}{\yhat} = 0}$, then absolute\=/value correction again has no effect on the estimation error (i.e.,~remains~${{\leq}\epsilon}$).  Lastly, when ${\elblrsk{\nsub}{\yhat} < 0}$ and ${\lblrisk{\nsub}{\yhat} > 0}$, the estimation error strictly decreases as
  \begin{align*}
    \err_{\hat{R}} &= \abs{\elblrsk{\nsub}{\yhat} - \lblrisk{\nsub}{\yhat}} \\
                   &= -\elblrsk{\nsub}{\yhat} + \lblrisk{\nsub}{\yhat}  &  \text{Since } \elblrsk{\nsub}{\yhat} < 0 \text{ and } {\lblrisk{\nsub}{\yhat} > 0}\\
                   &= \abs{\elblrsk{\nsub}{\yhat}} + \lblrisk{\nsub}{\yhat} & \text{Again since } \elblrsk{\nsub}{\yhat} < 0\\
                  &= \lblAbsR{\nsub}{\yhat} + \lblrisk{\nsub}{\yhat} < \epsilon
  \end{align*}
  \noindent
  so
  \begin{align}
    \err_{\ddot{R}} &= \abs{\abs{\elblrsk{\nsub}{\yhat}} - \lblrisk{\nsub}{\yhat}} \nonumber\\
                    &\fedeq \abs{\lblAbsR{\nsub}{\yhat} - \lblrisk{\nsub}{\yhat}} \nonumber\\
                    &< \lblAbsR{\nsub}{\yhat} + \lblrisk{\nsub}{\yhat} < \epsilon &  \text{Since } \elblrsk{\nsub}{\yhat} < 0 \text{ and } {\lblrisk{\nsub}{\yhat} > 0} \label{eq:App:Thm:AbsCorrection} \text{.}
  \end{align}

  The above shows that as ${\sizebase{} \rightarrow \infty}$, it always holds that ${\abs{\lblAbsR{\nsub}{\yhat} - \lblrisk{\nsub}{\yhat}} \leq \epsilon}$  for arbitrary ${\epsilon > 0}$ making $\lblAbsR{\nsub}{\yhat}$~consistent.
\end{proof}

\subsection{Proof of Theorem~\ref{thm:TwoStep:Unbiased}}

\begin{proof}
  Consider first the case that ${\cdcx = \ppost{\tr}{\yn}}$:
  \begin{align*}
    \expectS{\uset{\tr} \iidsim \marg{\tr}}{\trsk{\nuSub}{\yhat}} &= \expectS{\uset{\tr} \iidsim \marg{\tr}}{\frac{1}{\nTrU} \sum_{\xrv_i \in \uset{\tr}} \frac{\lbase{\decRVI}{\yhat} \cdcRVI}{1-\prd{\tr}}} \\
                                                         &= \frac{1}{\nTrU} \sum_{i=1}^{\nTrU} \expectS{\xrv \sim \marg{\tr}}{\frac{\ldecRV{\yhat} \cdcRV}{1-\prd{\tr}}} & \text{Linearity of expectation} \\
                                                         &= \expectS{\xrv \sim \marg{\tr}}{\frac{\ldecRV{\yhat} \cdcRV}{1-\prd{\tr}}} \\
                                                         &= \expectS{\xrv \sim \marg{\tr}}{\frac{\ldecRV{\yhat} \ppostRV{\tr}{\yn}}{\nprior{\tr}}} \\
                                                         &= \int_{\X} \ldec{\yhat} \frac{\ppost{\tr}{\yn} \marg{\tr}}{\nprior{\tr}} \\
                                                         &= \expectS{\xrv \sim \nlike{\tr}}{\ldecRV{\yhat}} & \text{Bayes' Rule} \\
                                                         &\fedeq \lblrisk{\trsub\nsub}{\yhat},
  \end{align*}
  \noindent
  satisfying the definition of unbiased.

  Next we consider whether $\trsk{\nuSub}{\yhat}$~is a consistent estimator of~$\lblrisk{\nsub}{\yhat}$.  For the complete definition of PAC~learnability that we use here, see~\citep{FoundationsML}.  We provide a brief sketch of the definition below.

  We assume that true posterior distribution,~$\ppost{\tr}{\yn}$ is in some concept class~$\mathcal{C}$ of functions --- i.e.,~\textit{concepts} --- mapping~${\domainX}$ to~${[0,1]}$.  Let ${\cdc_{\sample} \in \cdcCls}$ be the hypothesis selected by learning algorithm~$\learner$ after being provided a training sample~$\sample$ of size ${\sizebase{} = \min\left\{\np, \nTrU\right\}}$.\footnote{No restrictions are placed on~$\learner$ other than its existence and that selected hypothesis~$\cdc_{\sample}$ satisfies Eq.~\eqref{eq:App:Proof:TwoStep:Convergence}.}  Consider the \textit{realizable} setting so $\mathcal{C}$'s~PAC~learnability entails that for all ${\epsilon,\delta > 0}$, there exists an~$\sizebase{}'$ such that for all ${\sizebase{} > \sizebase{}'}$,
  \begin{equation}\label{eq:App:Proof:TwoStep:Convergence}
    \Pr\bigg[\expectS{\xrv \sim \marg{\tr}}{\abs{\cdc_{\sample}(\xrv) - \pbase{\tr}{\yrv = \yn \vert \xrv}}} > \epsilon\bigg] < \delta.
  \end{equation}
  \noindent
  Therefore, as ${\sizebase{} \rightarrow \infty}$, $\cdc$'s~expected (absolute) error w.r.t.~$\ppost{\tr}{\yn}$ decreases to~0 making $\trsk{\nuSub}{\yhat}$ asymptotically unbiased.  To demonstrate consistency, it is necessary to show that for all ${\epsilon > 0}$:
  \begin{equation*}
    \lim_{\sizebase{} \rightarrow \infty}\Pr\sbrack{\abs{\trsk{\nuSub}{\yhat} - \lblrisk{\trsub\nsub}{\yhat}} > \epsilon} = 0\text{.}
  \end{equation*}
  Let ${\sup_{\abs{t} \leq \norm{\dec}_{\infty}}\lbase{t}{\yhat} \leq \lossBound}$, where $\norm{\dec}_{\infty}$ is the Chebyshev norm of~$\dec$ for~${\X \in \domainX}$. Bounding the loss's magnitude bounds the variance when estimating the surrogate negative risk of ${\xrv \sim \marg{\tr}}$ such that~${\frac{1}{(1 - \prd{\tr})^2}\var{\cdcRV \ldecRV{\yhat}} \leq \varBound}$ where~${\varBound \in \realnn}$ and~${\prd{\tr} \in [0,1)}$.

  Since $\trsk{\nuSub}{\yhat}$ is asymptotically unbiased, then from Chebyshev's inequality for ${\epsilon > 0}$:
  \begin{align*}
    \lim_{\sizebase{} \rightarrow \infty}\Pr\sbrack{\abs{\trsk{\nuSub}{\yhat} - \lblrisk{\trsub\nsub}{\yhat}} \geq \epsilon} &\leq \frac{\var{\trsk{\nuSub}{\yhat}}}{\epsilon^2} \\
                                                                                                                             &= \frac{1}{(1-\prd{\tr})^2\epsilon^2} \sum_{i=1}^{\nTrU} \text{Var}\left( \frac{\cdcRV \lbase{\decRV}{\yhat}}{\nTrU} \right) & \text{Linearity of independent r.v.\ var.} \\
                                                                                        &\leq \frac{\nTrU\varBound}{\nTrU^2 \epsilon^2} \\
                                                                                        &= 0 & \text{L'H\^{o}pital's Rule}\text{.}
  \end{align*}
\end{proof}

\subsection{Proof Regarding Estimating \texorpdfstring{$\prd{\te}$}{the Test Distribution's Positive Class Prior}}

We are not aware of an existing technique to directly estimate the test distribution's positive prior~$\prd{\te}$ given only~$\pset{}$,~$\uset{\tr}$, and~$\uset{\te}$.  We propose the following that uses an additional classifier.

\begin{theorem}\label{thm:App:TwoStep:PriorSample}
  Define ${\uset{} \defeq \set{\X_i}_{i=1}^{\nU} \iidsim \marg{}}$. Let ${\nset{} = \setbuild{\X_i \in \uset{}}{Q_i = 1}}$ be a set where $Q_i$~is a Bernoulli random variable with probability of success~${q_i = \pbase{}{\yrv = \yn \vert \X_i}}$.  Then $\nset{}$ is a SCAR sample w.r.t.\ negative class\=/conditional distribution ${\nlike{} = \condfull{}{\yn}}$.
\end{theorem}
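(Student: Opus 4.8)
The plan is to verify the SCAR property directly from its definition: I want to show that, conditioned on selection, each point retained in~$\nset{}$ is distributed exactly as the negative class\=/conditional~$\nlike{}$, and that the retained points remain mutually independent. Since a point~$\X_i$ is kept precisely when its Bernoulli flip~$Q_i$ (with success probability $q_i = \pbase{}{\yrv\!=\!\yn \vert \X_i}$) equals~$1$, the natural tool is Bayes' rule applied to the selection event $\set{Q_i = 1}$, combined with the tower property to evaluate the overall selection probability.

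First I would compute the marginal selection probability by the law of total probability, averaging the per\=/point success probability over the marginal:
\[ \Pr\sbrack{Q_i = 1} = \expectS{\xrv \sim \marg{}}{\ppostRV{}{\yn}} = \nprior{} = 1 - \pr, \]
since the expected negative posterior under the marginal collapses to the negative prior. I would then apply Bayes' rule to obtain the law of a retained point,
\[ \pbase{}{\X \vert Q_i = 1} = \frac{\Pr\sbrack{Q_i = 1 \vert \X}\,\marg{}}{\Pr\sbrack{Q_i = 1}} = \frac{\ppost{}{\yn}\,\marg{}}{1 - \pr} = \condfull{}{\yn} = \nlike{}, \]
where the penultimate equality is again Bayes' rule run in the reverse direction (using the product\=/rule identity $\ppost{}{\yn}\,\marg{} = \condfull{}{\yn}\,\nprior{}$). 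This pins down the marginal law of each kept sample as exactly~$\nlike{}$.

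The only remaining --- and slightly delicate --- point is independence, which I expect to be the main obstacle, though it is routine rather than deep. Because the pairs $(\X_i, Q_i)$ are generated independently across~$i$, and each~$Q_i$ depends only on its own~$\X_i$, conditioning on the realized selection pattern does not couple the retained coordinates: given the indices that are kept, the corresponding~$\X_i$ factorize and are each distributed as $\pbase{}{\X \vert Q_i = 1} = \nlike{}$. The subtlety is merely that $\abs{\nset{}}$ is itself random, so I would frame the argument as a thinning of an i.i.d.\ sample --- conditioning coordinatewise on $\set{Q_i = 1}$ and invoking independence across~$i$ --- to conclude that $\nset{}$ is an i.i.d.\ sample from~$\nlike{}$, i.e., SCAR with respect to the negative class\=/conditional.
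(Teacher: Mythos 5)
Your proposal is correct and takes essentially the same approach as the paper's proof: both apply Bayes' rule to the selection event, showing that retaining each marginal sample with probability equal to its negative posterior makes its effective sampling density proportional to ${\ppost{}{\yn}\,\marg{}}$, i.e., to~$\nlike{}$. You are somewhat more thorough than the paper, which discards the normalizing constant as irrelevant to unbiasedness and never discusses independence, whereas you compute ${\Pr\sbrack{Q_i = 1} = 1 - \pr}$ explicitly and supply the thinning argument showing the retained points form an i.i.d.\ sample.
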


\begin{proof}
  By Bayes' Rule
  \begin{align*}
    \nlike{} &\propto \ppost{}{\yn} \marg{}\\
  \end{align*}
  Each ${\X_{i} \in \uset{}}$ is sampled from~$\marg{}$.   By including $\X_i$~in~$\nset{}$ only if ${Q_i = 1}$, then $\X_i$'s effective sampling probability is ${\pbase{}{\yrv = \yn \vert \X_i} \pbase{}{\X}}$. Bayes' Rule includes prior inverse~$\frac{1}{1 - \prd{}}$, where~${\prd{} = \pprior{}}$; this constant scalar can be ignored since it does not change whether~$\nset{}$ is unbiased, i.e.,~it does not affect relative probability.
\end{proof}

\paragraph{Commentary} Theorem~\ref{thm:App:TwoStep:PriorSample} states the property generally, but consider it over \acronym's~training distribution.  Probabilistic classifier~$\cdc$ is used as a surrogate for~$\ppost{\tr}{\yn}$.  Rather than \textit{soft} weighting the samples like in Theorem~\ref{thm:TwoStep:Unbiased}'s proof, sample inclusion in the negative set is a \textit{hard} ``in\=/or\=/out'' decision.  This does not change the sample's statistical properties, but it allows us to create an unweighted negative set, we denote~$\nset{\tr}$.

By Eq.~\eqref{eq:PreviousWork:NegAssume}'s assumption, $\nset{\tr}$~is representative of samples from the negative class\=/conditional distribution ${\nlike{} = \nlike{\tr} = \nlike{\te}}$.  Given a representative labeled set from the \textit{test distribution}, well-known positive-unlabeled prior estimation techniques~\citep{Ramaswamy:2016,duPlessis:2017} can be used without modification using~$\nset{\tr}$ and~$\uset{\te}$.  Be aware that these PU~prior estimation methods would return the negative-class's prior,~$\nprior{\te}$, while our risk estimators use the positive class's prior,~${\prd{\te} = 1 - \nprior{\te}}$.

We provide empirical results regarding the effect of inaccurate prior estimation's in Section~\ref{sec:App:AddRes:PriorEstimationError}.

\subsection{Proof of Theorem~\ref{thm:PURPL:BiasedConsistent}}

The definition of ``bounded loss'' is identical to the proof of Theorem~\ref{thm:AbsCorrection}.

\begin{proof}
  Consider first whether \nnpurpl{} is unbiased. \citet{duPlessis:2014} observe that the negative labeled risk can be found via decomposition where
  \begin{equation}\label{eq:App:ThmPURPL:NegDecomposition}
    (1-\prd{})\lblrisk{\nsub}{\yhat} = \lblrisk{\usub}{\yhat} - \prd{} \lblrisk{\psub}{\yhat}\text{.}
  \end{equation}
  \noindent
  The positive labeled risk similarly decomposes as
  \begin{equation}\label{eq:App:ThmPURPL:PosDecomposition}
    \prd{}\lblrisk{\psub}{\yhat} = \lblrisk{\usub}{\yhat} - (1 - \prd{}) \lblrisk{\nsub}{\yhat}\text{.}
  \end{equation}
  \noindent
  Applying these decompositions along with Eq.~\eqref{eq:PreviousWork:NegAssume}'s assumption yields an unbiased version of~\nnpurpl:
  \begin{equation}%
    \rskUPURPL = \underbrace{\elblrsk{\tesub\usub}{+} - (1 - \prd{\te})\underbrace{\frac{\elblrsk{\trsub\usub}{+} - \prd{\tr}\elblrsk{\trsub\psub}{+}}{1 - \prd{\tr}}}_{\elblrsk{\tesub\nsub}{+}}}_{\prd{\te}\elblrsk{\tesub\psub}{+}} + (1 - \prd{\te}) \underbrace{\frac{\elblrsk{\trsub\usub}{-} - \prd{\tr}\elblrsk{\trsub\psub}{-}}{1 - \prd{\tr}}}_{\elblrsk{\tesub\nsub}{-}} \text{.}
  \end{equation}

  Since ${\forall_{t} \lbase{t}{} \geq 0}$, it always holds that labeled risk~${\lblrisk{\gsub}{\yhat} \geq 0}$.  When using risk decomposition (i.e.,~Eqs.~\eqref{eq:App:ThmPURPL:NegDecomposition} and~\eqref{eq:App:ThmPURPL:PosDecomposition}) to empirically estimate a labeled risk, it can occur that~${\elblrsk{\gsub}{\yhat} < 0}$.  Absolute-value correction addresses these obviously implausible risk estimates. The unrolled definition of the \nnpurpl~risk estimator with absolute\=/value correction is:
  \begin{equation}\label{eq:App:Proofs:PURPL}
    \rskPURPL = \Bigg\lvert\underbrace{\elblrsk{\tesub\usub}{+} - (1 - \prd{\te}) \bigg\lvert \underbrace{\frac{\elblrsk{\trsub\usub}{+} - \prd{\tr}\elblrsk{\trsub\psub}{+}}{1 - \prd{\tr}}}_{\elblrsk{\tesub\nsub}{+}} \bigg\rvert}_{\prd{\te}\elblrsk{\tesub\psub}{+}}\Bigg\lvert
               + (1 - \prd{\te}) \Bigg\lvert \underbrace{\frac{\elblrsk{\trsub\usub}{-} - \prd{\tr}\elblrsk{\trsub\psub}{-}}{1 - \prd{\tr}}}_{\elblrsk{\tesub\nsub}{-}} \Bigg\rvert \text{.}
  \end{equation}

  Clearly, ${\rskPURPL \geq \rskUPURPL}$.  For $\rskPURPL$ to be unbiased, equality must strictly hold.  This only occurs if the absolute\=/value is never needed, i.e.,~has probability~0 of occurring.

  Next consider whether \nnpurpl{} is consistent.  Theorem~\ref{thm:AbsCorrection} showed that $\lblAbsR{\nsub}{\yhat}$ is consistent.  Following the same logic in Theorem~\ref{thm:AbsCorrection}'s proof, it is straightforward to see that when performing decomposition on~$\lblrisk{\psub}{\yhat}$, $\lblAbsR{\psub}{\yhat}$~is also consistent.

  It follows by induction that \nnpurpl{} (and any similarly\=/defined recursive risk estimator) is consistent.  Theorem~\ref{thm:AbsCorrection} shows the consistency of the base case where both composite terms (e.g.,~$\lblrisk{\usub}{\yhat}$ and~$\lblrisk{\CallComplement}{\yhat}$ in Eq.~\eqref{eq:PURPL:RiskDecomposition}) were estimated directly from training data.  By induction, it is again straightforward from Theorem~\ref{thm:AbsCorrection} that any decomposed term (e.g.,~$\lblrisk{\CallDist}{\yhat}$ in Eq.~\eqref{eq:PURPL:RiskDecomposition}) formed from the sum of consistent estimators must be itself consistent.

  Theorem~\ref{thm:AbsCorrection} further demonstrated that applying absolute\=/value correction does not affect the consistency of a risk estimator. Therefore, any recursive risk estimator with absolute-value correction is consistent.  \nnpurpl{}'s consistency is just a single, specific example of this general property.
\end{proof}
\section{Non-Negativity Correction Empirical Risk Minimization Algorithms}\label{sec:App:ErmAlgs}

\citet{Kiryo:2017}'s non\=/negativity correction algorithm uses the ${\max\set{0,\cdot}}$~term to ensure a plausible risk estimate.  Unlike our simpler absolute-value correction described in Section~\ref{sec:AbsValCorrection}, \citeauthor{Kiryo:2017}'s non\=/negativity correction requires a custom empirical risk minimization (ERM)~procedure.  This section presents the custom ERM~algorithms required if non\=/negativity correction is used for our two\=/step methods and \nnpurpl.

\subsection{Two-Step, Non-Negativity ERM Algorithm}\label{sec:App:ERM:TwoStep}

The weighted-unlabeled, unlabeled~(\wuu) risk estimator with non\=/negativity correction is defined as:
\begin{equation}
  \rskNnWUU \defeq \max\Big\{0, \elblrsk{\tesub\usub}{+} - (1 - \prd{\te}) \trsk{\nuSub}{+}\Big\} +  (1 - \prd{\te}) \trsk{\nuSub}{-} \text{.}
\end{equation}
\noindent
The arbitrary-positive, negative, unlabeled~(\bpnu) risk estimator with non\=/negativity correction is similarly defined as:
\begin{equation}
  \rskNnBPNU \defeq (1 - \pnutrade) \prd{\te} \lblrisk{\psub}{+} + (1 - \prd{\te}) \trsk{\nuSub}{-} + \pnutrade \max\Big\{0,\lblrisk{\tesub{\usub}}{+} - (1 - \prd{\te}) \trsk{\nuSub{}}{+}\Big\} \text{.}
\end{equation}
\noindent
Like their counterparts with absolute\=/value correction, both $\rskNnWUU$ and $\rskNnBPNU$~are consistent estimators.

Algorithm~\ref{alg:App:ERM:TwoStep} shows the custom ERM~framework for~$\rskNnWUU$ and~$\rskNnBPNU$ with integrated ``defitting.'' The algorithm learns parameters~$\params$ for decision function~$\dec$. The non-negativity correction occurs whenever \eqsmall{${\elblrsk{\tesub{\usub}}{+} - (1 - \prd{\te}) \trsk{\nuSub{}}{+} < 0}$} (see line~7). The basic algorithm is heavily influenced by the stochastic optimization algorithm proposed by~\citet{Kiryo:2017}.

\begin{algorithm}[h]
  \caption{\wuu{} and \bpnu{} with non-negativity correction custom ERM~procedure}\label{alg:App:ERM:TwoStep}
\textbf{Input}: Datasets $(\pset{}, \nsetPS{}, \uset{\te})$, hyperparameters~$(\gradAtten, \learningRate)$ and risk estimator~${\ersk{\text{TS}} \in \set{\rskNnWUU, \rskNnBPNU}}$

\textbf{Output}: Decision function $\dec$'s parameters~$\params$

\begin{algorithmic}[1]
  \STATE Select SGD\=/like optimization algorithm~$\learner$
  \WHILE{Stopping criteria not met}
    \STATE Shuffle $(\pset{},\nsetPS{},\uset{\te})$ into $N$~batches
    \FOR{\textbf{each} minibatch ${(\batch{\pset{}}, \batch{\nsetPS{}}, \batch{\uset{\te}})}$}
      \IF{$\elblrsk{\tesub{\usub}}{+} - (1 - \prd{\te}) \trsk{\nuSub{}}{+} < 0$}
        \STATE Set gradient ${-\grad\left(\elblrsk{\tesub{\usub}}{+} - (1 - \prd{\te}) \trsk{\nuSub{}}{+} \right)}$
        \STATE Update $\params$ by $\learner$ with attenuated learning rate~${\gradAtten \learningRate}$\label{alg:TwoStep:ERM:Defit}
      \ELSE
        \STATE Set gradient ${\grad \ersk{\text{TS}}}$
        \STATE Update $\params$ by $\learner$ with default learning rate~$\learningRate$
      \ENDIF
    \ENDFOR
  \ENDWHILE
  \STATE \RETURN $\params$ minimizing validation loss
\end{algorithmic}
 \end{algorithm}

Algorithm~\ref{alg:App:ERM:TwoStep} terminates after a fixed epoch count (see Table~\ref{tab:App:GeneralHyperparams} for the number of epochs used for each dataset). Although not shown in Algorithm~\ref{alg:App:ERM:TwoStep}, the validation loss is measured at the end of each epoch.  The algorithm returns the model parameters with the lowest validation loss.

\subsection{\nnpurpl{} Non-Negativity ERM Algorithm}\label{sec:App:ERM:PURPL}

For readability and compactness, let ${\MyMax{a} \defeq \max\set{0,a}}$.  \nnpurpl{}~with non\=/negativity correction is defined as
\begin{equation}
   \rskNnPURPL \defeq \Bigg[\underbrace{\elblrsk{\tesub\usub}{+} - (1 - \prd{\te})\Bigg[\underbrace{\frac{\elblrsk{\trsub\usub}{+} - \prd{\tr}\elblrsk{\trsub\psub}{+}}{1 - \prd{\tr}}}_{\elblrsk{\tesub\nsub}{+}}\Bigg]_{+}}_{\prd{\te}\elblrsk{\tesub\psub}{+}}\Bigg]_{+} + (1 - \prd{\te}) \Bigg[ \underbrace{\frac{\elblrsk{\trsub\usub}{-} - \prd{\tr}\elblrsk{\trsub\psub}{-}}{1 - \prd{\tr}}}_{\elblrsk{\tesub\nsub}{-}} \Bigg]_{+} \text{.}
\end{equation}
\noindent
Like \eqsmall{$\rskPURPL$}~from Section~\ref{sec:PURPL}, \eqsmall{$\rskNnPURPL$}~is a consistent estimator.

When a risk estimator only has a single term that can be negative (like~\nnPU{},~\eqsmall{$\rskNnWUU$}, and~\eqsmall{$\rskNnBPNU$}), the custom non\=/negativity ERM~framework is relatively straightforward as shown in Algorithm~\ref{alg:App:ERM:TwoStep}.  However, \eqsmall{$\rskNnPURPL$}~has three non\=/negativity corrections --- one of which is nested inside another non\=/negativity correction.

Algorithm~\ref{alg:PURPL:ERM} details \eqsmall{$\rskNnPURPL$}'s custom ERM~procedure with learning rate~$\learningRate$. Each non-negativity correction is individually checked with the ordering critical. The optimizer minimizes risk on positive set~$\pset{}$ by both decreasing \eqsmall{$\elblrsk{\psub}{+}$} and increasing~\eqsmall{$\elblrsk{\psub}{-}$}. In contrast, each unlabeled example’s minimizing risk is uncertain. This creates explicit tension and uncertainty for the optimizer. This enforced trade-off over the best unlabeled risk commonly delays or counteracts unlabeled set overfitting. As such, overfitting is most likely with labeled (positive) data.  When that occurs, \eqsmall{$\elblrsk{\trsub\psub}{-}$} increases significantly making \eqsmall{$\elblrsk{\tesub\nsub}{-}$} most likely to be negative so its non\=/negativity is checked first (line~5). Nested term~\eqsmall{$\elblrsk{\tesub\nsub}{+}$} receives second highest priority since whenever its value is implausible, any term depending on it, e.g.,~\eqsmall{$\elblrsk{\tesub\psub}{+}$}, is meaningless.  By elimination, \eqsmall{$\elblrsk{\tesub\psub}{+}$}~has lowest priority.

Algorithm~\ref{alg:PURPL:ERM} applies non-negativity correction by negating risk~\eqsmall{$\elblrsk{A}{\yhat}$}'s gradient~(see Eq.~\eqref{eq:PURPL:RiskDecomposition}).  This addresses overfitting by ``defitting''~$\dec$.  A large negative gradient can push~$\dec$ into a poor parameter space so hyperparameter~${\gradAtten \in (0,1]}$ limits the amount of correction by attenuating gradient magnitude.

\begin{algorithm}[H]
  \caption{\nnpurpl\ with non-negativity correction custom ERM procedure}\label{alg:PURPL:ERM}
\textbf{Input}: Datasets $(\pset{}, \uset{\tr}, \uset{\te})$  \& hyperparameters~$(\gradAtten,\learningRate)$

\textbf{Output}: Decision function $\dec$'s parameters~$\params$

\begin{algorithmic}[1]
  \STATE Select SGD\=/like optimization algorithm~$\learner$
  \WHILE{Stopping criteria not met}
    \STATE Shuffle $(\pset{},\uset{\tr},\uset{\te})$ into $N$~batches
    \FOR{\textbf{each} minibatch ${(\batch{\pset{}}, \batch{\uset{\tr}}, \batch{\uset{\te}})}$}
    \IF{$\elblrsk{\tesub\nsub}{-} < 0$}
        \STATE Use~$\learner$ to update~$\params$ with $-\gradAtten \learningRate \grad\elblrsk{\tesub\nsub}{-}$
    \ELSIF{$\elblrsk{\tesub\nsub}{+} < 0$}
        \STATE Use~$\learner$ to update~$\params$ with $-\gradAtten \learningRate \grad\elblrsk{\tesub\nsub}{+}$
    \ELSIF{$\elblrsk{\tesub\psub}{+} < 0$}
        \STATE Use~$\learner$ to update~$\params$ with $-\gradAtten \learningRate \grad\elblrsk{\tesub\psub}{+}$
      \ELSE
        \STATE Use~$\learner$ to update~$\params$ with $\learningRate \grad\rskNnPURPL$
      \ENDIF
    \ENDFOR
  \ENDWHILE
  \STATE \RETURN $\params$ minimizing validation loss
\end{algorithmic}
 \end{algorithm}
 
\suppressfloats
\section{Detailed Experimental Setup}\label{sec:App:ExperimentSetup}

This section details the experimental setup used to collect the results in Sections~\ref{sec:ExpRes} and~\ref{sec:App:AddRes}.

\subsection{Reproducing our Experiments}

Our implementation is written and tested in Python~3.6.5 and~3.7.1 using the \texttt{PyTorch}~\citep{PyTorch} neural network framework versions~1.3.1 and~1.4. The source code is available at: \href{\sourceCode}{\sourceCode}.  The repository includes file \texttt{requirements.txt} that details Python package dependency information.

To run the program, invoke:

\hspace{1cm}\texttt{python driver.py \underline{ConfigFile}}

\noindent
where \texttt{ConfigFile} is a \texttt{yaml}-format text file specifying the experimental setup.  Repository folder ``\texttt{src/configs}'' contains the configuration files for the experiments in Sections~\ref{sec:ExpRes},~\ref{sec:App:AddRes:Synthetic}, and~\ref{sec:App:AddRes:MarginalBias}.  Prior probability shifts can be made by modifying the configuration files (see \texttt{yaml} fields \texttt{train\_prior} and \texttt{test\_prior}).

\paragraph{Datasets} Our program automatically retrieves all necessary data.  Synthetic data is generated by the program itself.  Otherwise the dataset is downloaded automatically from the web.  If you have trouble downloading any datasets, please verify that your network/firewall ports are properly configured.

\subsection{Class Definitions}

\subsubsection{Partially and Fully Disjoint Positive Distribution Supports}\label{sec:App:ClassDef:DisjointSupports}

Section~\ref{sec:ExpRes:NonidenticalSupports}'s experimental setups are very similar to \citet{Hsieh:2019}'s experiments for positive, unlabeled, biased-negative learning. We even follow \citeauthor{Hsieh:2019}'s label partitions.  The basic rationale motivating the splits are:
\begin{itemize}
  \item \textbf{MNIST}: Odd~(positive class) vs.\ even~(negative class) digits. Each digit's frequency in the original dataset is approximately~0.1 making each class's target prior~${5 * 0.1 = 0.5}$.
  \item \textbf{20~Newsgroups}: As its name suggests, the 20~Newsgroups dataset consists of 20~disjoint labels. Categories are formed by partitioning those 20~labels into 7~groups based on the corresponding text documents' general theme. Our classes are formed by splitting the categories into two disjoint sets. Specifically, the positive-test class consists of documents with labels~0 to~10 in the original dataset.  The negative class is comprised of documents whose labels in the original dataset are~11\=/19.  This split's actual positive prior probability is approximately~0.56.\footnote{We used the latest version of the 20~Newsgroups dataset with duplicates and cross-posts removed.}

  \item \textbf{CIFAR10}: Inanimate objects~(positive class) vs.\ animals~(negative class).  CIFAR10 is a multiclass dataset with ten labels.  Each label is equally common in the training and test set, i.e.,~has prior~0.1.  Since CIFAR10's positive-test class has exactly four labels (e.g.,~plane, automobile, truck, and ship), the positive-test prior is~${4 * 0.1 = 0.4}$.
\end{itemize}
\noindent
For this experiment set, the distribution shift between train and test is premised on new subclasses emerging in the test distribution (e.g.,~due to novel adversarial attacks or systematic failure to collect data on a positive subpopulation in the original dataset).

\subsubsection{TREC Spam Classification}\label{sec:App:SpamDatasets}

As noted previously, PU~learning has been applied to multiple adversarial domains including opinion spam~\citep{Hernandez:2013,Li:2014,Zhang:2017,Zhang:2019:Malware}.  We use spam classification as a vehicle for testing our method in an adversarial domain.

Clearly, email spam classification is not a scenario where PU~learning would generally be applied.  Labeled data for both classes is generally plentiful (especially at the corporate level), and for most modern email systems, spam classification is a solved problem.  For our purposes, spam email provides a good avenue for demonstrating our methods' performance in an adversarial setting for multiple reasons, including:
\begin{itemize}
  \item The positive class (i.e.,~spam) evolves significantly faster than the negative class (i.e.,~not spam or ``ham'').
  \item Our fixed negative class\=/conditional distribution assumption (i.e.,~Eq.~\eqref{eq:PreviousWork:NegAssume}) will not explicitly hold.  This more closely represents what will be encountered ``in-the-wild.''
  \item Public spam/ham datasets exist eliminating the need to use our own proprietary adversarial learning dataset.
  \item Email dates provide a realistic criteria for partitioning the training and test datasets.
\end{itemize}
\noindent
To be clear, what we propose here is not intended as a plausible, deployable spam classifier.  Rather, we show that our methods apply to real-world adversarial domains.

\paragraph{Dataset Construction} The \underline{T}ext \underline{RE}trieval \underline{C}onference~(TREC) is organized annually be the United States' National Institute of Standards and Technology~(NIST) to support information retrieval research~\citep{TREC}.  In~2005, 2006, and~2007, TREC arranged \href{https://trec.nist.gov/data/spam.html}{annual spam classifier competitions} where they released corpuses of spam and ham (i.e.,~not spam) emails.

As detailed in Table~\ref{tab:app:TrecClasses}, the training set consisted of the TREC~2005 (TREC05)~email dataset\footnote{The raw TREC05~emails can be downloaded from~\url{https://plg.uwaterloo.ca/~gvcormac/treccorpus/}.} while the test set was the TREC~2007 (TREC07)~email dataset\footnote{The raw TREC07 emails can be downloaded from~\url{https://plg.uwaterloo.ca/~gvcormac/treccorpus07/}.}.  Basic statistics for the two datasets appear in Table~\ref{tab:app:TrecStats}.

The two sets of emails come from different domains.  TREC05's ham emails derive largely from the Enron dataset.  In contrast, TREC07's emails were received by a particular server between April and July~2007. Many of the ham emails were received by the University of Waterloo where the datasets were curated.

Due to the extended time required to encode all emails using the ELMo~embedder (see Section~\ref{sec:App:TrecRepresentation}), we consider the first 10,000~emails from each dataset as defined by the dataset's \mbox{\texttt{full/index}} file.

\begin{table}[ht]
  \centering
  \caption{TREC05 \& TREC07 dataset statistics}\label{tab:app:TrecStats}
  \begin{tabular}{@{}lrr@{}}
    \toprule
                     & TREC05                & TREC07 \\\midrule
    Dataset Size     & 92,189                & 75,419 \\
    Approx. \% Spam  & \textasciitilde 57\%  & \textasciitilde 66\%  \\
    \bottomrule
  \end{tabular}
\end{table}

\begin{table}[ht]
  \centering
  \caption{TREC spam email classification datasets}\label{tab:app:TrecClasses}
  \begin{tabular}{@{}ll@{}}
    \toprule
    Class       & Definition \\\midrule
    Pos.\ Train & TREC05 Spam \\
    Neg.\ Train & TREC05 Ham \\
    Pos.\ Test  & TREC07 Spam \\
    Neg.\ Test  & TREC07 Ham \\
    \bottomrule
  \end{tabular}
\end{table}

\subsubsection{Identical Positive Supports with Bias}

Table~\ref{tab:App:ClassDef:LIBSVM} defines the positive and negative classes for the 10~LIBSVM datasets used in Section~\ref{sec:App:AddRes:MarginalBias}.  Label~``${+1}$'' always corresponded to the positive class. In two-class (binary) datasets, the other label was the negative class.  For multiclass datasets (e.g.,~connect4), whichever other class had the most examples was used as the negative class.

\begin{table}[ht]
  \centering
  \caption{Positive \& negative class definitions for the LIBSVM datasets in Section~\ref{sec:App:AddRes:MarginalBias}}\label{tab:App:ClassDef:LIBSVM}
  \begin{tabular}{@{}lrrr@{}}
    \toprule
    \multirow{2}{*}{Dataset} & \multirow{2}{*}{$d$} & \multirow{2}{*}{\shortstack[l]{Pos. \\ Class}} & \multirow{2}{*}{\shortstack[l]{Neg. \\ Class}} \\
                &         &      & \\
    \midrule
    banana      & 2       &  +1  &   2  \\
    cod-rna     & 8       &  +1  & --1  \\
    susy        & 18      &  +1  &   0  \\
    ijcnn1      & 22      &  +1  & --1  \\
    covtype.b   & 54      &  +1  &   2  \\
    phishing    & 68      &  +1  &   0  \\
    a9a         & 123     &  +1  & --1  \\
    connect4    & 126     &  +1  & --1  \\
    w8a         & 300     &  +1  & --1  \\
    epsilon     & 2,000   &  +1  & --1  \\
    \bottomrule
  \end{tabular}
\end{table}

\subsection{\bpu~Selection Bias Invariance of Order}

Section~\ref{sec:App:AddRes:PUSB}'s experiments follow the invariance of order assumption as proposed and implemented by \citet{Kato:2019}.  Their original experiments considered the MNIST dataset.  For completeness, we expand our comparison to their method to also include the MNIST~variants, FashionMNIST~\citep{FashionMNIST} and KMNIST~\citep{KMNIST}. Like MNIST, both FashionMNIST and KMNIST are multiclass datasets consisting of 10~disjoint labels.  As described in Section~\ref{sec:App:ClassDef:DisjointSupports}, binary classes are formed by partitioning the original set of labels.

As before, MNIST splits the labels between odds~(positive class) and evens~(negative class).  For consistency, we used the same odd/even label partition for FashionMNIST and KMNIST. Note that such a partitioning lacks a corresponding semantic meaning for those two datasets.

\subsection{Training, Validation, and~Test Set Sizes}\label{app:DatasetSizes}

Table~\ref{tab:App:DatasetSizes:Default} lists the default size of each dataset's positive~($\pset{}$), unlabeled train~($\uset{\tr}$), unlabeled test~($\uset{\te}$), and inductive test sets. All LIBSVM datasets (e.g.,~susy, a9a, etc.\ in Section~\ref{sec:App:AddRes:MarginalBias}) used the dataset sizes defined by~\citet{Sakai:2019}.  The separate validation set -- made up of only positive and unlabeled examples -- was one-fifth Table~\ref{tab:App:DatasetSizes:Default}'s training set sizes.  Each learner observed identical dataset splits in each trial.

\begin{table}[t]
  \caption{Each dataset's default training set sizes.  LIBSVM denotes all datasets downloaded directly from~\citep{LIBSVM} and used in Section~\ref{sec:App:AddRes:MarginalBias}. All quantities in the table do \textit{not} include the validation set.}\label{tab:App:DatasetSizes:Default}
  \centering
\setlength{\tabcolsep}{9.5pt}

\newcommand{\pusbarrows}{\multicolumn{2}{c}{\small $\leftarrow$~See Sec.~\ref{sec:App:AddRes:PUSB}~~$\rightarrow$}}

\begin{tabular}{@{}lrrrr@{}}
  \toprule
  Dataset        & $\np$      & $\nTrU$    & $\nTeU$    & $\nTest$ \\\midrule
  MNIST          & 1,000      & 5,000      & 5,000      & 5,000   \\
  20 Newsgroups  & 500        & 2,500      & 2,500      & 5,000   \\
  CIFAR10        & 1,000      & 5,000      & 5,000      & 3,000   \\\hdashline
  TREC Spam      & 500        & 1,250      & 1,250      & 1,000   \\\hdashline
  Synthetic      & 1,000      & 1,000      & 1,000      & N/A     \\\hdashline
  LIBSVM         & 250        & 583        & 583        & 2,000   \\\hdashline
  FashionMNIST   & 833        & \pusbarrows             & 5,000   \\
  KMNIST         & 833        & \pusbarrows             & 5,000   \\
  \bottomrule
\end{tabular}
 \end{table}

Special inductive test set sizes were needed for two of Section~\ref{sec:ExpRes:NonidenticalSupports}'s disjoint positive-support experiments.  To understand why, consider the MNIST disjoint-support experiment (i.e.,~the fourth MNIST row in Table~\ref{tab:ExpRes:Nonoverlap}) where the negative class~(N) is comprised of labels~$\set{0,2}$ and the positive-test class~(\pDsTest) is composed of labels~$\set{5,7}$.  Each label has approximately 1,000~examples in the dedicated test set meaning there are approximately 4,000~total test examples between the negative and positive classes.  However, MNIST's default inductive test set size~($\nTest$) is~5,000 (see Table~\ref{tab:App:DatasetSizes:Default}).  Rather than duplicating test set examples, we reduced MNIST's~$\nTest$ to~1,500 \textit{for the disjoint positive-support experiments only}. 20~Newsgroups has the same issue so its disjoint-positive support~$\nTest$ was also reduced as specified in Table~\ref{tab:App:DatasetSizes:DisjointSupport}.  To be clear, for all other datasets and experimental setups in Sections~\ref{sec:ExpRes:NonidenticalSupports},~\ref{sec:ExpRes:Spam}, \ref{sec:App:AddRes:Synthetic}, and~\ref{sec:App:AddRes:MarginalBias}, Table~\ref{tab:App:DatasetSizes:Default} applies.

\begin{table}[ht]
  \centering
  \caption{Smaller MNIST and 20~Newsgroups inductive test set sizes, i.e.,~$\nTest$, used in the disjoint-support experiments.}\label{tab:App:DatasetSizes:DisjointSupport}
  \begin{tabular}{@{}lr@{}}
    \toprule
    Dataset        & $\nTest$ \\\midrule
    MNIST          & 3,000    \\
    20 Newsgroups  & 1,500    \\
    \bottomrule
  \end{tabular}
\end{table}

MNIST, 20~Newsgroups, and CIFAR10 have predefined test sets, which we exclusively used to collect the inductive results.  They were not used for training or validation.  Only some LIBSVM datasets have dedicated test sets, and for those that do, \citet{Sakai:2019} do not specify whether the test set was held out in their experiments. When applicable, we merge the LIBSVM train and test datasets together as if there was only a single monolithic training set.  $\pset{}$,~$\uset{\tr}$,~$\uset{\te}$~and the inductive test set are independently sampled at random from this monolithic set without replacement.

Since the \PUc~formulation is convex, \citeauthor{Sakai:2019} train their final model on the combined training and validation set.

\subsection{CIFAR10 Image Representation}\label{sec:App:ExpSetup:Representation:CIFAR}

Each CIFAR10~\citep{CIFAR10} image is 32~pixels by 32~pixels with three~(RGB) color channels (3,072~dimensions total). \PUc{} specifies a convex model so it cannot be used to train (non\=/convex) deep convolutional networks directly.  To ensure a meaningful comparison, we leveraged the DenseNet\=/121 deep convolutional network architecture pretrained on 1.2~million images from ImageNet~\citep{DenseNet}. The network's (linear) classification layer was removed, and the experiments used the 1,024-dimension feature vector output by DenseNet's convolutional backbone.

\subsection{20~Newsgroups Document Representation}\label{sec:App:20NewsRepresentation}

The 20~Newsgroups dataset is a collection of internet discussion board posts. The original dataset consisted of 20,000 documents~\citep{20newsgroups}; it was pruned to 18,828~documents in~2007 after removal of duplicates and cross-posts~\citep{Rennie:2007}. This latest dataset has a predefined split of 11,314~train and 7,532~test documents. Similar to~CIFAR10, we use transfer learning to create a richer representation of each document.

Classic word embedding models like GloVe and Word2Vec yield token representations that are independent of context. Proposed by \citet{Peters:2018}, ELMo (\underline{e}mbeddings for \underline{l}anguage \underline{mo}dels) enhances classic word embeddings by making the token representations context dependent.  We use ELMo to encode each 20~Newsgroup document as described below.

ELMo's embedder consists of three sequential layers --- first a character convolutional neural network (CNN) provides subword information and improves unknown word robustness. The CNN's output is then fed into a two-layer, bidirectional LSTM\@.  The output from each of ELMo's layers is a 1,024\=/dimension vector.  For a token stream of length~$m$, the output of ELMo's embedder would be a tensor of size~${\langle \textnormal{\#Layers} \times d_{\textnormal{layer}} \times \textnormal{\#Tokens} \rangle}$ --- in this case~${\langle 3 \times 1024 \times m \rangle}$.

Like~\citet{Hsieh:2019} who used this encoding scheme for positive, unlabeled, biased-negative~(PUbN)~(PUbN) learning, we used \citet{Ruckle:2018}'s sentence representation encoding scheme, which takes the minimum, maximum, and average value along each ELMo~layer's output dimension.  The dimension of the resulting document encoding is:
\begin{equation*}
  \abs{\set{\max,\min,\textnormal{avg}}} \cdot \textnormal{\#Layers} \cdot d_{\textnormal{layer}} = 3 \cdot 3 \cdot 1024 = 9,216\text{.}
\end{equation*}

When documents are encoded serially, each document implicitly contains information about all preceding documents. Put simply, the order documents are processed affects each document's final encoding.  For consistency, all 20~Newsgroups experiments used a single identical encoding for all learners.

The Allen Institute for Artificial Intelligence has published multiple pretrained ELMo~models.  We used the ELMo model trained on a 5.5~billion token corpus --- 1.9~billion from Wikipedia and 3.6~billion from a news crawl.  We chose this version because ELMo's developers report that it was the best performing.

\subsection{TREC Email Representation}\label{sec:App:TrecRepresentation}

The TREC05 and TREC07 emails are encoded using the ELMo embedder identical to 20~Newsgroups. See Section~\ref{sec:App:20NewsRepresentation} above for the details.

\subsection{Models and Hyperparameters}\label{sec:App:ExpSetup:Hyperparameters}

This section reviews the experiments' hyperparameter methodology.

As specified by its authors, \PUc's hyperparameters were tuned via importance-weighted cross validation (IWCV)~\citep{Sugiyama:Krauledat:2007}. \PUc's author\=/supplied implementation includes a built-in hyperparameter tuning architecture that we used without modification.

Our hyperparameters and best\=/epoch weights were selected using the validation loss (using the associated risk estimation) on a validation set. Our experiments' hyperparameters can be grouped into two categories.  First, some hyperparameters~(e.g.,~number of epochs) apply to most/all learners (excluding \PUc).  The second category's hyperparameters are individualized to each learner and were used for all of that learner's experiments on the corresponding dataset.

Table~\ref{tab:App:GeneralHyperparams} enumerates the general hyperparameter settings that applied to most/all learners.  Batch sizes were selected based on the dataset sizes (see Tables~\ref{tab:App:DatasetSizes:Default} and~\ref{tab:App:DatasetSizes:DisjointSupport}) while the epoch count was determined after monitoring the typical time required for the best validation loss to stop (meaningfully) changing. A grid search was used to select each dataset's layer count; we specifically searched set~$\set{1, 2,3}$ for $\dec$ and $\set{0,1,2}$ for~$\cdc$. With the exception of the output layer, each linear layer used ReLU~activation and batch normalization~\citep{Ioffe:2015}. The selected layer count minimized the median validation loss across all learners.

Tables~\ref{tab:App:Hyperparams:Ours},~\ref{tab:App:Hyperparams:nnPU}, and~\ref{tab:App:Hyperparams:PN} enumerate the final hyperparameter settings for our models, nnPU, and the positive-negative (PN)~learners respectively.  The selected hyperparameter setting had the best average validation loss across 10~independent trials. We also used a grid search for these parameters.  The search space was: learning rate ${\learningRate \in \set{10^{-5}, 10^{-4}, 10^{-3}}}$, weight decay ${\weightDecay \in \set{10^{-4}, 10^{-3}, 5\cdot10^{-3}, 10^{-2}, 10^{-1}}}$, and (where applicable) gradient attenuator ${\gradAtten \in \set{0.1, 0.5, 1.0}}$\footnote{Hyperparameter~$\gradAtten$ only applies when using \citet{Kiryo:2017}'s non\=/negativity correction.  $\gradAtten$~is not considered by our absolute\=/value correction.}.

By monitoring the (implausible) validation loss during Step~\#1, we observed overfitting when using the rich ELMo~representations for the 20~Newsgroups and TREC email datasets.  To address this, we added a dropout layer (with probability ${p = 0.5}$)  before the input to each linear (i.e.,~fully-connected) layer.  It is uncommon to use dropout even on the input dimension. However, we deliberately made this choice to still allow dropout even if we use a strictly linear\=/in\=/parameter model.  Dropout was not used for any other dataset.

\begin{table}[ht]
  \centering
  \caption{General hyperparameter settings}\label{tab:App:GeneralHyperparams}
\setlength{\tabcolsep}{9.5pt}

\newcommand{\batchsizearrow}[1]{\multicolumn{3}{c}{$\longleftarrow$~~#1~~$\longrightarrow$}}
\newcommand{\secPUSBArrow}{\batchsizearrow{See Section~\ref{sec:App:AddRes:PUSB}}}

\begin{tabular}{@{}lrrrrrrl@{}}
  \toprule
  \multirow{2}{*}{Dataset} & \multirow{2}{*}{\#Epoch} & \multicolumn{2}{c}{Layer Count} & \multicolumn{3}{c}{Batch Size} & \multirow{2}{*}{Dropout?} \\\cmidrule(lr){3-4}\cmidrule(lr){5-7}
                 &     & $\decx$ & $\cdcx$ & $\decx$ & $\cdcx$ & \PNte{} \\\midrule
  MNIST          & 200 & 3   & 1   & 5,000   & 5,000  & 4,000 &              \\
  20~Newsgroups  & 200 & 1   & 1   & 5,000   & 2,500  & 2,000 & $\checkmark$ \\
  CIFAR10        & 200 & 2   & 1   & 10,000  & 2,500  & 1,500 &              \\\hdashline
  TREC Spam      & 200 & 1   & 0   & 1,000   & 1,000  & 1,000 & $\checkmark$ \\\hdashline
  Synthetic      & 100 & N/A & N/A & 2,000   & 750    & 500   &              \\\hdashline
  banana         & 500 & 3   & 2   & 500     & 750    & 500   &              \\
  cod-rna        & 500 & 2   & 1   & 500     & 750    & 500   &              \\
  susy           & 500 & 2   & 2   & 500     & 750    & 500   &              \\
  ijcnn1         & 500 & 2   & 2   & 500     & 750    & 500   &              \\
  covtype.b      & 500 & 3   & 1   & 500     & 750    & 500   &              \\
  phishing       & 500 & 2   & 2   & 500     & 750    & 500   &              \\
  a9a            & 500 & 2   & 2   & 500     & 750    & 500   &              \\
  connect4       & 500 & 2   & 1   & 500     & 750    & 500   &              \\
  w8a            & 500 & 2   & 1   & 500     & 750    & 500   &              \\
  epsilon        & 500 & 1   & 0   & 500     & 750    & 500   &              \\\hdashline
  FashionMNIST   & 200 & 3   & 1   & \secPUSBArrow            &              \\
  KMNIST         & 200 & 3   & 1   & \secPUSBArrow            &              \\
  \bottomrule
\end{tabular}
 \end{table}

\begin{table}[ht]
  \centering
  \caption{Dataset-specific hyperparameter settings for our \acronym{}~learners.  Hyperparameter~$\gradAtten*$ only applies when using \citet{Kiryo:2017}'s non\=/negativity correction instead of our absolute\=/value correction.}\label{tab:App:Hyperparams:Ours}
  {\small
\setlength{\tabcolsep}{8.5pt}

\begin{tabular}{@{}lrrrrrrrrrrrr@{}}
\toprule
  \multirow{2}{*}{Dataset}  &  \multicolumn{3}{c}{\nnpurpl}  &  \multicolumn{3}{c}{$\cdc$}  &  \multicolumn{3}{c}{\bpnu}  &  \multicolumn{3}{c}{\wuu}\\\cmidrule(lr){2-4}\cmidrule(lr){5-7}\cmidrule(lr){8-10}\cmidrule(l){11-13}
                & $\learningRate$ & $\weightDecay$  & $\gradAtten^{*}$ & $\learningRate$ & $\weightDecay$  & $\gradAtten^{*}$ & $\learningRate$ & $\weightDecay$  & $\gradAtten^{*}$ & $\learningRate$ & $\weightDecay$ & $\gradAtten^{*}$ \\ \midrule
  MNIST         & \sci{1}{-3} & \sci{1}{-3} & \oneP{}   & \sci{1}{-3} & \sci{5}{-3} & \oneP{}  & \sci{1}{-3} & \sci{1}{-3} & \oneP{} & \sci{1}{-4} & \sci{5}{-3} & \oneP{}  \\
  20~Newsgroups & \sci{1}{-4} & \sci{1}{-4} & 0.5       & \sci{1}{-3} & \sci{5}{-3} & \oneP{}  & \sci{1}{-4} & \sci{1}{-4} & 0.5     & \sci{1}{-4} & \sci{1}{-4} & 0.5      \\
  CIFAR10       & \sci{1}{-3} & \sci{1}{-3} & \oneP{}   & \sci{1}{-3} & \sci{5}{-3} & \oneP{}  & \sci{1}{-3} & \sci{1}{-4} & 0.5     & \sci{1}{-3} & \sci{1}{-2} & 0.5      \\\hdashline
  TREC Spam     & \sci{1}{-3} & \sci{1}{-2} & \oneP{}   & \sci{1}{-3} & \sci{1}{-1} & \oneP{}  & \sci{1}{-3} & \sci{1}{-3} & 0.5     & \sci{1}{-3} & \sci{1}{-2} & 0.5      \\\hdashline
  Synthetic     & \sci{1}{-2} &           0 & \oneP{}   & \sci{1}{-2} &           0 & \oneP{}  & \sci{1}{-2} &           0 & \oneP{} & \sci{1}{-2} &           0 & \oneP{}  \\\hdashline
  banana        & \sci{1}{-4} & \sci{1}{-3} & 0.1       & \sci{1}{-4} & \sci{5}{-3} & \oneP{}  & \sci{1}{-5} & \sci{1}{-3} & 0.5     & \sci{1}{-3} & \sci{1}{-3} & 0.1      \\
  cod\_rna      & \sci{1}{-4} & \sci{1}{-3} & 0.5       & \sci{1}{-3} & \sci{1}{-4} & \oneP{}  & \sci{1}{-3} & \sci{1}{-3} & 0.1     & \sci{1}{-4} & \sci{1}{-3} & 0.5      \\
  susy          & \sci{1}{-5} & \sci{1}{-2} & 0.5       & \sci{1}{-4} & \sci{5}{-3} & \oneP{}  & \sci{1}{-5} & \sci{1}{-3} & 0.1     & \sci{1}{-5} & \sci{1}{-4} & 0.5      \\
  ijcnn1        & \sci{1}{-4} & \sci{1}{-3} & 0.5       & \sci{1}{-4} & \sci{5}{-3} & \oneP{}  & \sci{1}{-4} & \sci{1}{-2} & 0.5     & \sci{1}{-4} & \sci{1}{-2} & 0.5      \\
  covtype.b     & \sci{1}{-5} & \sci{1}{-3} & \oneP{}   & \sci{1}{-3} & \sci{1}{-4} & \oneP{}  & \sci{1}{-5} & \sci{1}{-3} & 0.1     & \sci{1}{-4} & \sci{1}{-3} & \oneP{}  \\
  phishing      & \sci{1}{-5} & \sci{1}{-3} & 0.5       & \sci{1}{-3} & \sci{1}{-4} & \oneP{}  & \sci{1}{-5} & \sci{1}{-3} & 0.5     & \sci{1}{-5} & \sci{1}{-3} & 0.5      \\
  a9a           & \sci{1}{-5} & \sci{1}{-4} & \oneP{}   & \sci{1}{-4} & \sci{5}{-3} & \oneP{}  & \sci{1}{-5} & \sci{1}{-4} & 0.5     & \sci{1}{-4} & \sci{1}{-3} & 0.5      \\
  connect4      & \sci{1}{-4} & \sci{1}{-3} & 0.5       & \sci{1}{-3} & \sci{1}{-4} & \oneP{}  & \sci{1}{-4} & \sci{1}{-4} & 0.5     & \sci{1}{-3} & \sci{1}{-2} & 0.5      \\
  w8a           & \sci{1}{-5} & \sci{1}{-4} & 0.5       & \sci{1}{-3} & \sci{1}{-4} & \oneP{}  & \sci{1}{-5} & \sci{1}{-3} & 0.5     & \sci{1}{-5} & \sci{1}{-2} & 0.5      \\
  epsilon       & \sci{1}{-5} & \sci{1}{-2} & 0.1       & \sci{1}{-3} & \sci{1}{-4} & \oneP{}  & \sci{1}{-5} & \sci{1}{-2} & 0.1     & \sci{1}{-4} & \sci{1}{-2} & 0.1      \\\hdashline
  FashionMNIST  & \sci{1}{-3} & \sci{1}{-3} & \oneP{}   & \sci{1}{-3} & \sci{5}{-3} & \oneP{}  & \sci{1}{-3} & \sci{1}{-3} & \oneP{} & \sci{1}{-4} & \sci{5}{-3} & \oneP{}  \\
  KMNIST        & \sci{1}{-3} & \sci{1}{-3} & \oneP{}   & \sci{1}{-3} & \sci{5}{-3} & \oneP{}  & \sci{1}{-3} & \sci{1}{-3} & \oneP{} & \sci{1}{-4} & \sci{5}{-3} & \oneP{}  \\
  \bottomrule
\end{tabular}
   }
\end{table}

\begin{table}[ht]
  \centering
  \caption{Dataset-specific hyperparameter settings for \nnPU{}.}\label{tab:App:Hyperparams:nnPU}
  {\small
\setlength{\tabcolsep}{8.5pt}

\begin{tabular}{@{}lrrrrrr@{}}
\toprule
  \multirow{2}{*}{Dataset}  &  \multicolumn{3}{c}{\nnpuAll}  &  \multicolumn{3}{c}{\nnpuTE}\\\cmidrule(lr){2-4}\cmidrule(l){5-7}
  & $\learningRate$ & $\weightDecay$ & $\gradAtten$ & $\learningRate$ & $\weightDecay$ & $\gradAtten$   \\ \midrule
  MNIST         & \sci{1}{-3} & \sci{1}{-3} & 0.5       & \sci{1}{-3} & \sci{1}{-3} & 0.5       \\
  20~Newsgroups & \sci{1}{-3} & \sci{1}{-3} & 0.5       & \sci{1}{-3} & \sci{1}{-2} & 0.5       \\
  CIFAR10       & \sci{1}{-4} & \sci{1}{-3} & 0.1       & \sci{1}{-4} & \sci{1}{-3} & 0.1       \\\hdashline
  TREC Spam     & \sci{1}{-3} & \sci{1}{-2} & 0.1       & \sci{1}{-3} & \sci{1}{-2} & 0.1       \\\hdashline
  Synthetic     & \sci{1}{-2} &           0 & \oneP{}   & \sci{1}{-2} &           0 & \oneP{}   \\\hdashline
  banana        & \sci{1}{-3} & \sci{1}{-3} & \oneP{}   & \sci{1}{-4} & \sci{1}{-3} & 0.5       \\
  cod\_rna      & \sci{1}{-3} & \sci{1}{-3} & 0.5       & \sci{1}{-3} & \sci{1}{-3} & 0.5       \\
  susy          & \sci{1}{-5} & \sci{1}{-2} & 0.1       & \sci{1}{-3} & \sci{1}{-3} & 0.5       \\
  ijcnn1        & \sci{1}{-3} & \sci{1}{-2} & 0.5       & \sci{1}{-3} & \sci{1}{-3} & 0.5       \\
  covtype.b     & \sci{1}{-3} & \sci{1}{-2} & 0.5       & \sci{1}{-3} & \sci{1}{-2} & 0.5       \\
  phishing      & \sci{1}{-3} & \sci{1}{-2} & 0.5       & \sci{1}{-3} & \sci{1}{-2} & 0.5       \\
  a9a           & \sci{1}{-3} & \sci{1}{-2} & \oneP{}   & \sci{1}{-3} & \sci{1}{-3} & 0.5       \\
  connect4      & \sci{1}{-3} & \sci{1}{-3} & 0.1       & \sci{1}{-3} & \sci{1}{-4} & \oneP{}   \\
  w8a           & \sci{1}{-3} & \sci{1}{-3} & 0.5       & \sci{1}{-3} & \sci{1}{-3} & 0.5       \\
  epsilon       & \sci{1}{-3} & \sci{1}{-3} & 0.5       & \sci{1}{-3} & \sci{1}{-3} & 0.5       \\\hdashline
  FashionMNIST  & \sci{1}{-3} & \sci{1}{-3} & 0.5       & \sci{1}{-3} & \sci{1}{-3} & 0.5       \\
  KMNIST        & \sci{1}{-3} & \sci{1}{-3} & 0.5       & \sci{1}{-3} & \sci{1}{-3} & 0.5       \\
  \bottomrule
\end{tabular}
   }
\end{table}

\begin{table}[ht]
  \centering
  \caption{Dataset-specific hyperparameter settings for the positive-negative~(PN) learners}\label{tab:App:Hyperparams:PN}
  {\small
\setlength{\tabcolsep}{8.5pt}

\begin{tabular}{@{}lrrrr@{}}
\toprule
  \multirow{2}{*}{Dataset}  &  \multicolumn{2}{c}{\PNte}  &  \multicolumn{2}{c}{\PNtr}\\\cmidrule(lr){2-3}\cmidrule(l){4-5}
                & $\learningRate$ & $\weightDecay$ & $\learningRate$ & $\weightDecay$   \\ \midrule
  MNIST         & \sci{1}{-3} & \sci{1}{-3} & \sci{1}{-3} & \sci{1}{-3}  \\
  20~Newsgroups & \sci{1}{-3} & \sci{1}{-3} & \sci{1}{-3} & \sci{1}{-2}  \\
  CIFAR10       & \sci{1}{-4} & \sci{1}{-3} & \sci{1}{-3} & \sci{1}{-2}  \\\hdashline
  TREC Spam     & \sci{1}{-3} & \sci{1}{-2} & \sci{1}{-3} & \sci{1}{-2}  \\\hdashline
  Synthetic     & \sci{1}{-2} &          0  & \sci{1}{-2} &           0  \\\hdashline
  banana        & \sci{1}{-4} & \sci{1}{-2} & \sci{1}{-4} & \sci{1}{-3}  \\
  cod\_rna      & \sci{1}{-3} & \sci{1}{-4} & \sci{1}{-3} & \sci{1}{-4}  \\
  susy          & \sci{1}{-4} & \sci{1}{-2} & \sci{1}{-5} & \sci{1}{-2}  \\
  ijcnn1        & \sci{1}{-3} & \sci{1}{-3} & \sci{1}{-3} & \sci{1}{-2}  \\
  covtype.b     & \sci{1}{-3} & \sci{1}{-2} & \sci{1}{-3} & \sci{1}{-2}  \\
  phishing      & \sci{1}{-3} & \sci{1}{-3} & \sci{1}{-3} & \sci{1}{-2}  \\
  a9a           & \sci{1}{-5} & \sci{1}{-2} & \sci{1}{-3} & \sci{1}{-3}  \\
  connect4      & \sci{1}{-3} & \sci{1}{-2} & \sci{1}{-3} & \sci{1}{-3}  \\
  w8a           & \sci{1}{-4} & \sci{1}{-4} & \sci{1}{-4} & \sci{1}{-3}  \\
  epsilon       & \sci{1}{-4} & \sci{1}{-3} & \sci{1}{-3} & \sci{1}{-3}  \\\hdashline
  FashionMNIST  & \sci{1}{-3} & \sci{1}{-3} & \sci{1}{-3} & \sci{1}{-3}  \\
  KMNIST        & \sci{1}{-3} & \sci{1}{-3} & \sci{1}{-3} & \sci{1}{-3}  \\
  \bottomrule
\end{tabular}
   }
\end{table}

\suppressfloats
\clearpage
 
\clearpage
\section{Additional Experimental Results}\label{sec:App:AddRes}

This section includes experiments we consider insightful but for which there was insufficient space to include in the paper's main body.  With the exception of the synthetic data experiments (see Section~\ref{sec:App:AddRes:Synthetic}) which focus on visually illustrative examples to build intuitions, performance evaluation is based on the inductive misclassification rate since it approximates the expected zero\=/one loss for an unseen example.

\subsection{Illustration using Synthetic Data}\label{sec:App:AddRes:Synthetic}

This section uses synthetic data to visualize scenarios where our algorithms succeed in spite of challenging conditions.

For simplicity, $\cdc{}$~and $\dec$~are linear\=/in\=/parameter models optimized by L\=/BFGS\@.  \PUc\ also trains a linear\=/in\=/parameter models without Gaussian kernels.  Since all methods use the same classifier architecture, our methods' performance advantage comes solely from algorithmic design.

Synthetic data were generated from multivariate Gaussians~$\normal{\boldsymbol{\mu}}{\eye{2}}$ with different means~$\boldsymbol{\mu}$ and identity covariance~$\eye{2}$. In all experiments, the positive-test and negative class\=/conditional distributions were
{\small%
\begin{align*}
  \plike{\te} &= \SynFrac{1}{2}\normal{\SynVect{{-}2}{-1}}{\SynVariance}
                    +\SynFrac{1}{2}\normal{\SynVect{{-}2}{\hphantom{-}1}}{\SynVariance}\\
  \nlike{} &=   \SynFrac{1}{2}\normal{\SynVect{\hphantom{-}2}{-1}}{\SynVariance}
                +\SynFrac{1}{2}\normal{\SynVect{\hphantom{-}2}{\hphantom{-}1}}{\SynVariance} \text{.}
\end{align*}%
}%
\noindent
${\prd{\te} = \prd{\tr} = 0.5}$ makes the ideal \textit{test} decision boundary ${x_1 = 0}$. The datasets in Figure~\ref{fig:App:AddRes:SyntheticData} vary only in the positive-train class\=/conditional distribution, denoted~$\SynPBase{\cdot}$ where ``$\cdot$'' is subfigure \subref{fig:App:AddRes:Separable}~to~\subref{fig:App:AddRes:SameLike}.

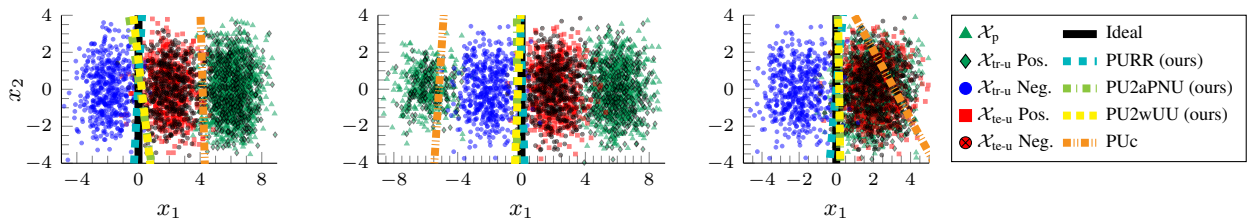
\begin{figure}[b]
  \centering
  \begin{minipage}[t]{0.26\textwidth}%
\begin{tikzpicture}
    \pgfplotstableread[col sep=comma] {plots/data/synthetic_separable.csv}\thedata
    \begin{axis}
        [
         ymin=\SynYMin,
         ymax=\SynYMax,
         xmin=\SynXMinSep,
         xmax=\SynXMax,
         xlabel={$x_1$},
         ylabel={$x_2$},
         ylabel shift =-6pt,  %
         point meta=explicit,
         width=\SynFigWidthSep,
         height=\SynFigHeight,
         axis x line*=bottom,  %
         axis y line*=left,    %
         xtick pos=left, %
         ytick pos=left, %
         xtick distance = {4},
         minor tick num = {7},
         ytick distance = {2},
         minor tick num = {3},
         label style={font=\footnotesize},
         tick label style={font=\scriptsize},
         clip mode=individual,                %
        ]
        \addplot[
                   scatter,
                   only marks,
                   opacity=0.5,
                   line width=0.2,
                   mark size=0.7pt,
                   legend image post style={scale=2.5},
                   legend style={font=\small},
                   scatter/classes={
                      0={mark=triangle*,color=ForestGreen,mark size=1.0pt},
                      2={mark=diamond*,color=ForestGreen,draw=black,mark size=1.0pt},
                      4={mark=*,blue},
                      1={mark=square*,red},
                      3={mark=otimes*,red,draw=black, line width=0.1}
                   },
                 ]
                 table[x index=0,y index=1, meta index=2] {\thedata};
        \draw [color=black, line width=\SynLineWidth] (-0.0,\SynYMin) -- (-0.0,\SynYMax);
        \addplot[dashed, domain=\SynXMinSep:\SynXMax, smooth, color=Aquamarine, line width=\SynLineWidth, opacity=1] {15.2682 * x + 0.445719} {};    %
        \addplot[dashdotted, domain=\SynXMinSep:\SynXMax, smooth, color=LimeGreen, line width=\SynLineWidth] {-5.47452 * x + 0.568992} { };  %
        \addplot[dotted, domain=\SynXMinSep:\SynXMax, smooth, color=Yellow, line width=\SynLineWidth] {-7.29372 * x + 1.54321} {};  %
        \addplot[densely dashdotdotted, domain=\SynXMinSep:\SynXMax, smooth, color=BurntOrange, line width=\SynLineWidth] {-28.4003 * x + 118.229} { };  %
    \end{axis}%
\end{tikzpicture}%
    \subcaption{Approx.\ linearly separable~$\uset{\tr}$}\label{fig:App:AddRes:Separable}%
  \end{minipage}%
  \begin{minipage}[t]{0.285\textwidth}%
    \centering%
\begin{tikzpicture}
    \pgfplotstableread[col sep=comma] {plots/data/synthetic_nonseparable.csv}\thedata
    \begin{axis}
        [
         ymin=\SynYMin,
         ymax=\SynYMax,
         xmin=\SynXMinNonsep,
         xmax=\SynXMax,
         xlabel={$x_1$},
         point meta=explicit,
         width=\SynFigWidthNonsep,
         height=\SynFigHeight,
         axis x line*=bottom,  %
         axis y line*=left,    %
         xtick pos=left, %
         ytick pos=left, %
         xtick distance = {4},
         minor x tick num = {7},
         ytick distance = {2},
         minor y tick num = {3},
         label style={font=\footnotesize},
         tick label style={font=\scriptsize},
         clip mode=individual,                %
        ]
        \addplot[
                   scatter,
                   only marks,
                   opacity=0.5,
                   line width=0.2,
                   mark size=0.7pt,
                   legend image post style={scale=1.5},
                   legend style={font=\small},
                   scatter/classes={
                      0={mark=triangle*,color=ForestGreen,mark size=1.0pt},
                      4={mark=*,blue},
                      2={mark=diamond*,color=ForestGreen,draw=black,mark size=1.0pt},
                      3={mark=otimes*,red,draw=black},
                      1={mark=square*,red}
                   },
                 ]
                 table[x index=0,y index=1, meta index=2] {\thedata};
        \draw [color=black, line width=\SynLineWidth] (-0.0,\SynYMin) -- (-0.0,\SynYMax);
        \addplot[dashed, domain=\SynXMinNonsep:\SynXMax, smooth, color=Aquamarine, line width=\SynLineWidth, opacity=1] {-28.8515 * x + 1.41399} {};    %
        \draw [dashdotted, domain=\SynXMinNonsep:\SynXMax, smooth, color=LimeGreen, line width=\SynLineWidth] (-0.2888,\SynYMin) -- (-0.2885,\SynYMax);  %
        \addplot[dotted, domain=\SynXMinNonsep:\SynXMax, smooth, color=Yellow, line width=\SynLineWidth] {22.1003 * x + 5.01917} {};  %
        \addplot[densely dashdotdotted, domain=\SynXMinNonsep:\SynXMax, smooth, color=BurntOrange, line width=\SynLineWidth] {13.6044 * x + 70.3757} { };  %
    \end{axis}%
\end{tikzpicture}%
    \subcaption{Non-linearly separable~$\uset{\tr}$}\label{fig:App:AddRes:Nonseparable}%
  \end{minipage}%
  \begin{minipage}[t]{0.22\textwidth}%
    \centering%
\begin{tikzpicture}
    \pgfplotstableread[col sep=comma] {plots/data/synthetic_same_like.csv}\thedata
    \begin{axis}
        [
         ymin=\SynYMin,
         ymax=\SynYMax,
         xmin=\SynXMinSep,
         xmax=\SynXMaxSameWidth,
         xlabel={$x_1$},
         point meta=explicit,
         width=\SynFigWidthSameLike,
         height=\SynFigHeight,
         axis x line*=bottom,  %
         axis y line*=left,    %
         xtick pos=left, %
         ytick pos=left, %
         xtick distance = {2},
         ytick distance = {2},
         minor tick num = {3},
         label style={font=\footnotesize},
         tick label style={font=\scriptsize},
         clip mode=individual,                %
        ]
        \addplot[
                   scatter,
                   only marks,
                   opacity=0.5,
                   line width=0.2,
                   mark size=0.7pt,
                   scatter/classes={
                      0={mark=triangle*,color=ForestGreen,mark size=1.0pt},
                      2={mark=diamond*,color=ForestGreen,draw=black,mark size=1.0pt},
                      4={mark=*,blue},
                      1={mark=square*,red},
                      3={mark=otimes*,red,draw=black, line width=0.1}
                   },
                 ]
                 table[x index=0,y index=1, meta index=2] {\thedata};
        \draw [color=black, line width=\SynLineWidth] (-0.0,\SynYMin) -- (-0.0,\SynYMax);  %
        \addplot[dashed, domain=\SynXMinSep:\SynXMax, smooth, color=Aquamarine, line width=\SynLineWidth, opacity=1] {10.9937 * x + 0.0416264} {};    %
        \addplot[dashdotted, domain=\SynXMinSep:\SynXMax, smooth, color=LimeGreen, line width=\SynLineWidth] {-54.1709 * x + 7.87775} { };  %
        \addplot[dotted, domain=\SynXMinSep:\SynXMax, smooth, color=Yellow, line width=\SynLineWidth] {-70.3884 * x + 11.0559} {};  %
        \addplot[densely dashdotdotted, domain=\SynXMinSep:\SynXMax, smooth, color=BurntOrange, line width=\SynLineWidth] {-1.85503 * x + 5.80523} { };  %
    \end{axis}%
\end{tikzpicture}%
    \subcaption{$\SynPLike{SameLike} = \nlike{}$}\label{fig:App:AddRes:SameLike}%
  \end{minipage}%
  \begin{minipage}[]{0.235\textwidth}
    \vspace{-3.5cm}
\begin{tikzpicture}
    \pgfplotstableread[col sep=comma] {plots/data/synthetic_separable.csv}\thedata
    \begin{axis}
        [
         hide axis,
         xmin=0,
         xmax=1,
         ymin=0,
         ymax=1,
         scale only axis,width=1mm, %
         legend cell align={left},              %
         legend style={font=\scriptsize},
         legend columns=5,
         legend image post style={scale=0.75}, %
         transpose legend,
        ]
        \addlegendimage{only marks,mark=triangle*,color=ForestGreen,mark size=3.5pt};
        \addlegendentry{$\pset{}$}

        \addlegendimage{only marks,mark=diamond*,color=ForestGreen,draw=black,mark size=1.0pt,mark size=3.5pt};
        \addlegendentry{$\uset{\tr}$ Pos.}

        \addlegendimage{only marks,mark=*,blue,mark size=3.0pt};
        \addlegendentry{$\uset{\tr}$ Neg.}

        \addlegendimage{only marks,mark=square*,red,mark size=3.0pt};
        \addlegendentry{$\uset{\te}$ Pos.}

        \addlegendimage{only marks,mark=otimes*,red,draw=black,line width=0.1,mark size=3.0pt};
        \addlegendentry{$\uset{\te}$ Neg.}

        \addlegendimage{color=black, line width=\SynLineWidth}
        \addlegendentry{Ideal}

        \addlegendimage{dashed, domain=\SynXMinSep:\SynXMax, smooth, color=Aquamarine, line width=\SynLineWidth, opacity=1};    %
        \addlegendentry{\nnpurpl\ourMethodText}

        \addlegendimage{dashdotted, domain=\SynXMinSep:\SynXMax, smooth, color=LimeGreen, line width=\SynLineWidth};  %
        \addlegendentry{\pupnu\ourMethodText}

        \addlegendimage{dotted, domain=\SynXMinSep:\SynXMax, smooth, color=Yellow, line width=\SynLineWidth};  %
        \addlegendentry{\punu\ourMethodText}

        \addlegendimage{densely dashdotdotted, domain=\SynXMinSep:\SynXMax, smooth, color=BurntOrange, line width=\SynLineWidth};  %
        \addlegendentry{\PUc}
    \end{axis}%
\end{tikzpicture}%
   \end{minipage}
  \caption{Predicted linear decision boundaries for three synthetic datasets (\eqsmall{${\np = \nTrU = \nTeU = 1,000}$}). Our three methods --~\nnpurpl{}, \pupnu{}, and~\punu{}~-- are robust to non\=/linear \& non\=/existent training class boundaries, but \PUc~fails in all three cases. Ideal boundary: ${x_1 = 0}$.}\label{fig:App:AddRes:SyntheticData}
\end{figure}

Figure~\ref{fig:App:AddRes:Separable}'s positive-train class\=/conditional distribution is
{\small
  \begin{equation}
      \SynPLike{Separable} = \SynFrac{1}{2}\normal{\SynVect{\hphantom{-}6}{-1}}{\SynVariance} +\SynFrac{1}{2}\normal{\SynVect{\hphantom{-}6}{\hphantom{-}1}}{\SynVariance}\text{,}
  \end{equation}%
}%
\noindent
making the training distribution's optimal separator linear.  \PUc{}~performed poorly on this setup for two reasons: covariate shift's assumption ${\ppostGeneral{\tr} = \ppostGeneral{\te}}$ does not hold, and the positive-train supports are functionally disjoint so importance function~$\Wx$ is practically unbounded.  Our methods all performed well, even \pupnu\ where inclusion of $\pset{}$'s~risk had minimal impact since for most good boundaries, $\pset{}$'s~risk was an inconsequential penalty.

Figure~\ref{fig:App:AddRes:Nonseparable} adds to~$\SynPLike{Separable}$ a third Gaussian where
{\small
  \begin{equation}\label{eq:AddRes:Syn:Inseparable}
    \SynPLike{Nonseparable} = \SynFrac{2}{3}\SynPLike{Separable} + \SynFrac{1}{3}\normal{\SynVect{{-}6}{\hphantom{-}0}}{\SynVariance}\text{,}%
  \end{equation}%
}%
\noindent
so the training distribution's optimal separator is non\=/linear. \PUc~performs poorly for the same reasons described above. The new centroid does not meaningfully affect~\nnpurpl. The most important takeaway is that linear~$\cdc{}$'s inability to partition~$\uset{\tr}$ has limited impact on \punu\ and \pupnu; $\uset{\tr}$'s misclassified examples act as a fixed penalty that only slightly offsets the two\=/step decision boundaries.

Figure~\ref{fig:App:AddRes:SameLike} uses the worst-case positive-train class\=/conditional, i.e.,~${\SynPLike{SameLike} = \nlike{}}$, making positive (labeled) data statistically identical to the (train and test) negative class\=/conditional distribution. Its training marginal~$\marg{\tr}$ is not separable -- linearly or otherwise.  Unlike~\PUc, our methods learned correct boundaries, which shows their robustness.

\clearpage
\subsection{Expanded MNIST, 20~Newsgroups, and~CIFAR10 Experiment Set}\label{sec:App:AddRes:Nonoverlap}

Table~\ref{tab:App:AddRes:FullClassPartition} is an expanded version of Section~\ref{sec:ExpRes:NonidenticalSupports}'s Table~\ref{tab:ExpRes:Nonoverlap}.  We provide these additional results to give the reader further evidence of our methods' superior performance. %

In this section, each of the three datasets (i.e.,~MNIST, 20~Newsgroups, and CIFAR10) now has two positive-training~(\pDsTrain) class configurations that are partially disjoint from the positive-test~(\pDsTest) class.  For each such configuration, Table~\ref{tab:App:AddRes:FullClassPartition} contains three experiments (in order):
\begin{enumerate}
  \setlength{\itemsep}{0pt}
  \item $\prd{\tr} < \prd{\te}$
  \item $\prd{\tr} = \prd{\te}$
  \item $\prd{\tr} > \prd{\te}$
\end{enumerate}
\noindent
It is easier to directly compare the effects of increasing/decreasing~$\prd{\tr}$ when the magnitude of the training prior increase and decrease are equivalent (e.g.,~for MNIST ${\prd{\te} = 0.5}$ so we tested performance at ${\prd{\tr} = \prd{\te} \pm0.12}$ and ${\prd{\tr} = \prd{\te} \pm0.21}$ depending on the class partition).  We maintained that rule of thumb when possible, but cases did arise where there were insufficient positive example with the labels in~\pDsTrain{} to support such a high positive prior.  In those cases, we clamp that \pDsTrain~class definition's maximum~$\prd{\tr}$.

The key takeaway from Table~\ref{tab:App:AddRes:FullClassPartition} is that across these additional, orthogonal definitions of~\pDsTrain, our methods still outperform \PUc{} and \nnpuOpt{} --- usually by a wide margin (statistical significance according to 1\%~paired t\=/test).

In all experiments, our methods' performance degraded as $\prd{\tr}$~increased since a larger prior makes it harder to identify the negative examples in~$\uset{\tr}$.  To gain an intuition about why this is true, consider the extreme case where ${\prd{\tr} = 1}$; learning is impossible since the positive-train class\=/conditional distribution may be arbitrarily different, and there are no negative samples that can be used to relate the two distributions. In contrast when ${\prd{\tr} = 0}$, identifying the negative set is trivial (i.e.,~all of $\uset{\tr}$ is negative), and NU~learning can be applied directly to learn~$\dec$.

\PUc~performs best when ${\prd{\tr} = \prd{\te}}$.  When $\prd{\tr}$~diverges from that middle point, \PUc's~performance declines.  To gain an intuition why that is, consider density-ratio estimation in terms of the component class conditionals.  When ${\prd{\tr} = \prd{\te}}$, $\Wx=1$ for all negative examples; from Table~\ref{tab:App:AddRes:FullClassPartition}'s results, we know that \PUc~performs best when there is no bias, i.e.,~\mbox{\pDsTrain{} = \pDsTest{}}.  A static positive prior eliminates one possible source of bias making density-ratio estimation easier and more accurate.

\begin{table}[!h]
  \centering
  \caption{Full MNIST, 20~Newsgroups, and CIFAR10 experimental class partition results. Each result is the inductive misclassification rate~(\%) mean and standard deviation over 100~trials for MNIST, 20~Newsgroups, and CIFAR10 with different positive \& negative class definitions. For \textit{all} experiments with positive bias (i.e.,~rows~\mbox{2--8} for each dataset), all three of our methods had statistically significant better performance than \PUc{} and \nnpuOpt{} according to a 1\%~paired t\=/test. Boldface indicates a shifted task's best performing method. Negative~(N) \& positive\=/test~(\pDsTest) class definitions are identical for each dataset's first three experiments. Positive train~(\pDsTrain) specified as \pDsTest\ denotes no bias. Our three methods -- \nnpurpl, \pupnu, and \punu{} -- are denoted with~\ourMethodKey{}.}\label{tab:App:AddRes:FullClassPartition}
\newcommand{\DsName}[1]{\multirow{8}{*}{\rotatebox[origin=c]{90}{#1}}}

\newcommand{\BigArrowBase}[1]{\multicolumn{1}{c@{}}{\multirow{3}{*}{$\Bigg#1$}}}
\newcommand{\BigUpArrow}{\BigArrowBase{\uparrow}}
\newcommand{\BigDownArrow}{\BigArrowBase{\downarrow}}

\renewcommand{\arraystretch}{1.2}
\setlength{\dashlinedash}{0.4pt}
\setlength{\dashlinegap}{1.5pt}
\setlength{\arrayrulewidth}{0.3pt}

\newcommand{\NDefFull}[1]{\multirow{7}{*}{\shortstack[l]{#1}}}
\newcommand{\PTestDefFull}[1]{\NDefFull{#1}}

\setlength{\tabcolsep}{5.4pt}

{\centering
  \footnotesize
  \begin{tabular}{@{}llllllrrrrrr@{}}
  \toprule
  \multirow{2}{*}{} & \multirow{2}{*}{N} & \multirow{2}{*}{\pDsTest} & \multirow{2}{*}{\pDsTrain} &  \multirow{2}{*}{$\prd{\tr}$} & \multirow{2}{*}{$\prd{\te}$} &    & \multicolumn{2}{c}{Two\=/Step (\putwo)}  & \multicolumn{2}{c}{Baselines}  & \multicolumn{1}{c@{}}{Ref.} \\\cmidrule(lr){8-9}\cmidrule(lr){10-11}\cmidrule(l){12-12}
                    &   & & &  & & \hspace{-6pt}\nnpurpl\ourMethodKey{} & \bpnu\ourMethodKey{} & \wuu\ourMethodKey{} & \PUc{} & \nnpuOpt{} & \PNte{} \\\midrule
  \DsName{MNIST} & \NDefFull{0, 2, 4,\\ 6, 8} & \PTestDefFull{1, 3, 5,\\ 7, 9} & \SamePTest & 0.5 & 0.5 & \NRes{10.0}{1.3} & \NRes{10.0}{1.2} & \NRes{11.6}{1.6} & \NRes{8.6}{0.8} & \NRes{5.5}{0.5} & \BigUpArrow{} \\\cdashline{4-11}
    &       &         &  \multirow{3}{*}{7, 9}   & 0.29 & 0.5 & \NRes{6.8}{0.8} & \NResT{5.3}{0.6} & \NRes{6.0}{0.7} & \NRes{29.2}{2.1} & \NRes{36.7}{2.7} &  \\\cdashline{5-11}
    &       &         &     & 0.5  & 0.5 & \NRes{9.4}{1.5} & \NResT{7.1}{0.9} & \NRes{8.3}{1.5} & \NRes{26.8}{2.4} & \NRes{35.1}{2.5} &  \\\cdashline{5-11}
    &       &         &     & 0.71 & 0.5 & \NRes{14.0}{3.0} & \NResT{11.1}{1.4} & \NRes{14.8}{3.1} & \NRes{26.9}{3.0} & \NRes{34.5}{2.9} & \NRes{2.8}{0.2}  \\\cdashline{4-11}
    &       &         & \multirow{3}{*}{1, 3, 5}   & 0.38 & 0.5 & \NRes{8.1}{1.0} & \NResT{6.5}{0.8} & \NRes{7.6}{0.9} & \NRes{20.2}{2.5} & \NRes{25.9}{1.1} & \BigDownArrow{} \\\cdashline{5-11}
    &       &         &   & 0.5  & 0.5 & \NRes{10.0}{1.6} & \NResT{8.4}{1.1} & \NRes{10.2}{1.4} & \NRes{18.5}{2.9} & \NRes{26.9}{1.2} &   \\\cdashline{5-11}
    &       &         &  & 0.63 & 0.5 & \NRes{12.5}{2.3} & \NResT{11.4}{1.3} & \NRes{14.3}{2.3} & \NRes{18.6}{3.3} & \NRes{28.5}{1.2} &  \\\cdashline{2-12}
    & {0, 2}  & {5, 7}  & {1, 3}  & 0.5  & 0.5 & \NRes{4.0}{0.8} & \NRes{3.6}{0.9} & \NResT{3.1}{0.7} & \NRes{17.1}{4.6} & \NRes{30.9}{5.3} & \NRes{1.1}{0.2} \\\midrule
  \DsName{20~Newsgroups} & \NDefFull{sci, soc,\\talk} & \PTestDefFull{alt, comp,\\ misc, rec} & \SamePTest & 0.56 & 0.56 & \NRes{15.4}{1.3} & \NRes{14.9}{1.0} & \NRes{16.7}{2.3} & \NRes{14.9}{1.0} & \NRes{14.1}{0.8} & \BigUpArrow{} \\\cdashline{4-11}
    & &  &  \multirow{3}{*}{misc, rec}  & 0.37 & 0.56 & \NRes{13.9}{0.7} & \NResT{12.8}{0.6} & \NRes{14.3}{0.9} & \NRes{28.9}{1.8} & \NRes{28.8}{1.3} & \\\cdashline{5-11}
    & &  &    & 0.56 & 0.56 & \NRes{17.5}{2.1} & \NResT{13.5}{0.8} & \NRes{15.1}{1.3} & \NRes{23.9}{3.0} & \NRes{28.8}{1.7} & \\\cdashline{5-11}
    & &  &    & 0.65 & 0.56 & \NRes{20.2}{2.8} & \NResT{14.0}{0.9} & \NRes{15.9}{1.5} & \NRes{21.8}{3.3} & \NRes{29.0}{1.8} & \NRes{10.5}{0.5} \\\cdashline{4-11}
    & &  & \multirow{3}{*}{comp}    & 0.37 & 0.56 & \NResT{13.3}{0.6} & \NRes{13.7}{0.6} & \NRes{14.4}{0.7} & \NRes{30.3}{2.0} & \NRes{31.4}{0.7} & \BigDownArrow{} \\\cdashline{5-11}
    & &  &        & 0.56 & 0.56 & \NRes{16.0}{1.5} & \NResT{14.9}{0.7} & \NRes{15.7}{0.9} & \NRes{28.6}{2.6} & \NRes{31.2}{0.8} & \\\cdashline{5-11}
    & &  &        & 0.65 & 0.56 & \NRes{19.2}{2.4} & \NResT{15.6}{0.9} & \NRes{16.5}{1.2} & \NRes{27.8}{2.7} & \NRes{31.3}{0.7} & \\\cdashline{2-12}
    & {misc, rec} & {soc, talk} & {alt, comp} & 0.55 & 0.46 & \NRes{5.9}{1.0} & \NRes{7.1}{1.1} & \NResT{5.6}{1.7} & \NRes{18.5}{4.3} & \NRes{35.3}{5.2} & \NRes{2.1}{0.3}  \\\midrule
  \DsName{CIFAR10}  & \NDefFull{Bird, Cat,\\Deer, Dog,\\Frog, Horse} & \PTestDefFull{Plane, \\ Auto, Ship, \\ Truck} & \SamePTest & 0.4 & 0.4 & \NRes{14.1}{0.8} & \NRes{14.2}{1.3} & \NRes{15.4}{1.7} & \NRes{13.8}{0.7} & \NRes{12.3}{0.6} & \BigUpArrow{} \\\cdashline{4-11}
    & & & \multirow{3}{*}{Plane}  & 0.14 & 0.4 & \NRes{12.1}{0.7} & \NResT{11.9}{0.7} & \NRes{12.4}{0.9} & \NRes{26.7}{1.4} & \NRes{26.7}{1.0} & \\\cdashline{5-11}
    & & & & 0.4 & 0.4 & \NResT{13.8}{0.9} & \NRes{14.5}{1.4} & \NRes{15.1}{1.6} & \NRes{20.6}{1.5} & \NRes{27.4}{1.0} &  \\\cdashline{5-11}
    & & & & 0.6 & 0.4 & \NResT{16.1}{1.1} & \NRes{16.7}{1.5} & \NRes{20.0}{2.7} & \NRes{21.5}{1.6} & \NRes{28.4}{1.0} & \NRes{9.7}{0.5}\\\cdashline{4-11}
    & & & \multirow{3}{*}{\shortstack[l]{Auto,\\Truck}}  & 0.25 & 0.4 & \NRes{12.7}{0.7} & \NResT{12.4}{0.7} & \NRes{12.8}{0.8} & \NRes{19.2}{1.1} & \NRes{20.3}{0.8} & \BigDownArrow{} \\\cdashline{5-11}
    & & & & 0.4 & 0.4 & \NRes{14.1}{0.9} & \NResT{13.9}{1.1} & \NRes{14.4}{1.2} & \NRes{17.7}{1.0} & \NRes{20.3}{0.8} &  \\\cdashline{5-11}
    & & & & 0.55 & 0.4 & \NResT{16.0}{1.1} & \NRes{16.2}{1.6} & \NRes{17.1}{2.2} & \NRes{18.3}{1.1} & \NRes{20.5}{0.9} &  \\\cdashline{2-12}
    & {Deer, Horse} & {Plane, Auto} & {Cat, Dog} & 0.5 & 0.5 & \NRes{14.1}{0.9} & \NRes{14.9}{1.5}  & \NResT{11.2}{0.8} & \NRes{33.1}{2.7} & \NRes{47.5}{2.0} & \NRes{7.7}{0.4} \\
  \bottomrule
\end{tabular}
}
 \end{table}

\clearpage
\subsection{Case Study: Arbitrary Adversarial Concept Drift}\label{sec:App:AddRes:Spam}

This section's experiments model adversarial settings where the positive class\=/conditional distribution shifts significantly faster than the negative class distribution.  As explained in Section~\ref{sec:App:SpamDatasets}, the training set was composed of spam and ham emails from the TREC0\underline{5} dataset; the test set was composed of spam and ham emails from the TREC0\underline{7} dataset.  The two dataset's ham emails are quite different -- TREC05 relies heavily on Enron emails while TREC07 contains many emails received on a university email server.  We are therefore confident our fixed-negative-distribution assumption in Eq.~\eqref{eq:PreviousWork:NegAssume} does not hold.

Table~\ref{tab:App:AddRes:Spam} and Figure~\ref{fig:App:AddRes:SpamPlot} compare our methods to \PUc{} and \nnPU{} across three different training priors~($\prd{\tr}$).  Under all three experimental conditions, our three methods outperformed both \PUc{} and \nnpuOpt{} according to a 1\%~paired t-test.  \punu{}~was the top performer for all experiments.  As evidenced by the PN~misclassification rate, a highly accurate classifier can be constructed for this dataset.  Similarly, $\cdc$~accurately labels~$\uset{\tr}$.  The resulting surrogate negative set is more useful than~$\pset{}$ to classify the spam emails from the test distribution.  \pupnu{}~performed slightly worse than~\punu{} because the spam emails in~$\pset{}$ are of very limited value due to the significant adversarial concept drift.\footnote{\punu{} and \pupnu{} used top\K{} weighting (see Section~\ref{sec:App:AddRes:AltStep1:Methods}) for step~\#1.}

\begin{table}[h]
  \centering
  \caption{Inductive misclassification rate~(\%) mean and standard deviation over 100~trials for arbitrary adversarial concept drift on the TREC spam email datasets.  In all experiments, our three methods --~\nnpurpl{}, \pupnu{}, \&~\punu{}~-- (which are denoted by~\ourMethodKey) statistically outperformed \PUc{} and~\nnpuOpt{} according to a paired t\=/test (${p < 0.01}$) with \punu{}~the top performer across all training priors~($\prd{\tr}$).}\label{tab:App:AddRes:Spam}
\renewcommand{\arraystretch}{1.2}
\setlength{\dashlinedash}{0.4pt}
\setlength{\dashlinegap}{1.5pt}
\setlength{\arrayrulewidth}{0.3pt}

\setlength{\tabcolsep}{7.0pt}

\newcommand{\NResSpam}[2]{\NRes{#1}{#2}}
\newcommand{\NResSpamTop}[2]{\textbf{\NRes{#1}{#2}}}

{\centering
  \footnotesize
  \begin{tabular}{@{}llllllrrrrrr@{}}
    \toprule
    \multicolumn{2}{c}{Train} & \multicolumn{2}{c}{Test} &  \multirow{2}{*}{$\prd{\tr}$} & \multirow{2}{*}{$\prd{\te}$} &    & \multicolumn{2}{c}{Two\=/Step (\putwo)}  & \multicolumn{2}{c}{Baselines}  & \multicolumn{1}{c@{}}{Ref.} \\\cmidrule(r){1-2}\cmidrule(lr){3-4}\cmidrule(lr){8-9}\cmidrule(lr){10-11}\cmidrule(l){12-12}
    Pos. & Neg.  & Pos.    &  Neg.     &      &     & \hspace{-6pt}\nnpurpl\ourMethodKey{} & \bpnu\ourMethodKey{} & \wuu\ourMethodKey{} & \PUc{}  & \hspace{-3pt}\nnpuOpt{} & \PNte{} \\\midrule
    \NDef{2005 \\ Spam} & \NDef{2005 \\ Ham} & \NDef{2007 \\ Spam}   & \NDef{2007 \\ Ham} & 0.4  & 0.5 & \NResSpam{26.5}{2.6}  & \NResSpam{26.9}{3.1}  & \NResSpamTop{25.1}{3.1}  & \NRes{35.2}{11.3} & \NRes{40.9}{3.1} & \multicolumn{1}{c@{}}{$\uparrow$} \\\cdashline{5-11}
                        &                    &                       &                    & 0.5  & 0.5 & \NResSpam{27.5}{3.4}  & \NResSpam{28.6}{4.5}  & \NResSpamTop{25.1}{3.3}  & \NRes{34.6}{10.2} & \NRes{40.5}{2.7} & \NRes{0.6}{0.3} \\\cdashline{5-11}
                        &                    &                       &                    & 0.6  & 0.5 & \NResSpam{30.8}{4.2}  & \NResSpam{33.0}{5.7}  & \NResSpamTop{29.3}{6.5}  & \NRes{38.5}{10.8} & \NRes{41.1}{2.9} & \multicolumn{1}{c@{}}{$\downarrow$} \\
  \bottomrule
\end{tabular}
}
 \end{table}

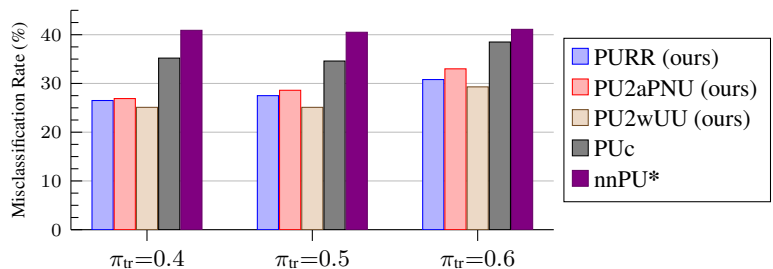
\begin{figure}[h]
  \centering
  \pgfplotstableread[col sep=comma]{plots/data/spam-bar.csv}\datatable%
  \begin{tikzpicture}%
    \begin{axis}[%
        axis lines*=left,%
        ymajorgrids,  %
        bar width=8pt,%
        height=4.5cm,%
        width=8.0cm,%
        ymin=0,%
        ymax=\SpamYMax,%
        ybar={\BarLineWidth},%
        ytick distance={10},%
        minor y tick num={3},%
        y tick label style={font=\scriptsize},%
        ylabel={\scriptsize Misclassification Rate~(\%)},%
        ylabel shift =-2pt,  %
        title style={yshift=-4pt,},
        xtick=data,%
        x tick label style={font=\small,
                            align=center},%
        every tick/.style={color=black,
                           line width=\BarLineWidth},%
        xticklabels from table={\datatable}{Dataset},%
        enlarge x limits=0.23,%
        legend style={at={(1.02,0.5)},
                      anchor=west,
                      font=\small,},
        legend cell align={left},              %
        title style = {text depth=0.5ex},      %
    ]%
    \addplot table [x expr=\coordindex, y index=1,] {\datatable};%
    \addplot table [x expr=\coordindex, y index=2] {\datatable};%
    \addplot table [x expr=\coordindex, y index=3] {\datatable};%
    \addplot table [x expr=\coordindex, y index=4] {\datatable};%
    \addplot table [x expr=\coordindex, y index=5] {\datatable};%
    \legend{\nnpurpl\ourMethodText{},\pupnu\ourMethodText{},\punu\ourMethodText{},\PUc{},\nnpuOpt{}}%
    \end{axis}%
  \end{tikzpicture}%
   \caption{Mean inductive misclassification rate~(\%) over 100~trials for the TREC spam datasets across three training priors~($\prd{\tr}$).  Our \punu{}~method was the top performer across all experiments.}\label{fig:App:AddRes:SpamPlot}
\end{figure}

\clearpage
\subsection{Identical Positive Supports with Bias}\label{sec:App:AddRes:MarginalBias}

The positive bias applied in this section's experiments is totally different from that in Sections~\ref{sec:ExpRes:NonidenticalSupports} and~\ref{sec:ExpRes:Spam}. Here we mimic situations where the labeled data are complete but non-representative resulting in identical marginal distribution supports but shifts in the marginal distribution's magnitude. We follow the experimental setup described in \citet{Sakai:2019}'s \PUc{}~paper.  LIBSVM~\citep{LIBSVM} benchmarks are used exclusively to ensure suitability with the SVM\=/like~\PUc; benchmarks ``banana,'' ``susy,'' ``ijcnn1,'' and ``a9a'' appear in~\citet{Sakai:2019}'s \PUc{}~paper.

\citeauthor{Sakai:2019}'s bias operation is based on the median feature vector.  Formally, given dataset~${\train \subset \domainX}$, define $\cmed$ as the median of set $\setbuild{\norm{\X - \xbar}_{2}}{\X \in \train}$ where $\norm{\cdot}_{2}$ is the $L_{2}$~(Euclidean) norm and $\xbar$~is $\train$'s mean vector, i.e.,
\begin{equation*}
 \xbar = \frac{1}{\abs{\train}} \sum_{\X \in \train} \X \text{.}
\end{equation*}
\noindent
Partition~$\train$ into subsets~${\trainLo \defeq \setbuild{\X \in \train}{\norm{\X - \xbar}_{2} < \cmed}}$ and ${\trainHi \defeq \train \setminus \trainLo}$.  Examples in $\pset{}$ and~$\uset{\tr}$ are selected from~$\trainLo$ with probability ${p=0.9}$ and from~$\trainHi$ with probability~${1-p}$.  ${p=0.1}$ is used when constructing $\uset{\te}$~and the test set. This bias operation simplifies density-ratio estimation since~${\forall_{\X \in \train}~\Wx \in \set{\frac{1}{9}, 9}}$. Their setting ${\prd{\tr} = \prd{\te} = 0.5}$ also simplifies density estimation as detailed in Section~\ref{sec:App:AddRes:Nonoverlap}.

We modified \citeauthor{Sakai:2019}'s setup such that $\train$~was exclusively the original dataset's positive-valued examples.   Negative examples were sampled uniformly at random.

\paragraph{Analysis} The experiments enumerated in Table~\ref{tab:App:AddRes:PUc} and shown visually in Figure~\ref{fig:App:AddRes:PUc:Complete} used the bias procedure described above on 10~LIBSVM datasets. According to a 1\%~paired t-test, \nnpurpl{} and \pupnu{} outperformed the baselines, \PUc{} and \nnpuOpt{}, on all ten benchmarks; \punu{} outperformed the baselines on nine of ten benchmarks.

\nnpurpl{} was the top performer on three benchmarks; \pupnu{} was the top performer on five benchmarks while \punu{} was the top performer on two benchmarks.  Each estimator is best suited to a different feature dimension range.  \nnpurpl{} performed best when the dataset had fewer features (e.g.,~${{<}50}$) while \pupnu{} performed well when the dimension was moderate.  \punu{} was the top performer when the dimension was large (e.g.,~${{\geq}300}$).

Accurate risk estimation is more challenging when the training sets are comparatively small but the feature count is high.  We expect that is causing \nnpurpl{} to struggle to reconcile/relate the different labeled losses (e.g.,~positive\=/labeled, unlabeled train, unlabeled test) in these higher dimension datasets.

\clearpage
\begin{table}[t]
  \centering
  \caption{Inductive misclassification rate~(\%) mean and standard deviation over 100~trials with \citet{Sakai:2019}'s median feature vector\=/based bias for 10~LIBSVM datasets. \underline{Underlining} denotes a statistically significant performance improvement versus~\PUc{} and \nnpuOpt{} according to a 1\%~paired t\=/test. Boldface indicates each dataset's best performing method. ${\np = 300}$ and ${\nTrU = \nTeU = 700}$.  Datasets are ordered by increasing dimension. Our three methods -- \nnpurpl, \pupnu, and \punu{} -- are denoted with~\ourMethodKey{}.}\label{tab:App:AddRes:PUc}
  {\small%
\setlength{\tabcolsep}{9.0pt}

{
  \small
  \begin{tabular}{@{}lrrrrrrr@{}}
    \toprule
    \multirow{2}{*}{Dataset} & \multirow{2}{*}{$d$} &        & \multicolumn{2}{c}{Two\=/Step (\putwo)}  & \multicolumn{2}{c}{Baselines} & \multicolumn{1}{c@{}}{Ref.} \\\cmidrule(lr){4-5}\cmidrule(lr){6-7}\cmidrule(l){8-8}
                      &        & \nnpurpl\ourMethodKey{} & \bpnu\ourMethodKey{} & \wuu\ourMethodKey{} & \PUc{}           & \nnpuOpt{}       & \PNte{}  \\\midrule
    banana     & 2      &  \NResSS{12.9}{2.1}   & \NResTop{11.8}{1.6}     &  \NResSS{13.3}{2.3}    & \NRes{17.4}{3.4} & \NRes{28.8}{3.8} & \NRes{ 8.6}{0.6}  \\%
    cod-rna    & 8      & \NResTop{14.7}{2.6}   &  \NResSS{15.1}{3.2}     &  \NResSS{15.5}{2.9}    & \NRes{25.2}{5.0} & \NRes{24.9}{2.3} & \NRes{ 6.5}{0.9}  \\%
    susy       & 18     & \NResTop{24.2}{2.1}   &  \NResSS{25.6}{2.2}     &  \NResSS{25.8}{2.2}    & \NRes{27.3}{4.3} & \NRes{45.9}{3.9} & \NRes{20.5}{1.3}  \\%
    ijcnn1     & 22     &  \NResSS{22.7}{2.8}   & \NResTop{17.7}{2.8}     &    \NRes{24.6}{3.1}    & \NRes{23.9}{3.6} & \NRes{34.7}{3.6} & \NRes{ 6.8}{0.8}  \\%
    covtype.b  & 54     & \NResTop{29.5}{2.9}   &  \NResSS{32.5}{3.2}     &  \NResSS{29.9}{2.4}    & \NRes{39.4}{4.2} & \NRes{55.5}{2.8} & \NRes{22.3}{1.4}  \\%
    phishing   & 68     &  \NResSS{11.3}{1.4}   & \NResTop{9.6}{1.0}      &  \NResSS{11.1}{1.8}    & \NRes{13.8}{4.1} & \NRes{22.5}{4.1} & \NRes{ 6.2}{0.6}  \\%
    a9a        & 123    &  \NResSS{27.1}{2.1}   & \NResTop{26.6}{1.8}     &  \NResSS{27.1}{2.1}    & \NRes{32.8}{2.6} & \NRes{32.5}{2.3} & \NRes{20.6}{1.0}  \\%
    connect4   & 126    &  \NResSS{34.9}{3.1}   & \NResTop{32.9}{2.7}     &  \NResSS{35.0}{2.9}    & \NRes{37.0}{2.8} & \NRes{45.1}{2.6} & \NRes{21.6}{1.3}  \\%
    w8a        & 300    &  \NResSS{17.2}{2.6}   &  \NResSS{21.0}{2.9}     & \NResTop{16.8}{2.9}    & \NRes{29.3}{6.2} & \NRes{41.1}{4.3} & \NRes{ 6.6}{0.7}  \\%
    epsilon    & 2,000  &  \NResSS{33.5}{4.8}   &  \NResSS{36.5}{5.0}     & \NResTop{31.5}{1.7}    & \NRes{62.8}{6.7} & \NRes{64.6}{1.5} & \NRes{23.7}{1.1}  \\
    \bottomrule
  \end{tabular}
}
  }%
\end{table}

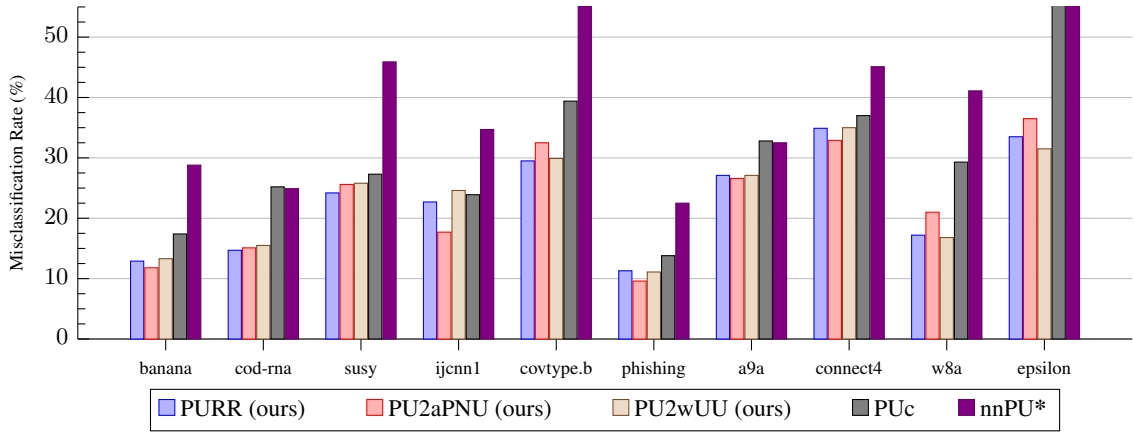
\begin{figure}[t]
  \centering
\newcommand{\pucLegendEntrySpacer}{~~~~}
\pgfplotstableread[col sep=comma]{plots/data/bar-puc.csv}\datatable%
\begin{tikzpicture}%
  \begin{axis}[%
        axis lines*=left,%
        ymajorgrids,  %
        bar width=\PUcBarWidth,%
        height=6.0cm,%
        width=15.60cm,%
        ymin=0,%
        ymax=\PUcYMax,%
        ybar={\BarLineWidth},%
        ytick distance={10},%
        minor y tick num={3},%
        y tick label style={font=\footnotesize},%
        ylabel={\scriptsize Misclassification Rate~(\%)},%
        xtick=data,%
        x tick label style={font=\scriptsize,align=center},%
        typeset ticklabels with strut,
        every tick/.style={color=black, line width=\BarLineWidth},%
        xticklabels from table={\datatable}{Dataset},%
        legend columns=-1,
        legend style={%
                      at={(0.5,-0.15)},
                      anchor=north,
                      /tikz/every even column/.append style={column sep=0.5cm}
                     },  %
        legend cell align={left},              %
        legend style={font=\footnotesize},
        title style = {text depth=0.5ex}       %
    ]%
    \addplot table [x expr=\coordindex, y index=1] {\datatable};%
    \addplot table [x expr=\coordindex, y index=2] {\datatable};%
    \addplot table [x expr=\coordindex, y index=3] {\datatable};%
    \addplot table [x expr=\coordindex, y index=4] {\datatable};%
    \addplot table [x expr=\coordindex, y index=5] {\datatable};%
    \legend{\nnpurpl\ourMethodText\pucLegendEntrySpacer{},\pupnu\ourMethodText\pucLegendEntrySpacer{},\punu\ourMethodText\pucLegendEntrySpacer{},\PUc{},\nnpuOpt{}}%
  \end{axis}%
\end{tikzpicture}%
   \caption{Mean inductive misclassification rate~(\%) over 100~trials with \citet{Sakai:2019}'s median feature vector\=/based bias for the 10~LIBSVM datasets in Section~\ref{sec:App:AddRes:MarginalBias}.}\label{fig:App:AddRes:PUc:Complete}
\end{figure}

\clearpage
\subsection{Comparison to \bpu~Selection Bias Method \pusb}\label{sec:App:AddRes:PUSB}

Recall that baseline \PUc{} is a covariate-shift \bpu{}~method.  A NeurIPS reviewer requested an experiment comparing our proposed approaches to a selection bias \bpu~learning baseline.  This section compares our algorithms to \citet{Kato:2019}'s Positive-Unlabeled Selection Bias~(\pusb) method.

Let random variable ${\srv \in \domainS}$ denote whether some training example~${(\xrv,\yrv) \sim \pjointsimp}$ is labeled.  For all types of PU~learning (e.g.,~unbiased, \bpu, \acronym), it is straightforward that ${\srv = \yp}$ implies ${\yrv = \yp}$, i.e.,
\begin{equation}\label{eq:App:AddRes:PUSB:ProbLabeledPos}
  \pbase{}{\yrv = \yp \vert \srv = \yp} = 1
\end{equation}
\noindent
and
\begin{equation}\label{eq:App:AddRes:PUSB:ProbNegLabeled}
  \pbase{}{\srv = \yp \vert \yrv = \yn} = 0 \text{.}
\end{equation}

\pusb~makes what \citeauthor{Kato:2019} term the \textit{invariance-of-order assumption}. Formally, for any pair of training examples ${\X_{i},\X_{j} \in \domainX}$, it holds that
\begin{equation}\label{eq:App:AddRes:PUSB:InvarianceOrder}
  \pbase{}{\yrv = \yp \vert \X_{i}} \geq \pbase{}{\yrv = \yp \vert \X_{j}} \iff \pbase{}{\srv = \yp \vert \X_{i}} \geq \pbase{}{\srv = \yp \vert \X_{j}} \text{.}
\end{equation}
\noindent
In words, a training example is at least as likely to be positive-valued as another example if and only if it is at least as likely to be labeled as that other example.  As mentioned in Section~\ref{sec:APULearning}, it is not possible to directly compare our approaches to existing selection bias \bpu{}~methods like~\pusb{}.  Such \bpu~learning methods assume access to only a single unlabeled set~($\uset{\te}$) drawn from the test distribution while \acronym~learning provides two unlabeled sets~($\uset{\tr}$ and~$\uset{\te}$).

To ensure a fair comparison, we sought to replicate \citet{Kato:2019}'s experimental setup as closely as possible provided the constraints of our method -- even using their source code\footnote{\citeauthor{Kato:2019}'s source code is publicly available at \url{https://github.com/MasaKat0/PUlearning}.} verbatim where possible (e.g.,~\pusb{} used the \texttt{Chainer}~\citep{Chainer} neural network framework as specified by \citeauthor{Kato:2019}). Like in the \pusb~paper, we analyzed the performance of all methods on the MNIST~\citep{LeCun:1998} dataset.  To enrich the comparison, we also consider the drop\=/in MNIST variants FashionMNIST~\citep{FashionMNIST} and KMNIST~\citep{KMNIST}.

\paragraph{Dataset Construction} Our experiments exactly duplicate \citeauthor{Kato:2019}'s procedure for constructing biased-positive set~$\pset{}$.  Specifically, a multilayer perceptron~(MLP) with four hidden layers of 300~neurons each and ReLU activation is trained using the PN~logistic loss on the dataset's complete training and test sets.  $\pset{}$ is then selected u.a.r.\ without replacement from those positive-valued training examples the aforementioned MLP identifies as having the highest positive posterior.  Unlabeled test set~$\uset{\te}$ and the inductive test set are drawn u.a.r.\ without replacement from the complete training and test sets respectively.

\citeauthor{Kato:2019} uses the complete MNIST training set as the unlabeled set.  Since both \PUc{} and our methods require two unlabeled sets, we cannot follow the same methodology here.  Instead, we limit the size of the test unlabeled set and create unlabeled training set~$\uset{\tr}$ by selecting its positive examples according to the procedure described above for~$\pset{}$ and selecting its negative-valued examples u.a.r.\ without replacement from the training set's negative elements.  Table~\ref{tab:App:AddRes:PUSB:DatasetSizes} details our experiments' positive priors as well as the dataset and mini\=/batch sizes.

\paragraph{Hyperparameters} Identical hyperparameters were used for the MNIST, FashionMNIST, and KMNIST datasets.  Our methods, \nnpuOpt, and \PNte{} used identical hyperparameter settings as those tuned for MNIST in Section~\ref{sec:ExpRes}'s experiments.

\pusb{}'s hyperparameters match those specified by \citeauthor{Kato:2019} for MNIST, e.g.,~learning rate ${\learningRate = 10^{-5}}$ and weight decay ${\weightDecay = 5 \cdot 10^{-3}}$.  \pusb{}~learners were trained for 250~epochs using the Adam~\citep{Kingma:2015} optimizer.  As in the original paper, \pusb{}'s neural network had four hidden layers of 300~neurons each and batch normalization before each ReLU activation.

\paragraph{Results Analysis} Table~\ref{tab:App:AddExp:PUSB:Results} compares the performance of our methods -- \nnpurpl, \pupnu, and~\punu{} -- to the extended baseline set -- \PUc, \pusb, and~\nnpuOpt{} -- for the experimental setup described above.  To mitigate the effects of different unlabeled set configurations, our experiments tested two unlabeled set sizes, with one size half the other (Table~\ref{tab:App:AddRes:PUSB:DatasetSizes}). \PUc{}'s and our methods' results in Table~\ref{tab:App:AddExp:PUSB:Results:3K} used 6,000~total unlabeled samples, i.e.,~the same quantity used by \pusb{} in Table~\ref{tab:App:AddExp:PUSB:Results:6K}.  Figure~\ref{fig:App:AddRes:PUSB} visualizes these cross\=/table, matching-unlabeled-set-size results graphically.  For \nnpuOpt{} in Figure~\ref{fig:App:AddRes:PUSB}, three unlabeled set configurations are considered namely, \nnpuTE{} and \nnpuAll{} with ${\abs{\uset{\tr}} = \abs{\uset{\te}} = 3{,}000}$ as well as \nnpuTE{} with ${\abs{\uset{\te}} = 6{,}000}$. Observe that these are the only \nnpuOpt{} configurations using at most 6,000~unlabeled examples.

As mentioned in Section~\ref{sec:ExpRes:NonidenticalSupports}, when there is little to no dataset shift, shift-unaware methods (e.g.,~\nnPU) are expected to be the top performer.  As an intuition why -- when a method searches for a non-existent phenomenon, any patterns found will not generalize.  Since \nnpuOpt{} is the top performer for MNIST and~KMNIST despite not accounting for shift at all, it then stands to reason that \citet{Kato:2019}'s invariance-of-order bias induces only a small shift here.

We saw in Section~\ref{sec:ExpRes:NonidenticalSupports} that for such mild shifts (e.g.,~no bias), \PUc{}~often outperforms our methods.  We generally see the same trend in Table~\ref{tab:App:AddExp:PUSB:Results} for MNIST and KMNIST (primary exception being \pupnu{} for MNIST). This is again expected. Under mild shifts, covariate shift's consistent input-output relation assumption generally holds.  In addition, importance function~${\Wx \approx 1}$ for all~$\X$ under limited bias, in which case \PUc{} simplifies to essentially standard \nnPU{}.

All of our methods outperformed all baselines for FashionMNIST.  What is more, our methods outperformed \pusb{} in all but one case (\punu{} for KMNIST) even after accounting for unlabeled set size (Figure~\ref{fig:App:AddRes:PUSB}). In fact, \pusb{} always lagged \nnpuOpt{}.  This hints at a level of brittleness for \citeauthor{Kato:2019}'s method since \pusb{} struggled on a bias condition it specifically targets.

\clearpage

\begin{table}[t]
  \centering
  \caption{Positive priors, dataset sizes (including the validation set), and mini\=/batch sizes for Section~\ref{sec:App:AddRes:PUSB}'s invariance of order selection bias experiments.  The first column lists the table where each setup's corresponding results are enumerated.}\label{tab:App:AddRes:PUSB:DatasetSizes}
\begin{tabular}{@{}lrrrrrrrrrr@{}}
  \toprule
                  & \multicolumn{2}{c}{{\footnotesize Prior}} & \multicolumn{4}{c}{{\footnotesize Dataset Size}}     & \multicolumn{4}{c}{{\footnotesize Batch Size}} \\\cmidrule(lr){2-3}\cmidrule(lr){4-7}\cmidrule(l){8-11}
                                             & $\prd{\tr}$  & $\prd{\te}$  & $\np$      & $\nTrU$    & $\nTeU$    & $\nTest$  & $\decx$ & $\cdcx$ & \pusb{} & \PNte{} \\\midrule
  Table~\ref{tab:App:AddExp:PUSB:Results:3K} & 0.5          & 0.5          & 1,000      & 3,000      & 3,000      & 5,000     & 2,500   & 2,500   & 1,000   & 2,000   \\
  Table~\ref{tab:App:AddExp:PUSB:Results:6K} & 0.5          & 0.5          & 1,000      & 6,000      & 6,000      & 5,000     & 5,000   & 5,000   & 1,000   & 4,000   \\
  \bottomrule
\end{tabular}
 \end{table}

\begin{table}[t]
  \caption{Inductive misclassification rate~(\%) mean and standard deviation over 100~trials for the experiments using \citet{Kato:2019}'s invariance-of-order setup on the MNIST, FashionMNIST, and KMNIST datasets.  Bold face denotes each dataset's best performing method according to mean misclassification rate. Our methods -- \nnpurpl{}, \pupnu{}, and \punu{} -- are denoted with~\ourMethodKey.}\label{tab:App:AddExp:PUSB:Results}
    \begin{subtable}[h]{\textwidth}
      \centering
      \caption{${\abs{\uset{\tr}} = \abs{\uset{\te}} = 3{,}000}$}\label{tab:App:AddExp:PUSB:Results:3K}
      {\small
\newcommand{\PUResN}[2]{#1~(#2)}
\newcommand{\PUResB}[2]{\textbf{\PUResN{#1}{#2}}}

\renewcommand{\arraystretch}{1.2}
\setlength{\dashlinedash}{0.4pt}
\setlength{\dashlinegap}{1.5pt}
\setlength{\arrayrulewidth}{0.3pt}

\setlength{\tabcolsep}{7.7pt}

\begin{tabular}{@{}lrrrrrrr@{}}
  \toprule
  \multirow{2}{*}{Dataset} &            & \multicolumn{2}{c}{\footnotesize Two\=/Step (\putwo)} & \multicolumn{3}{c}{\footnotesize Baselines} & \multicolumn{1}{c@{}}{\footnotesize Ref.}\\\cmidrule(lr){3-4}\cmidrule(lr){5-7}\cmidrule(l){8-8}
               & \nnpurpl{}\ourMethodKey{} & \bpnu{}\ourMethodKey{} & \wuu{}\ourMethodKey{} & \PUc{}               & \pusb{}              & \nnpuOpt{}           & \PNte{}                \\\midrule
  MNIST        & \PUResN{13.0}{2.3}        & \PUResN{ 9.7}{1.3}     & \PUResN{11.6}{1.4}    & \PUResN{10.6}{1.1}   & \PUResN{15.9}{1.0}   & \PUResB{ 8.8}{0.9}   & \PUResN{ 3.6}{0.3}     \\
  FashionMNIST & \PUResN{ 6.4}{1.4}        & \PUResB{ 5.3}{0.7}     & \PUResN{ 5.9}{1.0}    & \PUResN{ 9.0}{1.1}   & \PUResN{10.5}{1.2}   & \PUResN{ 8.5}{1.3}   & \PUResN{ 3.5}{0.3}     \\
  KMNIST       & \PUResN{31.6}{2.4}        & \PUResN{29.7}{2.2}     & \PUResN{33.7}{2.3}    & \PUResN{27.3}{1.4}   & \PUResN{33.4}{1.2}   & \PUResB{24.6}{1.4}   & \PUResN{16.4}{0.8}     \\
  \bottomrule
\end{tabular}
       }
    \end{subtable}
    \hfill
    \begin{subtable}[h]{\textwidth}
      \centering
      \caption{${\abs{\uset{\tr}} = \abs{\uset{\te}} = 6{,}000}$}\label{tab:App:AddExp:PUSB:Results:6K}
      {\small
\newcommand{\PUResN}[2]{#1~(#2)}
\newcommand{\PUResB}[2]{\textbf{\PUResN{#1}{#2}}}

\renewcommand{\arraystretch}{1.2}
\setlength{\dashlinedash}{0.4pt}
\setlength{\dashlinegap}{1.5pt}
\setlength{\arrayrulewidth}{0.3pt}

\setlength{\tabcolsep}{7.7pt}

\begin{tabular}{@{}lrrrrrrr@{}}
  \toprule
  \multirow{2}{*}{Dataset} &            & \multicolumn{2}{c}{\footnotesize Two\=/Step (\putwo)} & \multicolumn{3}{c}{\footnotesize Baselines} & \multicolumn{1}{c@{}}{\footnotesize Ref.}\\\cmidrule(lr){3-4}\cmidrule(lr){5-7}\cmidrule(l){8-8}
               & \nnpurpl{}\ourMethodKey{} & \bpnu{}\ourMethodKey{} & \wuu{}\ourMethodKey{} & \PUc{}               & \pusb{}              & \nnpuOpt{}           & \PNte{}                \\\midrule
  MNIST        & \PUResN{10.5}{1.8}        & \PUResN{ 8.5}{1.2}     & \PUResN{ 9.3}{1.0}    & \PUResN{10.2}{1.1}   & \PUResN{14.2}{1.0}   & \PUResB{ 8.0}{0.9}   & \PUResN{ 2.8}{0.2}     \\
  FashionMNIST & \PUResN{ 5.6}{1.3}        & \PUResB{ 4.8}{0.6}     & \PUResN{ 5.0}{0.8}    & \PUResN{ 9.1}{1.2}   & \PUResN{10.0}{1.2}   & \PUResN{ 8.2}{1.2}   & \PUResN{ 3.1}{0.2}     \\
  KMNIST       & \PUResN{29.6}{2.2}        & \PUResN{29.3}{2.1}     & \PUResN{32.0}{2.2}    & \PUResN{27.0}{1.4}   & \PUResN{32.1}{1.2}   & \PUResB{24.1}{1.4}   & \PUResN{13.7}{0.7}     \\
  \bottomrule
\end{tabular}
       }
     \end{subtable}
\end{table}

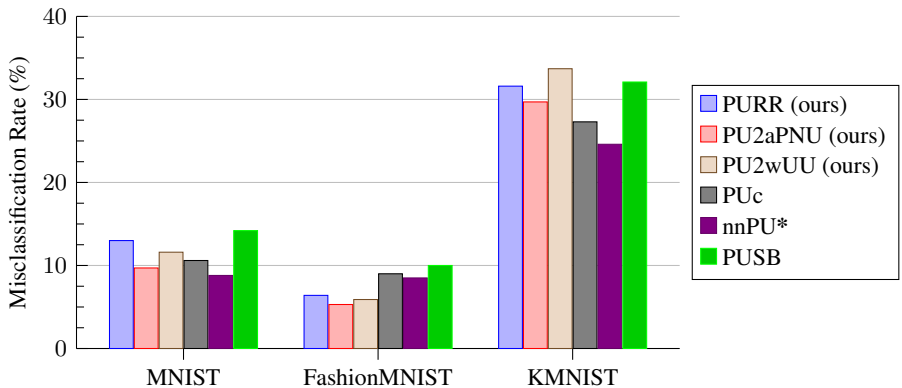
\begin{figure}[t]
  \centering
  \pgfplotstableread[col sep=comma]{plots/data/pusb-bar.csv}\datatable%
  \begin{tikzpicture}%
    \begin{axis}[%
        axis lines*=left,%
        ymajorgrids,  %
        bar width=9pt,%
        height=6cm,%
        width=9.6cm,%
        ymin=0,%
        ymax=40,%
        ybar={\BarLineWidth},%
        ytick distance={10},%
        minor y tick num={3},%
        y tick label style={font=\footnotesize},%
        ylabel={\footnotesize Misclassification Rate~(\%)},%
        ylabel shift =-2pt,  %
        title style={yshift=-4pt,},
        xtick=data,%
        x tick label style={font=\small,
                            align=center},%
        every tick/.style={color=black,
                           line width=\BarLineWidth},%
        xticklabels from table={\datatable}{Dataset},%
        enlarge x limits=0.275,%
        legend style={at={(1.02,0.5)},
                      anchor=west,
                      font=\small,},
        legend cell align={left},              %
        title style = {text depth=0.5ex},      %
    ]%
    \addplot table [x expr=\coordindex, y index=1,] {\datatable};%
    \addplot table [x expr=\coordindex, y index=2] {\datatable};%
    \addplot table [x expr=\coordindex, y index=3] {\datatable};%
    \addplot table [x expr=\coordindex, y index=4] {\datatable};%
    \addplot table [x expr=\coordindex, y index=5] {\datatable};%
    \addplot table [x expr=\coordindex, y index=6] {\datatable};%
    \legend{\nnpurpl\ourMethodText{},\pupnu\ourMethodText{},\punu\ourMethodText{},\PUc{},\nnpuOpt{},\pusb{}}%
    \end{axis}%
  \end{tikzpicture}%
   \caption{Mean inductive misclassification rate~(\%) over 100~trials for the experiments using \citet{Kato:2019}'s invariance-of-order setup on the MNIST, FashionMNIST, and KMNIST datasets.  All learners saw up to 6,000~total unlabeled examples with results cross\=/compiled between Table~\ref{tab:App:AddExp:PUSB:Results:3K} (for our methods and \PUc) and Table~\ref{tab:App:AddExp:PUSB:Results:6K} (for \pusb).  Here \nnpuOpt{} considers three different unlabeled set configurations as described in Section~\ref{sec:App:AddRes:PUSB}.}\label{fig:App:AddRes:PUSB}
\end{figure}
 
\clearpage
\subsection{Empirical Comparison of Absolute-Value and Non-Negativity Corrections}\label{sec:App:AddRes:AbspuVsNnpu}

Section~\ref{sec:AbsValCorrection} describes our streamlined absolute\=/value correction to address PU~learning overfitting.  This section compares our simpler absolute\=/value correction to \citet{Kiryo:2017}'s non\=/negativity correction using~$\max$ and ``defitting.''

\subsubsection{Ordinary Positive-Unlabeled Learning Performance Without Distributional Shift}

We first consider a direct comparison of \nnPU{} and \absPU{} on \textit{unshifted} data.  $\pset{}$~and $\uset{}$ are constructed identically to the procedure used to construct the positive-labeled and unlabeled-train datasets in our \acronym{}~learning experiments.  Unlike before, the inductive test set is now drawn from the \textit{training} distribution. We then trained classifiers using \nnPU{} and \absPU{} with the sigmoid loss.  In all experiments, the classifiers had identical initial weights and were trained on identical dataset splits.

Hyperparameters (including~$\gradAtten$) were tuned using \nnPU{}; these identical hyperparameters were then used for \absPU{} (i.e.,~not in any way tuned for absolute-value correction).  Therefore, the results represent the \textit{performance floor} when transitioning from \nnPU{} to \absPU{}.  This was done due to time constraints.

Table~\ref{tab:App:AbsPuVsNnpu:PUc} compares \absPU{} and \nnPU{} for the datasets in Sections~\ref{sec:ExpRes:NonidenticalSupports}\footnote{The test conditions for MNIST, 20~Newsgroups, and~CIFAR10 correspond to the unbiased test conditions (i.e.,~row~1 for each dataset where \mbox{\pDsTrain = \pDsTest}) in Table~\ref{tab:ExpRes:Nonoverlap}/Table~\ref{tab:App:AddRes:FullClassPartition}.},~\ref{sec:ExpRes:Spam}, and~\ref{sec:App:AddRes:MarginalBias}. We also report the difference between \absPU{} and \nnPU{} with a positive number indicating that \absPU{} performed better that~\nnPU{}.

\absPU{} was the top performer on eight of fourteen benchmarks and tied with \nnPU{} on two others; the results are generally too close to be statistically significant. Both methods had comparable variances. In summary, \absPU{} is both simpler and saw similar or slightly better performance than \nnPU{} on unbiased data, even under conditions (i.e.,~hyperparameters) that favor~\nnPU{}\@.

\begin{table}[h]
  \centering
  \caption{Comparison of inductive misclassification rate~(\%) mean and standard deviation over 100~trials for \absPU{} and \nnPU{} on unshifted data.  Boldface denotes the best performing algorithm according to mean misclassification rate. For the difference~(Diff.) column, a positive value denotes that~\absPU{} outperformed~\nnPU{}.}\label{tab:App:AbsPuVsNnpu:PUc}
\newcommand{\NResB}[2]{\textbf{\NRes{#1}{#2}}}
\newcommand{\MRTwo}[1]{\multirow{2}{*}{#1}}
\newcommand{\MRTwoS}[1]{\multirow{2}{*}{\shortstack[r]{#1}}}

\setlength{\tabcolsep}{9.0pt}

\newcommand{\StdDevRes}[4]{\ifempty{#1}{\phantom{--}}{--}#2 (\ifempty{#3}{\phantom{--}}{--}\ifempty{#4}{0\phantom{.0}}{#4})}
{
  \small
  \begin{tabular}{@{}lrrr@{}}
    \toprule
    \MRTwo{Dataset}& \MRTwo{\absPU{}}        & \MRTwo{\nnPU{}}          & \MRTwoS{\nnPU{} -- \absPU{}\\(Diff.)} \\
                   &                         &                          &                             \\\midrule
    MNIST          &   \NRes{ 6.6}{0.7}      & \NResB{ 6.5}{0.7}        & \StdDevRes{-}{0.1}{}{}      \\
    20~Newsgroups  &  \NResB{13.3}{1.3}      &  \NRes{13.5}{1.2}        & \StdDevRes{}{0.2}{-}{0.1}   \\
    CIFAR10        &  \NResB{12.4}{0.7}      & \NResB{12.4}{0.7}        & \StdDevRes{}{0}{}{}         \\\hdashline
    TREC Spam      &  \NResB{ 2.0}{1.0}      &  \NRes{ 2.1}{0.9}        & \StdDevRes{}{0.1}{-}{0.1}   \\\hdashline
    banana         &  \NResB{10.5}{1.0}      & \NResB{10.5}{1.1}        & \StdDevRes{}{0}{}{0.1}      \\
    cod-rna        &  \NResB{10.3}{1.8}      &  \NRes{10.4}{2.0}        & \StdDevRes{}{0.1}{}{0.2}    \\
    susy           &   \NRes{28.8}{1.7}      & \NResB{28.7}{1.8}        & \StdDevRes{-}{0.1}{}{0.1}   \\
    ijcnn1         &  \NResB{10.1}{1.4}      &  \NRes{10.2}{1.5}        & \StdDevRes{}{0.1}{}{0.1}    \\
    covtype.b      &  \NResB{32.8}{2.2}      &  \NRes{33.3}{2.1}        & \StdDevRes{}{0.5}{-}{0.1}   \\
    phishing       &   \NRes{ 8.6}{1.3}      & \NResB{ 8.5}{1.2}        & \StdDevRes{-}{0.1}{-}{0.1}  \\
    a9a            &  \NResB{15.9}{1.1}      &  \NRes{16.0}{1.2}        & \StdDevRes{}{0.1}{}{0.1}    \\
    connect4       &   \NRes{24.6}{2.2}      & \NResB{24.4}{2.0}        & \StdDevRes{-}{0.2}{-}{0.2}  \\
    w8a            &  \NResB{17.8}{1.6}      &  \NRes{17.9}{1.6}        & \StdDevRes{}{0.1}{}{}       \\
    epsilon        &  \NResB{31.1}{1.4}      &  \NRes{31.2}{1.7}        & \StdDevRes{}{0.1}{}{0.3}    \\
    \bottomrule
  \end{tabular}
}
 \end{table}

\subsubsection{Ordinary Positive-Unlabeled Learning Performance Under Distribution Shift}\label{sec:App:AbsPuVsNnpu:WithShift}

The previous section compared the performance of \nnPU{} and \absPU{} under ideal conditions, i.e.,~no positive shift.  This section compares \nnPU{} and \absPU{} \textit{with positive shift}, specifically under the \acronym{}~learning conditions we use in our experimental evaluation.

Like in the previous section, all classifiers in each experimental trial had identical initial weights and saw identical dataset splits.  Hyperparameters (including~$\gradAtten$) were tuned using \nnPU{}; these identical hyperparameters were then used for \absPU{} (i.e.,~not in any way tuned for absolute-value correction).  Therefore, the results again represent the \textit{performance floor} if transitioning from \nnPU{} to \absPU{}.  This choice was made due to limited time.

Recall from Section~\ref{sec:ExpRes} that evaluation baseline \nnpuOpt{} considers two \nnPU{}-based classifiers -- one trained with unlabeled set $\uset{\te}$ and the other trained with unlabeled set ${\uset{\tr} \cup \uset{\te}}$ (using the true composite prior), and we report whichever of those two classifiers performed best on average.  In this section, we introduce \abspuOpt{}, which like \nnpuOpt{}, considers two classifiers separately trained with the different unlabeled set configurations: $\uset{\te}$ and~${\uset{\tr} \cup \uset{\te}}$. The only difference is that \abspuOpt{}, as its name would suggest, uses our \absPU{}~risk estimator. We specifically separated this section to delineate the baseline performance of our contribution~(\absPU) versus existing methods~(\nnPU).

Table~\ref{tab:App:AbsPuVsNnpu:Bias:Nonidentical} compares \abspuOpt{} and \nnpuOpt{} for the extended set of experiments in Table~\ref{tab:App:AddRes:FullClassPartition} (see Section~\ref{sec:App:AddRes:Nonoverlap}).  Recall that those experiments tested cases where some positive subclasses exist only in the test distribution.  Similar to Table~\ref{tab:App:AbsPuVsNnpu:PUc}, a positive value in the column labeled ``\text{Diff.}'' denotes that \abspuOpt{} performed better than \nnpuOpt{}.

For multiple positive\=/train~(\pDsTrain) class configurations (e.g.,~MNIST\@ ~\mbox{\pDsTrain = $\set{1,3,5}$}), \abspuOpt{} and \nnpuOpt{} exhibited similar performance.  When there was a large difference between the two methods (e.g.,~20~Newsgroups \mbox{\pDsTrain = $\set{\text{misc}, \text{rec}}$}), \abspuOpt{} had significantly better mean accuracy -- reducing the misclassification rate by multiple percentage points. The difference between the methods was most pronounced when~\pDsTrain{} and~\pDsTest{} are disjoint.

These results indicate that in some cases, \abspuOpt{} is learning decision boundaries that better generalize to \textit{unseen types of data}.  To be clear, this does not apply to all datasets (CIFAR10~exhibited little difference between the methods except when the positive supports were disjoint) nor even to all class partitions within a dataset (see MNIST positive-train classes~$\set{7,9}$ versus~$\set{1,3,5}$). It should also be noted that missing positive subclasses is a more extreme form of positive shift.  The next set of results considers the more mild case of marginal-distribution magnitude shifts.

\begin{table}[h]  %
  \centering
  \caption{Comparison of inductive misclassification rate~(\%) mean and standard deviation over 100~trials for \abspuOpt{} and \nnpuOpt{} for the experimental shift tasks (eight per dataset) in Table~\ref{tab:App:AddRes:FullClassPartition} with partially/fully disjoint positive class supports. Boldface denotes the best performing task according to mean misclassification rate. For the difference column, a positive value indicates~\abspuOpt{} outperformed~\nnpuOpt{}.}\label{tab:App:AbsPuVsNnpu:Bias:Nonidentical}
\newcommand{\DsNamePU}[1]{\multirow{7}{*}{\rotatebox[origin=c]{90}{#1}}}

\newcommand{\BigArrowBase}[1]{\multicolumn{1}{c@{}}{\multirow{3}{*}{$\Bigg#1$}}}
\newcommand{\BigUpArrow}{\BigArrowBase{\uparrow}}
\newcommand{\BigDownArrow}{\BigArrowBase{\downarrow}}

\renewcommand{\arraystretch}{1.2}
\setlength{\dashlinedash}{0.4pt}
\setlength{\dashlinegap}{1.5pt}
\setlength{\arrayrulewidth}{0.3pt}

\newcommand{\NDefFullPU}[1]{\multirow{6}{*}{\shortstack[l]{#1}}}
\newcommand{\PTestDefFull}[1]{\NDefFullPU{#1}}

\setlength{\tabcolsep}{5.4pt}

{\centering
  \small
  \begin{tabular}{@{}llllllrrr@{}}
  \toprule
  \multirow{1}{*}{} & \multirow{1}{*}{N} & \multirow{1}{*}{\pDsTest} & \multirow{1}{*}{\pDsTrain} &  \multirow{1}{*}{$\prd{\tr}$} & \multirow{1}{*}{$\prd{\te}$} & \abspuOpt{}      & \nnpuOpt{}       & Diff. \\\midrule
  \DsNamePU{MNIST} & \NDefFullPU{0, 2, 4,\\ 6, 8} & \PTestDefFull{1, 3, 5,\\ 7, 9}
                              & \multirow{3}{*}{7, 9}    & 0.29 & 0.5  & \NResPUT{34.4}{2.6} &  \NResPU{36.7}{2.7} & \NResPUDiff{}{2.3}{}{0.1}    \\\cdashline{5-9}
    &       &         &                                  & 0.5  & 0.5  & \NResPUT{33.1}{2.3} &  \NResPU{35.1}{2.5} & \NResPUDiff{}{2.0}{}{0.2}    \\\cdashline{5-9}
    &       &         &                                  & 0.71 & 0.5  & \NResPUT{32.7}{2.2} &  \NResPU{34.5}{2.9} & \NResPUDiff{}{1.8}{}{0.7}    \\\cdashline{4-9}
    &       &                 & \multirow{3}{*}{1, 3, 5} & 0.38 & 0.5  & \NResPUT{25.9}{1.2} & \NResPUT{25.9}{1.1} & \NResPUDiff{}{}{-}{0.1}    \\\cdashline{5-9}
                                   &       &         &   & 0.5  & 0.5  &  \NResPU{27.1}{1.3} & \NResPUT{26.9}{1.2} & \NResPUDiff{-}{0.2}{-}{0.1}    \\\cdashline{5-9}
                                   &       &         &   & 0.63 & 0.5  &  \NResPU{28.7}{1.1} & \NResPUT{28.5}{1.2} & \NResPUDiff{-}{0.2}{}{0.1}    \\\cdashline{2-9}
      & {0, 2}  & {5, 7}      & {1, 3}                   & 0.5  & 0.5  & \NResPUT{25.7}{6.9} &  \NResPU{30.9}{5.3} & \NResPUDiff{}{5.2}{-}{1.6}    \\\midrule
  \DsNamePU{20~Newsgroups} & \NDefFullPU{sci, soc,\\talk} & \PTestDefFull{alt, comp,\\ misc, rec}
                            & \multirow{3}{*}{misc, rec} & 0.37 & 0.56 & \NResPUT{27.0}{1.9} &  \NResPU{28.8}{1.3} & \NResPUDiff{}{1.8}{-}{0.6}    \\\cdashline{5-9}
               & &            &                          & 0.56 & 0.56 & \NResPUT{26.0}{1.7} &  \NResPU{28.8}{1.7} & \NResPUDiff{}{2.8}{}{}    \\\cdashline{5-9}
               & &            &                          & 0.65 & 0.56 & \NResPUT{25.9}{1.7} &  \NResPU{29.0}{1.8} & \NResPUDiff{}{3.1}{}{0.1}    \\\cdashline{4-9}
               & &            & \multirow{3}{*}{comp}    & 0.37 & 0.56 & \NResPUT{31.2}{0.7} &  \NResPU{31.4}{0.7} & \NResPUDiff{}{0.2}{}{}    \\\cdashline{5-9}
               & &            &                          & 0.56 & 0.56 & \NResPUT{31.0}{0.9} &  \NResPU{31.2}{0.8} & \NResPUDiff{}{0.2}{-}{0.1}    \\\cdashline{5-9}
               & &            &                          & 0.65 & 0.56 & \NResPUT{31.0}{0.8} &  \NResPU{31.3}{0.7} & \NResPUDiff{}{0.3}{-}{0.1}    \\\cdashline{2-9}
               & {misc, rec} & {soc, talk} & {alt, comp} & 0.55 & 0.46 & \NResPUT{34.6}{5.0} &  \NResPU{35.3}{5.2} & \NResPUDiff{}{0.7}{}{0.2}    \\\midrule
  \DsNamePU{CIFAR10}  & \NDefFullPU{Bird, Cat,\\Deer, Dog,\\Frog, Horse} & \PTestDefFull{Plane, \\ Auto, Ship, \\ Truck}
                               & \multirow{3}{*}{Plane}  & 0.14 & 0.4  & \NResPUT{26.5}{1.0} &  \NResPU{26.7}{1.0} & \NResPUDiff{}{0.2}{}{}    \\\cdashline{5-9}
                           & & &                         & 0.4  & 0.4  & \NResPUT{27.4}{1.0} & \NResPUT{27.4}{1.0} & \NResPUDiff{}{}{}{}    \\\cdashline{5-9}
                           & & &                         & 0.6  & 0.4  & \NResPUT{28.3}{1.1} &  \NResPU{28.4}{1.0} & \NResPUDiff{}{0.1}{-}{0.1}    \\\cdashline{4-9}
    & & & \multirow{3}{*}{\shortstack[l]{Auto,\\Truck}}  & 0.25 & 0.4  & \NResPUT{20.3}{0.8} & \NResPUT{20.3}{0.8} & \NResPUDiff{}{}{}{}    \\\cdashline{5-9}
                                                   & & & & 0.4  & 0.4  &  \NResPU{20.4}{0.9} & \NResPUT{20.3}{0.8} & \NResPUDiff{-}{0.1}{-}{0.1}    \\\cdashline{5-9}
                                                   & & & & 0.55 & 0.4  &  \NResPU{20.9}{0.9} & \NResPUT{20.5}{0.9} & \NResPUDiff{-}{0.4}{}{}    \\\cdashline{2-9}
            & {Deer, Horse} & {Plane, Auto} & {Cat, Dog} & 0.5  & 0.5  & \NResPUT{44.6}{1.8} &  \NResPU{47.5}{2.0} & \NResPUDiff{}{2.9}{}{0.2}    \\
  \bottomrule
\end{tabular}
}
 \end{table}

Table~\ref{tab:App:AbsPuVsNnpu:Bias:PUc} compares \abspuOpt{} and \nnpuOpt{} for the 10~LIBSVM datasets in Table~\ref{tab:App:AddRes:PUc} (see Section~\ref{sec:App:AddRes:MarginalBias}).  Recall that in these experiments, the positive-train and positive-test class\=/conditionals have identical supports.  For seven of ten benchmarks, \abspuOpt{} had better mean performance than \nnpuOpt{} and had equivalent performance on one other benchmark.  \abspuOpt{} did have generally higher result variance.  For some benchmarks (e.g.,~\texttt{ijcnn1}, \texttt{covtype.b}, \texttt{epsilon}, etc.), the change in variance was more than offset by the improvement in mean accuracy.  Had the \abspuOpt{} learning rates been tuned directly instead of using \nnpuOpt{}'s~hyperparameter settings, we expect this variance difference would have been mitigated.  Again however, limited time prevented that experiment.

\begin{table}[h]  %
  \centering
  \caption{Comparison of inductive misclassification rate~(\%) mean and standard deviation over 100~trials for \abspuOpt{} and \nnpuOpt{} for the 10~LIBSVM datasets in Table~\ref{tab:App:AddRes:PUc} under \citet{Sakai:2019}'s mean feature vector bias. Boldface denotes the best performing task according to mean misclassification rate. For the difference column, a positive value indicates~\abspuOpt{} outperformed~\nnpuOpt{}.}\label{tab:App:AbsPuVsNnpu:Bias:PUc}
\setlength{\tabcolsep}{9.5pt}

{
  \small
  \begin{tabular}{@{}lrrrr@{}}
    \toprule
    \multirow{1}{*}{Dataset} & \multirow{1}{*}{$d$} & \abspuOpt{}         & \nnpuOpt{}       & Diff.  \\\midrule
    banana      & 2      & \NResPUT{28.5}{4.1} &  \NResPU{28.8}{3.8} & \NResPUDiff{}{0.3}{-}{0.3}    \\%
    cod-rna     & 8      &  \NResPU{25.1}{2.5} & \NResPUT{24.9}{2.3} & \NResPUDiff{-}{0.2}{-}{0.2}   \\%
    susy        & 18     & \NResPUT{45.9}{3.9} & \NResPUT{45.9}{3.9} & \NResPUDiff{}{}{}{}           \\%
    ijcnn1      & 22     & \NResPUT{33.3}{3.9} &  \NResPU{34.7}{3.6} & \NResPUDiff{}{1.4}{-}{0.3}    \\%
    covtype.b   & 54     & \NResPUT{54.6}{3.1} &  \NResPU{55.5}{2.8} & \NResPUDiff{}{0.9}{-}{0.3}    \\%
    phishing    & 68     &  \NResPU{22.9}{4.2} & \NResPUT{22.5}{4.1} & \NResPUDiff{-}{0.4}{-}{0.1}   \\%
    a9a         & 123    & \NResPUT{32.0}{2.5} &  \NResPU{32.5}{2.3} & \NResPUDiff{}{0.5}{-}{0.2}    \\%
    connect4    & 126    & \NResPUT{44.9}{3.1} &  \NResPU{45.1}{2.6} & \NResPUDiff{}{0.2}{-}{0.5}    \\%
    w8a         & 300    & \NResPUT{40.0}{4.0} &  \NResPU{41.1}{4.3} & \NResPUDiff{}{1.1}{}{0.3}     \\%
    epsilon     & 2,000  & \NResPUT{64.1}{1.4} &  \NResPU{64.6}{1.5} & \NResPUDiff{}{0.5}{}{0.1}     \\
    \bottomrule
  \end{tabular}
}
 \end{table}

In summary, \abspuOpt{}'s performance is comparable or slightly/significantly better than that of \nnpuOpt{} under \acronym{}~learning conditions that are deleterious to ordinary PU~risk estimators but that may be more realistic to real\=/world data.

\subsubsection{Effect of Absolute-Value Correction on Our \acronym{}~Learning Methods}

This section examines the effect of using absolute\=/value correction over non\=/negativity correction for our three \acronym{}~learning methods -- \nnpurpl{}, \pupnu{}, and \punu{}.  Recall that non\=/negativity correction requires custom ERM~algorithms to support ``defitting.'' Section~\ref{sec:App:ErmAlgs} describes our methods' custom ERM~frameworks when using non\=/negativity.

Due to time constraints, hyperparameter tuning was performed using non\=/negativity correction with the same hyperparameters used for the absolute\=/value based methods. Therefore, these results maximally favor the baseline of non\=/negativity correction.

Table~\ref{tab:App:AbsNn:Nonoverlap}'s experiments are identical to Table~\ref{tab:App:AddRes:FullClassPartition} in Section~\ref{sec:App:AddRes:Nonoverlap}.  ``abs''~denotes our standard \acronym{}~learning methods (see Sections~\ref{sec:WUU} and~\ref{sec:PURPL}) while ``nn''~denotes our methods modified to use \citet{Kiryo:2017}'s~non\=/negativity correction.  For~MNIST, neither absolute\=/value correction nor non\=/negativity clearly outperformed the other.  For the more challenging 20~Newsgroups and CIFAR10 datasets, absolute\=/value correction had consistently better performance than non\=/negativity.  The only exception were the disjoint support experiments and one experimental setup for \punu{} on 20~Newsgroups.  Although not shown in Table~\ref{tab:App:AddRes:FullClassPartition} due to limited space, both correction strategies had comparable variance.

\begin{table}[h]
  \centering
  \caption{Comparison of mean inductive misclassification rate~(\%) over 100~trials for the non-overlapping support experiments in Table~\ref{tab:App:AddRes:FullClassPartition} when using absolute\=/value~(abs) and non\=/negativity~(nn) corrections for our \acronym{}~learning methods.  The best performing method (according to mean misclassification rate) is shown in bold.  A positive difference~(Diff.) denotes that our absolute\=/value correction had better performance. Result standard deviations are comparable for both correction methods but are not shown here to improve table clarity.}\label{tab:App:AbsNn:Nonoverlap}
\newcommand{\DatasetHeader}[1]{\multicolumn{3}{c}{#1}}
\newcommand{\ResultHeader}{abs & nn & Diff.}

\renewcommand{\arraystretch}{1.2}
\setlength{\dashlinedash}{0.4pt}
\setlength{\dashlinegap}{1.5pt}
\setlength{\arrayrulewidth}{0.3pt}

\setlength{\tabcolsep}{8.2pt}
\newcommand{\PDefAbs}[1]{\multirow{3}{*}{\shortstack[l]{#1}}}
\newcommand{\DsName}[1]{\multirow{8}{*}{\rotatebox[origin=c]{90}{#1}}}

\newcommand{\ZPH}{0\phantom{.0}}

\newcommand{\NDefFull}[1]{\multirow{7}{*}{\shortstack[l]{#1}}}
\newcommand{\PTestDefFull}[1]{\NDefFull{#1}}

{\small
  \begin{tabular}{@{}lllllrrrrrrrrr@{}}
    \toprule
     & \multirow{2}{*}{\pDsTest} & \multirow{2}{*}{\pDsTrain} &  \multirow{2}{*}{$\prd{\tr}$} &  \multirow{2}{*}{$\prd{\te}$}& \DatasetHeader{\nnpurpl{}} & \DatasetHeader{\pupnu{}} & \DatasetHeader{\punu{}} \\\cmidrule(lr){6-8}\cmidrule(lr){9-11}\cmidrule(l){12-14}
     &                           &                            &                               &                              & \ResultHeader{}            & \ResultHeader{}          & \ResultHeader{}         \\\midrule
    \DsName{MNIST} & \PTestDefFull{1, 3, 5,\\ 7, 9} & \pDsTest{}
                                 & 0.5    & 0.5   & \textbf{10.0}  & 10.2           & 0.2     & 10.0           & \textbf{ 9.8} & --0.2      & \textbf{11.6}    & 11.7             & 0.1     \\\cdashline{3-14}
    &  & \PDefAbs{7, 9}          & 0.29   & 0.5   & 6.8            & \textbf{6.6}   & --0.2   & \textbf{ 5.3}  & \textbf{ 5.3} & \ZPH{}     & \textbf{ 6.0}    & \textbf{ 6.0}    & \ZPH{}  \\\cdashline{4-14}
    &  &                         & 0.5    & 0.5   & \textbf{ 9.4}  & \textbf{9.4}   & \ZPH{}  & \textbf{ 7.1}  & \textbf{ 7.1} & \ZPH{}     & \textbf{ 8.3}    & \textbf{ 8.3}    & \ZPH{}  \\\cdashline{4-14}
    &  &                         & 0.71   & 0.5   & \textbf{14.0}  & 14.6           & 0.6     & \textbf{11.1}  &         11.3  & 0.2        & \textbf{14.8}    & 15.2             & 0.4     \\\cdashline{3-14}
    &  & \PDefAbs{1, 3, 5}       & 0.38   & 0.5   & 8.1            & \textbf{8.0}   & --0.1   & \textbf{ 6.5}  & \textbf{ 6.5} & \ZPH{}     & \textbf{ 7.6}    & 7.7              & 0.1     \\\cdashline{4-14}
    &  &                         & 0.5    & 0.5   & 10.0           & \textbf{9.9}   & --0.1   & \textbf{ 8.4}  & \textbf{ 8.4} & \ZPH{}     & \textbf{10.2}    & \textbf{10.2}    & \ZPH{}  \\\cdashline{4-14}
    &  &                         & 0.63   & 0.5   & \textbf{12.5}  & 12.9           & 0.4     & \textbf{11.4}  & \textbf{11.4} & \ZPH{}     & \textbf{14.3}    & 14.5             & 0.2     \\\cdashline{3-14}
    & 5, 7  & 1, 3               & 0.5    & 0.5   &          4.0   & \textbf{ 3.9}  & --0.1   & \textbf{ 3.6}  & \textbf{ 3.6} & \ZPH{}     & \textbf{ 3.1}    &          3.2     & 0.1     \\\midrule
  \DsName{20~Newsgroups} & \PTestDefFull{alt, comp,\\ misc, rec} & \pDsTest{}
                                 & 0.56   & 0.56  & \textbf{15.4}  & 15.5           & 0.1     & \textbf{14.9}  & 15.0          & 0.1        & \textbf{16.7}    & \textbf{16.7}    & \ZPH{}  \\\cdashline{3-14}
    &  & \PDefAbs{misc,\\rec}    & 0.37   & 0.56  & \textbf{13.9}  & \textbf{13.9}  & \ZPH{}  & \textbf{12.8}  & \textbf{12.8} & \ZPH{}     & \textbf{14.3}    & \textbf{14.3}    & \ZPH{}  \\\cdashline{4-14}
    &  &                         & 0.56   & 0.56  & \textbf{17.5}  & 17.7           & 0.2     & \textbf{13.5}  & \textbf{13.5} & \ZPH{}     & \textbf{15.1}    & \textbf{15.1}    & \ZPH{}  \\\cdashline{4-14}
    &  &                         & 0.65   & 0.56  & \textbf{20.2}  & 20.8           & 0.6     & \textbf{14.0}  & \textbf{14.0} & \ZPH{}     & \textbf{15.9}    & \textbf{15.9}    & \ZPH{}  \\\cdashline{3-14}
    &  & \PDefAbs{comp}          & 0.37   & 0.56  & \textbf{13.3}  & \textbf{13.3}  & \ZPH{}  & \textbf{13.7}  & \textbf{13.7} & \ZPH{}     & 14.5             & \textbf{14.4}    & --0.1    \\\cdashline{4-14}
    &  &                         & 0.56   & 0.56  & \textbf{16.0}  & 16.5           & 0.5     & \textbf{14.9}  & \textbf{14.9} & \ZPH{}     & \textbf{15.7}    & \textbf{15.7}    & \ZPH{}  \\\cdashline{4-14}
    &  &                         & 0.65   & 0.56  & \textbf{19.2}  & 19.6           & 0.4     & \textbf{15.6}  & \textbf{15.6} & \ZPH{}     & \textbf{16.5}    & \textbf{16.5}    & \ZPH{}  \\\cdashline{3-14}
    & soc, talk & alt, comp      & 0.55   & 0.46  & 5.9            & \textbf{5.8}   & --0.1   & \textbf{ 7.1}  & \textbf{ 7.1} & \ZPH{}     & \textbf{ 5.6}    & 5.7              & 0.1     \\\midrule
  \DsName{CIFAR10} & \PTestDefFull{Plane, \\ Auto, Ship, \\ Truck} & \pDsTest{}
                                 & 0.4    & 0.4   & \textbf{14.1}  & 14.3           &   0.2   & \textbf{14.2}  &         14.4  & 0.2        & \textbf{15.4}    & 15.8             & 0.4     \\\cdashline{3-14}
    &  & \PDefAbs{Plane}         & 0.14   & 0.4   & \textbf{11.9}  & 12.0           &   0.1   & \textbf{11.9}  &         12.0  & 0.1        & \textbf{12.4}    & \textbf{12.4}    & \ZPH{}  \\\cdashline{4-14}
    &  &                         & 0.4    & 0.4   & \textbf{13.8}  & 14.0           &   0.2   & \textbf{14.5}  &         14.6  & 0.1        & \textbf{15.1}    & 15.5             & 0.4     \\\cdashline{4-14}
    &  &                         & 0.6    & 0.4   & \textbf{16.1}  & 16.6           &   0.5   & \textbf{16.7}  &         17.1  & 0.4        & \textbf{20.0}    & 20.2             & 0.2     \\\cdashline{3-14}
    &  & \PDefAbs{Auto,\\Truck}  & 0.25   & 0.4   & \textbf{12.7}  & 12.8           &   0.1   & \textbf{12.4}  &         12.5  & 0.1        & \textbf{12.8}    & 13.0             & 0.2     \\\cdashline{4-14}
    &  &                         & 0.4    & 0.4   & \textbf{14.1}  & 14.3           &   0.2   & \textbf{13.9}  &         14.0  & 0.1        & \textbf{14.4}    & 14.6             & 0.2     \\\cdashline{4-14}
    &  &                         & 0.55   & 0.4   & \textbf{16.0}  & 16.4           &   0.4   & \textbf{16.2}  &         16.3  & 0.1        & \textbf{17.1}    & 17.4             & 0.3     \\\cdashline{3-14}
    & Plane, Auto & Cat, Dog     & 0.5    & 0.5   & 14.1           & \textbf{14.0}  & --0.1   &         14.9   & \textbf{14.8} & --0.1      & \textbf{11.2}    & 11.3             & 0.1     \\
    \bottomrule
  \end{tabular}
}
 \end{table}

Table~\ref{tab:App:AbsNn:PUc}'s experiments match the experimental conditions for the 10~LIBSVM datasets in Table~\ref{tab:App:AddRes:PUc} from Section~\ref{sec:App:AddRes:MarginalBias}.  Biasing follows \citet{Sakai:2019}'s median feature vector-based approach.  Neither absolute\=/value nor non\=/negativity correction consistently outperformed the other in these LIBSVM experiments. Note though that since absolute\=/value correction is a simpler method with one less hyperparameter,~$\gradAtten$, to tune, comparable performance implicitly favors absolute\=/value correction over non\=/negativity.

\begin{table}[h]
  \centering
  \caption{Comparison of inductive misclassification rate~(\%) mean and standard deviation over 100~trials for Table~\ref{tab:App:AddRes:PUc}'s LIBSVM dataset experiments using \citeauthor{Sakai:2019}'s mean feature vector biasing with absolute\=/value~(abs) and non\=/negativity~(nn) corrections for our \acronym{}~learning methods.  The best performing method (according to mean misclassification rate) is shown in bold.  A positive difference~(Diff.) denotes that our absolute\=/value correction had better performance than non\=/negativity correction.}\label{tab:App:AbsNn:PUc}
\newcommand{\DatasetHeader}[1]{\multicolumn{3}{c}{#1}}
\newcommand{\ResultHeader}{abs & nn & Diff.}

\renewcommand{\arraystretch}{1.2}
\setlength{\dashlinedash}{0.4pt}
\setlength{\dashlinegap}{1.5pt}
\setlength{\arrayrulewidth}{0.3pt}

\setlength{\tabcolsep}{5.8pt}

\newcommand{\ResAbs}[2]{#1 (#2)}  %
\newcommand{\ResAbsT}[2]{\textbf{\ResAbs{#1}{#2}}}  %
\newcommand{\ResAbsDiff}[4]{\ifempty{#1}{\phantom{--}}{--}\ifempty{#2}{0\phantom{.0}}{#2} (\ifempty{#3}{\phantom{--}}{--}\ifempty{#4}{0\phantom{.0}}{#4})}  %

{\footnotesize
  \begin{tabular}{@{}lrrrrrrrrr@{}}
    \toprule
    \multirow{2}{*}{Dataset} & \DatasetHeader{\nnpurpl{}} & \DatasetHeader{\pupnu{}} & \DatasetHeader{\punu{}} \\\cmidrule(lr){2-4}\cmidrule(lr){5-7}\cmidrule(l){8-10}
                             & \ResultHeader{}            & \ResultHeader{}          & \ResultHeader{}         \\\midrule
    banana     & \ResAbsT{12.9}{2.1}   & \ResAbsT{12.9}{2.2}   & \ResAbsDiff{}{}{}{0.1}      &  \ResAbs{11.8}{1.6}  & \ResAbsT{11.7}{1.6}   & \ResAbsDiff{-}{0.1}{}{}     & \ResAbsT{13.3}{2.3}   &  \ResAbs{14.0}{2.3}   & \ResAbsDiff{}{0.7}{}{}       \\
    cod-rna    &  \ResAbs{14.7}{2.6}   & \ResAbsT{14.6}{2.9}   & \ResAbsDiff{-}{0.1}{}{0.3}  & \ResAbsT{15.1}{3.2}  & \ResAbsT{15.1}{3.2}   & \ResAbsDiff{}{}{}{}         & \ResAbsT{15.5}{2.9}   & \ResAbsT{15.5}{3.3}   & \ResAbsDiff{}{}{}{0.4}       \\
    susy       & \ResAbsT{24.2}{2.1}   &  \ResAbs{24.6}{2.1}   & \ResAbsDiff{}{0.4}{}{}      & \ResAbsT{25.6}{2.2}  & \ResAbsT{25.6}{2.2}   & \ResAbsDiff{}{}{}{}         & \ResAbsT{25.8}{2.2}   &  \ResAbs{26.0}{2.1}   & \ResAbsDiff{}{0.2}{-}{0.1}   \\
    ijcnn1     & \ResAbsT{22.7}{2.8}   &  \ResAbs{23.0}{2.8}   & \ResAbsDiff{}{0.3}{}{}      & \ResAbsT{17.7}{2.8}  &  \ResAbs{19.0}{2.9}   & \ResAbsDiff{}{1.3}{}{0.1}   & \ResAbsT{24.6}{3.1}   &  \ResAbs{24.9}{2.9}   & \ResAbsDiff{}{0.3}{-}{0.2}   \\
    covtype.b  & \ResAbsT{29.5}{2.9}   &  \ResAbs{29.6}{2.9}   & \ResAbsDiff{}{0.1}{}{}      & \ResAbsT{32.5}{3.2}  &  \ResAbs{32.6}{3.1}   & \ResAbsDiff{}{0.1}{-}{0.1}  & \ResAbsT{29.9}{2.4}   &  \ResAbs{30.1}{2.7}   & \ResAbsDiff{}{0.2}{}{0.3}    \\
    phishing   & \ResAbsT{11.3}{1.4}   &  \ResAbs{11.9}{1.4}   & \ResAbsDiff{}{0.6}{}{}      & \ResAbsT{ 9.6}{1.0}  & \ResAbsT{ 9.6}{1.0}   & \ResAbsDiff{}{}{}{}         & \ResAbsT{11.1}{1.8}   &  \ResAbs{11.7}{1.9}   & \ResAbsDiff{}{0.6}{}{0.1}    \\
    a9a        &  \ResAbs{27.1}{2.1}   & \ResAbsT{27.0}{2.1}   & \ResAbsDiff{-}{0.1}{}{}     &  \ResAbs{26.6}{1.8}  & \ResAbsT{26.5}{1.8}   & \ResAbsDiff{-}{0.1}{}{}     &  \ResAbs{27.1}{2.1}   & \ResAbsT{27.0}{2.0}   & \ResAbsDiff{-}{0.1}{-}{0.1}  \\
    connect4   &  \ResAbs{34.9}{3.1}   & \ResAbsT{34.2}{2.6}   & \ResAbsDiff{-}{0.7}{-}{0.5} & \ResAbsT{32.9}{2.7}  &  \ResAbs{33.0}{2.7}   & \ResAbsDiff{}{0.1}{}{}      &  \ResAbs{35.0}{2.9}   & \ResAbsT{34.9}{2.6}   & \ResAbsDiff{-}{0.1}{-}{0.3}  \\
    w8a        &  \ResAbs{17.2}{2.6}   & \ResAbsT{17.1}{2.4}   & \ResAbsDiff{-}{0.1}{-}{0.2} &  \ResAbs{21.0}{2.9}  & \ResAbsT{20.3}{2.9}   & \ResAbsDiff{-}{0.7}{}{}     & \ResAbsT{16.8}{2.9}   &  \ResAbs{18.4}{2.7}   & \ResAbsDiff{}{1.6}{-}{0.2}   \\
    epsilon    &  \ResAbs{33.5}{4.8}   & \ResAbsT{32.7}{3.1}   & \ResAbsDiff{-}{0.8}{-}{1.7} & \ResAbsT{36.5}{5.0}  &  \ResAbs{37.8}{6.9}   & \ResAbsDiff{}{1.3}{}{1.9}   &  \ResAbs{31.5}{1.7}   & \ResAbsT{31.3}{1.7}   & \ResAbsDiff{-}{0.2}{}{}      \\
    \bottomrule
  \end{tabular}
}
 \end{table}
 
\suppressfloats
\clearpage
\newpage
\subsection{Alternate Methods for Step~\#1 of Our Two-Step Methods}\label{sec:App:AddRes:AltStep1}

Recall from Section~\ref{sec:WUU:Step1} that our two-step methods' first step transform unlabeled training set~$\uset{\tr}$ into surrogate negative set~$\nsetPS{}$ by \emph{soft weighting} each ${\X \in \uset{\tr}}$ using classifier
\begin{equation}\label{eq:App:AddRes:SoftWeighting}
  \cdcxSoft \defeq \cdcx \approx \ppost{\tr}{\yn} \text{.}
\end{equation}
\noindent
In this section, we propose and empirically evaluate two alternative step~\#1 methods -- hard and top\K{} weighting. Regardless of which step~\#1 method is used to create~$\nsetPS{}$, no changes are required to our step~\#2 risk estimators -- \wuu{} and~\bpnu\@.

\subsubsection{Overview of the Alternate Step~\#1 Methods}\label{sec:App:AddRes:AltStep1:Methods}

\paragraph{Hard Weighting} \citet{Guo:2017}~show that modern neural networks are generally poorly calibrated and tend to report ``peaky'' confidence estimates.  $\cdc$~is vulnerable to similar ``peaky'' behavior. \textit{Hard weighting} assigns each unlabeled training example,~${\X \in \uset{\tr}}$, weight
\begin{equation}\label{eq:App:AddRes:HardWeighting}
  \cdcxHard \defeq \round{\cdcx}
\end{equation}
\noindent
where for ${a \in \real}$, ${\round{a}}$~rounds $a$~to the nearest integer (i.e.,~0 or~1 for probabilistic classifier~$\cdc$).

Hard weighting simulates worst\=/case ``peaked'' behavior.  Although not statistically consistent for non\=/separable data, hard weighting may sometimes outperform soft-weighting due to its thresholding effect.

\paragraph{Top\K{} Weighting} To broadly summarize \citeauthor{Guo:2017}'s primary contribution, neural network probability estimates may be inaccurate.  Our \emph{top\K{} weighting} method attempts to overcome that inaccuracy by focusing, not on the specific probability values predicted by~$\cdc$, but instead on the \textit{ordering} of those posterior estimates.

By definition, the expected number of positive-labeled examples in~$\uset{\tr}$ is~${\prd{\tr} \cdot \nTrU}$, where ${\nTrU \defeq \abs{\uset{\tr}}}$ is the unlabeled training set size and $\prd{\tr}$~is the positive training prior.  Define ${\KSym \defeq \round{\prd{\tr} \cdot \nTrU} \in \intsNN}$.  After training~$\cdc$ (same as before), let set~$\usetTrK$ be the \KSym~examples in~$\uset{\tr}$ with the highest predicted posteriors according to~$\cdc$.  Top\K{} weighting assigns weight~1 to any ${\X \in \usetTrK}$ and weight~0 to any ${\X \in \left( \uset{\tr} \setminus \usetTrK \right)}$.  Formally, for any ${\X \in \uset{\tr}}$,
  \begin{equation}\label{eq:App:AddRes:TopKWeighting}
    \cdcxTopK \defeq \begin{cases}
        1 & \X \in \usetTrK  \\
        0 & \text{Otherwise}
      \end{cases} \text{.}
  \end{equation}

Observe that top\K{} weighting uses strictly more information than both soft and hard weighting.  However, by relying on~$\prd{\tr}$ to estimate~\KSym, top\K{} weighting is generally more deleteriously affected by misestimation of~$\prd{\tr}$.

\subsubsection{Step~\#1 Labeling Accuracy}

These experiments examine how accurately our three proposed step~\#1 methods label~$\uset{\tr}$.  The labeling error rate is defined as
\begin{equation}\label{eq:App:AddRes:Step1ErrorRate}
  \text{Error Rate}_{\mathcal{M}} \defeq \frac{100\%}{\nTrU} \sum_{\X \in \uset{\tr}} \frac{\abs{2\cdcxAlt{\mathcal{M}} - 1 - \y_{\X}}}{2} \text{,}
\end{equation}
\noindent
where ${\y_{\X} \in \domainY}$ is unlabeled training example~$\X$'s true (unknown) label and ${\mathcal{M} \in \set{\text{soft}, \text{hard}, \text{top\K}}}$ denotes the step~\#1 method.  For hard and top\K{} weighting, Eq.~\eqref{eq:App:AddRes:Step1ErrorRate} corresponds to their (scaled) transductive misclassification rate on~$\uset{\tr}$.  Note that the difference between the soft and hard weightings' labeling error rates is indicative of the ``peakiness'' of~$\cdc$, with a smaller gap indicating that $\cdc$'s~estimates are more peaked.

For all experiments in this section, the three step~\#1 methods saw identical dataset splits and used the same initial model parameters.

\paragraph{Analysis} Table~\ref{tab:AltS1:S1Acc:PUc} compares the three weighting methods' step~\#1 labeling error rate for the 10~LIBSVM datasets in Table~\ref{tab:App:AddRes:PUc} (see Section~\ref{sec:App:AddRes:MarginalBias}).  Recall that in these experiments, the positive-train and positive-test class conditionals have identical supports.  The step~\#1 methods' labeling error rates varied widely from around~10\% on the \texttt{phishing} dataset to 30\==40\% for the \texttt{covtype.b} and \texttt{epsilon} datasets.

Recall from Table~\ref{tab:App:AddRes:PUc} that \nnpurpl{} was the top performer for the \texttt{cod\=/rna}, \texttt{susy}, and \texttt{covtype.b} datasets. The step~\#1 labeling error on those three datasets ranged from moderate to poor.  However, \texttt{epsilon} had soft weighting's worst step~\#1 labeling error rate yet \punu{} still outperformed \nnpurpl{} (see Table~\ref{tab:App:AddRes:PUc}).  This demonstrates that step~\#1 labeling accuracy alone does not determine which algorithm class, i.e.,~two\=/step or joint, is best.

\begin{table}[t]
  \centering
  \caption{Comparison of the soft, hard, and top\K{} weighting schemes' step~\#1 labeling error rate mean and standard deviation across 100~trials for the 10~LIBSVM datasets in Table~\ref{tab:App:AddRes:PUc}.}\label{tab:AltS1:S1Acc:PUc}
\renewcommand{\arraystretch}{1.2}
\setlength{\dashlinedash}{0.4pt}
\setlength{\dashlinegap}{1.5pt}
\setlength{\arrayrulewidth}{0.3pt}

\setlength{\tabcolsep}{8.5pt}

{\footnotesize
  \begin{tabular}{@{}lrrrr@{}}
    \toprule
    Dataset    & $d$    & Soft                  & Hard                  & Top\K    \\\midrule
    banana     & 2      & \ResTSAcc{20.1}{3.8}  & \ResTSAcc{13.2}{1.8}  & \ResTSAcc{12.4}{1.8}   \\
    cod-rna    & 8      & \ResTSAcc{20.8}{4.0}  & \ResTSAcc{13.2}{1.9}  & \ResTSAcc{12.7}{1.7}   \\
    susy       & 18     & \ResTSAcc{39.6}{2.7}  & \ResTSAcc{30.7}{2.4}  & \ResTSAcc{30.6}{2.4}   \\
    ijcnn1     & 22     & \ResTSAcc{27.0}{4.5}  & \ResTSAcc{19.3}{2.5}  & \ResTSAcc{15.8}{2.7}   \\
    covtype.b  & 54     & \ResTSAcc{44.1}{4.2}  & \ResTSAcc{37.1}{3.3}  & \ResTSAcc{34.9}{2.7}   \\
    phishing   & 68     & \ResTSAcc{13.5}{3.6}  & \ResTSAcc{10.4}{1.5}  & \ResTSAcc{ 9.6}{1.1}   \\
    a9a        & 123    & \ResTSAcc{24.4}{3.8}  & \ResTSAcc{17.4}{1.4}  & \ResTSAcc{18.1}{1.8}   \\
    connect4   & 128    & \ResTSAcc{34.2}{5.5}  & \ResTSAcc{27.8}{4.4}  & \ResTSAcc{24.7}{2.4}   \\
    w8a        & 300    & \ResTSAcc{27.9}{5.1}  & \ResTSAcc{19.6}{1.6}  & \ResTSAcc{18.5}{1.7}   \\
    epsilon    & 2,000  & \ResTSAcc{44.2}{1.5}  & \ResTSAcc{32.5}{2.6}  & \ResTSAcc{33.7}{2.0}   \\
    \bottomrule
  \end{tabular}
}
 \end{table}

Table~\ref{tab:AltS1:S1Acc:Nonoverlap} compares the three weighting methods' step~\#1 labeling error rate for the extended set of experiments in Table~\ref{tab:App:AddRes:FullClassPartition} (see Section~\ref{sec:App:AddRes:Nonoverlap}).  Recall that those experiments, on datasets MNIST, 20~Newsgroups, and~CIFAR10, replicated scenarios where some positive subclasses exist only in the test distribution.  As expected, the easier MNIST dataset had better step~\#1 labeling accuracy than the more challenging 20~Newsgroups and CIFAR10 datasets.  On the whole, these three datasets had better average step~\#1 labeling error rate than the 10~LIBSVM datasets discussed above.

\begin{table}[t]
  \centering
  \caption{Comparison of the soft, hard, and top\K{} weighting schemes' step~\#1 labeling error rate mean and standard deviation across 100~trials for Table~\ref{tab:App:AddRes:FullClassPartition}'s experiments on partially/fully disjoint positive-class support for MNIST, 20~Newsgroups, and CIFAR10.}\label{tab:AltS1:S1Acc:Nonoverlap}
\newcommand{\DsName}[1]{\multirow{8}{*}{\rotatebox[origin=c]{90}{#1}}}

\newcommand{\BigArrowBase}[1]{\multicolumn{1}{c@{}}{\multirow{3}{*}{$\Bigg#1$}}}
\newcommand{\BigUpArrow}{\BigArrowBase{\uparrow}}
\newcommand{\BigDownArrow}{\BigArrowBase{\downarrow}}

\renewcommand{\arraystretch}{1.2}
\setlength{\dashlinedash}{0.4pt}
\setlength{\dashlinegap}{1.5pt}
\setlength{\arrayrulewidth}{0.3pt}

\newcommand{\NDefFull}[1]{\multirow{7}{*}{\shortstack[l]{#1}}}
\newcommand{\PTestDefFull}[1]{\NDefFull{#1}}

\setlength{\tabcolsep}{8.4pt}

{\centering
  \footnotesize
  \begin{tabular}{@{}llllllrrr@{}}
  \toprule
   & N & \pDsTest{} & \pDsTrain{} &  $\prd{\tr}$ & $\prd{\te}$ & Soft & Hard & Top\K \\\midrule
  \DsName{MNIST} & \NDefFull{0, 2, 4,\\ 6, 8} & \PTestDefFull{1, 3, 5,\\ 7, 9} & \SamePTest{}
                  & 0.5  & 0.5  & \ResTSAcc{16.2}{2.3}  & \ResTSAcc{12.6}{1.0}  & \ResTSAcc{10.6}{1.0}   \\\cdashline{4-9}
    &     &    &  \multirow{3}{*}{7, 9}
                  & 0.29 & 0.5  & \ResTSAcc{11.0}{2.0}  & \ResTSAcc{ 6.8}{0.8}  & \ResTSAcc{ 5.4}{0.5}   \\\cdashline{5-9}
    &     &    &  & 0.5  & 0.5  & \ResTSAcc{10.8}{1.9}  & \ResTSAcc{ 8.3}{0.9}  & \ResTSAcc{ 6.5}{0.6}   \\\cdashline{5-9}
    &     &    &  & 0.71 & 0.5  & \ResTSAcc{11.1}{2.4}  & \ResTSAcc{ 8.3}{0.5}  & \ResTSAcc{ 7.7}{0.6}   \\\cdashline{4-9}
    &     &    & \multirow{3}{*}{1, 3, 5}
                  & 0.38 & 0.5  & \ResTSAcc{12.6}{1.6}  & \ResTSAcc{ 9.0}{0.9}  & \ResTSAcc{ 7.3}{0.7}   \\\cdashline{5-9}
    &     &    &  & 0.5  & 0.5  & \ResTSAcc{14.0}{2.4}  & \ResTSAcc{10.7}{0.9}  & \ResTSAcc{ 9.0}{1.0}   \\\cdashline{5-9}
    &     &    &  & 0.63 & 0.5  & \ResTSAcc{15.0}{3.1}  & \ResTSAcc{11.4}{0.7}  & \ResTSAcc{10.3}{1.0}   \\\cdashline{2-9}
    & {0, 2}  & {5, 7}  & {1, 3}
                  & 0.5  & 0.5  & \ResTSAcc{ 8.9}{1.7}  & \ResTSAcc{ 6.5}{0.7}  & \ResTSAcc{ 5.4}{0.4}   \\\midrule
  \DsName{20~Newsgroups} & \NDefFull{sci, soc,\\talk} & \PTestDefFull{alt, comp,\\ misc, rec} & \SamePTest {}
                  & 0.56 & 0.56 & \ResTSAcc{23.1}{4.3}  & \ResTSAcc{16.9}{1.3}  & \ResTSAcc{16.5}{1.3}   \\\cdashline{4-9}
    & &  &  \multirow{3}{*}{misc, rec}
                  & 0.37 & 0.56 & \ResTSAcc{15.5}{1.3}  & \ResTSAcc{12.0}{1.4}  & \ResTSAcc{ 9.3}{1.3}   \\\cdashline{5-9}
    & &  &        & 0.56 & 0.56 & \ResTSAcc{14.1}{1.5}  & \ResTSAcc{11.6}{1.0}  & \ResTSAcc{10.2}{1.2}   \\\cdashline{5-9}
    & &  &        & 0.65 & 0.56 & \ResTSAcc{12.8}{1.5}  & \ResTSAcc{10.3}{0.9}  & \ResTSAcc{ 9.8}{1.1}   \\\cdashline{4-9}
    & &  & \multirow{3}{*}{comp}
                  & 0.37 & 0.56 & \ResTSAcc{15.7}{0.9}  & \ResTSAcc{12.1}{0.8}  & \ResTSAcc{11.1}{1.0}   \\\cdashline{5-9}
    & &  &        & 0.56 & 0.56 & \ResTSAcc{14.3}{1.2}  & \ResTSAcc{12.0}{1.1}  & \ResTSAcc{11.6}{1.2}   \\\cdashline{5-9}
    & &  &        & 0.65 & 0.56 & \ResTSAcc{12.8}{1.3}  & \ResTSAcc{10.7}{1.1}  & \ResTSAcc{11.1}{1.3}   \\\cdashline{2-9}
    & {misc, rec} & {soc, talk} & {alt, comp}
                  & 0.55 & 0.46 & \ResTSAcc{12.4}{1.0}  & \ResTSAcc{10.6}{1.1}  & \ResTSAcc{10.1}{1.2}   \\\midrule
  \DsName{CIFAR10}  & \NDefFull{Bird, Cat,\\Deer, Dog,\\Frog, Horse} & \PTestDefFull{Plane, \\ Auto, Ship, \\ Truck} & \SamePTest{}
                  & 0.4  & 0.4  & \ResTSAcc{21.3}{3.1}  & \ResTSAcc{16.6}{1.3}  & \ResTSAcc{14.6}{0.7}   \\\cdashline{4-9}
    & & & \multirow{3}{*}{Plane}
                  & 0.14 & 0.4  & \ResTSAcc{16.6}{3.1}  & \ResTSAcc{ 9.2}{0.6}  & \ResTSAcc{ 8.4}{0.5}   \\\cdashline{5-9}
    & & &         & 0.4  & 0.4  & \ResTSAcc{21.3}{4.0}  & \ResTSAcc{14.7}{1.1}  & \ResTSAcc{13.1}{0.7}   \\\cdashline{5-9}
    & & &         & 0.6  & 0.4  & \ResTSAcc{21.8}{3.0}  & \ResTSAcc{15.0}{1.0}  & \ResTSAcc{13.8}{0.6}   \\\cdashline{4-9}
    & & & \multirow{3}{*}{\shortstack[l]{Auto,\\Truck}}
                  & 0.25 & 0.4  & \ResTSAcc{14.7}{1.4}  & \ResTSAcc{10.4}{0.8}  & \ResTSAcc{ 9.1}{0.5}   \\\cdashline{5-9}
    & & &         & 0.4  & 0.4  & \ResTSAcc{15.7}{2.6}  & \ResTSAcc{12.5}{1.1}  & \ResTSAcc{10.9}{0.7}   \\\cdashline{5-9}
    & & &         & 0.55 & 0.4  & \ResTSAcc{17.0}{3.3}  & \ResTSAcc{13.3}{1.2}  & \ResTSAcc{11.8}{0.6}   \\\cdashline{2-9}
    & {Deer, Horse} & {Plane, Auto} & {Cat, Dog}
                  & 0.5  & 0.5  & \ResTSAcc{30.9}{2.6}  & \ResTSAcc{21.4}{1.1}  & \ResTSAcc{21.0}{1.0}   \\
  \bottomrule
\end{tabular}
}
 \end{table}

\subsubsection{Step~\#1 Method's Effect on Overall Two-Step Performance}

These experiments study how each step~\#1 method affects our two step methods' -- \pupnu{} and \punu{} -- inductive, test (i.e.,~end-to-end) misclassification rate.  As in the previous section, all methods saw identical dataset splits and initial model parameters in each experimental trial.

\paragraph{Analysis} Table~\ref{tab:AltS1:PUc} compares the two-step inductive misclassification rate when using the three step~\#1 methods for the 10~LIBSVM datasets in Table~\ref{tab:App:AddRes:PUc} (see Section~\ref{sec:App:AddRes:MarginalBias}).  Soft weighting was the best performing method for all ten datasets for \pupnu{} and for seven of ten datasets for \punu{}.  It is also noteworthy that only soft weighting learned a meaningful classifier for the \texttt{epsilon} dataset. In fact, hard and top\K{} weighting performed worse than random chance for \texttt{epsilon}.

\begin{table}[t]
  \centering
  \caption{Effect of step~\#1 method on our two-step methods' overall inductive misclassification rate~(\%) for the 10~LIBSVM datasets in Table~\ref{tab:App:AddRes:PUc}.  The table's upper half reports each method's misclassification rate mean and standard deviation over 100~trials. Boldface denotes each experimental setup's best performing method according to mean misclassification rate.  The table's lower half is an alternate visualization showing the difference (Diff.) in misclassification rate mean and standard deviation w.r.t.\ to our soft method.  {\color{BrickRed}Red} denotes that the associated alternate step~\#1 method had worse (i.e.,~higher) mean misclassification rate than soft weighting while {\color{ForestGreen} green} denotes that the alternate method had a better (i.e.,~lower) mean misclassification rate.}\label{tab:AltS1:PUc}
\renewcommand{\arraystretch}{1.2}
\setlength{\dashlinedash}{0.4pt}
\setlength{\dashlinegap}{1.5pt}
\setlength{\arrayrulewidth}{0.3pt}

\setlength{\tabcolsep}{8.1pt}

{\footnotesize
  \begin{tabular}{@{}lrrrrrrr@{}}
    \toprule
    \multirow{2}{*}{Dataset} & \multirow{2}{*}{$d$} & \TSAlgHeader{\pupnu{}}  & \TSAlgHeader{\punu{}} \\\cmidrule(lr){3-5}\cmidrule(l){6-8}
                             &                      & \TSResHeaderAlt{}          & \TSResHeaderAlt{}         \\\midrule
    banana     & 2     & \ResTSB{11.7}{1.6}  & \ResTSN{13.4}{2.0}    & \ResTSN{13.1}{1.9}     & \ResTSN{13.4}{2.4}   & \ResTSN{14.0}{2.6}   & \ResTSB{13.3}{2.5}  \\
    cod-rna    & 8     & \ResTSB{14.6}{3.8}  & \ResTSN{18.6}{3.7}    & \ResTSN{18.3}{3.9}     & \ResTSB{15.5}{3.2}   & \ResTSN{17.4}{3.2}   & \ResTSN{16.7}{3.2}  \\
    susy       & 18    & \ResTSB{25.8}{2.6}  & \ResTSN{27.8}{3.2}    & \ResTSN{27.6}{2.5}     & \ResTSB{25.8}{2.4}   & \ResTSN{26.3}{3.5}   & \ResTSN{26.1}{2.6}  \\
    ijcnn1     & 22    & \ResTSB{18.0}{2.7}  & \ResTSN{22.1}{3.6}    & \ResTSN{18.4}{2.9}     & \ResTSN{25.1}{3.4}   & \ResTSN{24.3}{3.3}   & \ResTSB{21.4}{2.9}  \\
    covtype.b  & 54    & \ResTSB{32.1}{3.2}  & \ResTSN{40.6}{3.5}    & \ResTSN{37.0}{3.5}     & \ResTSB{29.7}{2.5}   & \ResTSN{40.1}{3.7}   & \ResTSN{34.8}{4.2}  \\
    phishing   & 68    & \ResTSB{ 9.6}{0.9}  & \ResTSN{ 9.8}{1.0}    & \ResTSN{10.0}{1.1}     & \ResTSN{11.6}{2.1}   & \ResTSN{10.9}{1.4}   & \ResTSB{10.1}{1.2}  \\
    a9a        & 123   & \ResTSB{26.8}{1.6}  & \ResTSN{28.6}{1.7}    & \ResTSN{27.9}{1.6}     & \ResTSB{27.4}{2.1}   & \ResTSN{28.5}{1.8}   & \ResTSN{28.3}{1.9}  \\
    connect4   & 126   & \ResTSB{32.9}{2.1}  & \ResTSN{37.2}{2.8}    & \ResTSN{35.6}{2.3}     & \ResTSB{34.8}{2.7}   & \ResTSN{38.0}{3.1}   & \ResTSN{35.3}{2.8}  \\
    w8a        & 300   & \ResTSB{21.6}{2.4}  & \ResTSN{23.7}{2.0}    & \ResTSN{24.6}{2.3}     & \ResTSB{16.9}{2.7}   & \ResTSN{22.4}{2.4}   & \ResTSN{22.0}{2.7}  \\
    epsilon    & 2,000 & \ResTSB{35.0}{4.4}  & \ResTSN{58.6}{3.2}    & \ResTSN{54.6}{3.4}     & \ResTSB{31.2}{1.1}   & \ResTSN{52.6}{3.9}   & \ResTSN{52.9}{5.8}  \\
    \bottomrule
    \toprule
    \multirow{2}{*}{Dataset} & \multirow{2}{*}{$d$} & \TSAlgHeader{\pupnu{}}  & \TSAlgHeader{\punu{}} \\\cmidrule(lr){3-5}\cmidrule(l){6-8}
                             &                      & \TSResHeader{}          & \TSResHeader{}         \\\midrule
    banana     & 2     & \ResTSN{11.7}{1.6}  & \ResTSDWW{1.7}{0.4}   & \ResTSDWW{1.4}{0.4}    & \ResTSN{13.4}{2.4}   & \ResTSDWW{0.5}{0.2}  & \ResTSDBW{0.1}{0.1}  \\
    cod-rna    & 8     & \ResTSN{14.6}{3.8}  & \ResTSDWB{4.1}{0.1}   & \ResTSDWW{3.7}{0.1}    & \ResTSN{15.5}{3.2}   & \ResTSDWW{1.9}{\ZR}  & \ResTSDWW{1.2}{\ZR}  \\
    susy       & 18    & \ResTSN{25.8}{2.6}  & \ResTSDWW{2.0}{0.6}   & \ResTSDWB{1.9}{0.1}    & \ResTSN{25.8}{2.4}   & \ResTSDWW{0.6}{1.1}  & \ResTSDWW{0.4}{0.2}  \\
    ijcnn1     & 22    & \ResTSN{18.0}{2.7}  & \ResTSDWW{4.1}{0.8}   & \ResTSDWW{0.3}{0.2}    & \ResTSN{25.1}{3.4}   & \ResTSDBW{0.8}{0.2}  & \ResTSDBB{3.7}{0.5}  \\
    covtype.b  & 54    & \ResTSN{32.1}{3.2}  & \ResTSDWW{8.5}{0.2}   & \ResTSDWW{4.9}{0.3}    & \ResTSN{29.7}{2.5}   & \ResTSDWW{10.3}{1.2} & \ResTSDWW{5.0}{1.7}  \\
    phishing   & 68    & \ResTSN{ 9.6}{0.9}  & \ResTSDWW{0.2}{\ZR}   & \ResTSDWW{0.4}{0.2}    & \ResTSN{11.6}{2.1}   & \ResTSDBB{0.6}{0.7}  & \ResTSDBB{1.5}{0.9}  \\
    a9a        & 123   & \ResTSN{26.8}{1.6}  & \ResTSDWW{1.9}{0.2}   & \ResTSDWW{1.2}{\ZR}    & \ResTSN{27.4}{2.1}   & \ResTSDWB{1.2}{0.4}  & \ResTSDWB{0.9}{0.2}  \\
    connect4   & 126   & \ResTSN{32.9}{2.1}  & \ResTSDWW{4.3}{0.7}   & \ResTSDWW{2.7}{0.2}    & \ResTSN{34.8}{2.7}   & \ResTSDWW{3.3}{0.4}  & \ResTSDWW{0.6}{0.1}  \\
    w8a        & 300   & \ResTSN{21.6}{2.4}  & \ResTSDWB{2.1}{0.4}   & \ResTSDWB{3.0}{0.1}    & \ResTSN{16.9}{2.7}   & \ResTSDWB{5.6}{0.3}  & \ResTSDWW{5.1}{\ZR}  \\
    epsilon    & 2,000 & \ResTSN{35.0}{4.4}  & \ResTSDWB{23.7}{1.2}  & \ResTSDWB{19.6}{0.9}   & \ResTSN{31.2}{1.1}   & \ResTSDWW{21.4}{2.8} & \ResTSDWW{21.7}{4.6} \\
    \bottomrule
  \end{tabular}
}
 \end{table}

Table~\ref{tab:AltS1:Nonoverlap} compares the two-step, inductive misclassification rate when using the three step~\#1 methods for the experiments in Table~\ref{tab:App:AddRes:FullClassPartition} on MNIST, 20~Newsgroups, and CIFAR10 (see Section~\ref{sec:App:AddRes:Nonoverlap}).  For the vast majority of setups, top\K{} weighting was the best performing method for both \pupnu{} and \punu\@.  Top\K{} often improved performance over soft weighting by 10\==20\% or more -- in particular for \punu\@.  The one experimental setup where soft weighting consistently performed as well or better than top\K{} was when the positive train and test supports were disjoint. Observe that in those experiments, the positive and negative classes are composed of fewer constituent labels.  As such, we believe that top\K{} weighting is exacerbating overfitting in those models resulting in the worse performance.

\begin{table}[t]
  \centering
  \caption{Effect of step~\#1 method on our two-step methods' overall inductive misclassification rate~(\%) for the MNIST, 20~Newsgroups, and~CIFAR10 datasets.  The table's upper half reports each method's misclassification rate mean and standard deviation over 100~trials. Boldface denotes each experimental setup's best performing method according to mean misclassification rate.  The table's lower half is an alternate visualization showing the difference (Diff.) in misclassification rate mean and standard deviation w.r.t.\ to our soft method. {\color{BrickRed}Red} denotes that the associated alternate step~\#1 method had worse (i.e.,~higher) mean misclassification rate than soft weighting while {\color{ForestGreen} green} denotes that the alternate method had a better (i.e.,~lower) mean misclassification rate.}\label{tab:AltS1:Nonoverlap}
\newcommand{\DsName}[1]{\multirow{8}{*}{\rotatebox[origin=c]{90}{#1}}}

\newcommand{\BigArrowBase}[1]{\multicolumn{1}{c@{}}{\multirow{3}{*}{$\Bigg#1$}}}
\newcommand{\BigUpArrow}{\BigArrowBase{\uparrow}}
\newcommand{\BigDownArrow}{\BigArrowBase{\downarrow}}

\renewcommand{\arraystretch}{1.2}
\setlength{\dashlinedash}{0.4pt}
\setlength{\dashlinegap}{1.5pt}
\setlength{\arrayrulewidth}{0.3pt}

\newcommand{\NDefFull}[1]{\multirow{7}{*}{\shortstack[l]{#1}}}
\newcommand{\PTestDefFull}[1]{\NDefFull{#1}}

\setlength{\tabcolsep}{7.0pt}

{\centering
  \scriptsize
  \begin{tabular}{@{}llllllrrrrrr@{}}
  \toprule
  \multirow{2}{*}{} & \multirow{2}{*}{N} & \multirow{2}{*}{\pDsTest} & \multirow{2}{*}{\pDsTrain} &  \multirow{2}{*}{$\prd{\tr}$} & \multirow{2}{*}{$\prd{\te}$} &  \TSAlgHeader{\pupnu{}}  & \TSAlgHeader{\punu{}} \\\cmidrule(lr){7-9}\cmidrule(l){10-12}
                  & &   & & &  & \TSResHeaderAlt{}          & \TSResHeaderAlt{}         \\\midrule
  \DsName{MNIST} & \NDefFull{0, 2, 4,\\ 6, 8} & \PTestDefFull{1, 3, 5,\\ 7, 9} & \SamePTest{}
                  & 0.5  & 0.5  & \ResTSN{10.2}{1.5}  & \ResTSN{ 9.8}{1.3}     & \ResTSB{ 7.8}{1.0}      & \ResTSN{11.8}{1.5}   & \ResTSN{10.6}{1.1}     & \ResTSB{ 9.3}{1.0}  \\\cdashline{4-12}
    &     &    &  \multirow{3}{*}{7, 9}
                  & 0.29 & 0.5  & \ResTSN{ 5.4}{0.5}  & \ResTSN{ 5.3}{0.5}     & \ResTSB{ 4.9}{0.4}      & \ResTSN{ 6.1}{0.7}   & \ResTSB{ 5.5}{0.4}     & \ResTSN{ 5.6}{0.3}  \\\cdashline{5-12}
    &     &    &  & 0.5  & 0.5  & \ResTSN{ 6.9}{0.9}  & \ResTSN{ 7.7}{1.2}     & \ResTSB{ 5.9}{0.6}      & \ResTSN{ 8.0}{1.3}   & \ResTSN{ 7.5}{0.9}     & \ResTSB{ 6.4}{0.5}  \\\cdashline{5-12}
    &     &    &  & 0.71 & 0.5  & \ResTSN{11.0}{1.4}  & \ResTSN{12.9}{1.2}     & \ResTSB{ 9.9}{1.3}      & \ResTSN{14.9}{3.7}   & \ResTSN{12.9}{1.1}     & \ResTSB{ 9.9}{1.0}  \\\cdashline{4-12}
    &     &    & \multirow{3}{*}{1, 3, 5}
                  & 0.38 & 0.5  & \ResTSN{ 6.4}{0.8}  & \ResTSN{ 6.6}{0.8}     & \ResTSB{ 5.7}{0.6}      & \ResTSN{ 7.6}{0.9}   & \ResTSN{ 7.0}{0.7}     & \ResTSB{ 6.6}{0.6}  \\\cdashline{5-12}
    &     &    &  & 0.5  & 0.5  & \ResTSN{ 8.4}{1.1}  & \ResTSN{ 9.0}{1.0}     & \ResTSB{ 7.2}{0.9}      & \ResTSN{10.0}{1.4}   & \ResTSN{ 9.3}{0.9}     & \ResTSB{ 8.0}{0.8}  \\\cdashline{5-12}
    &     &    &  & 0.63 & 0.5  & \ResTSN{11.3}{1.4}  & \ResTSN{12.8}{1.4}     & \ResTSB{10.2}{1.4}      & \ResTSN{14.1}{2.5}   & \ResTSN{12.9}{1.2}     & \ResTSB{10.5}{1.1}  \\\cdashline{5-12}
    & {0, 2}  & {5, 7}  & {1, 3}
                  & 0.5  & 0.5  & \ResTSN{ 3.5}{1.0}  & \ResTSN{ 4.1}{1.1}     & \ResTSB{ 2.8}{0.6}      & \ResTSN{ 3.1}{0.7}   & \ResTSN{ 3.7}{0.9}     & \ResTSB{ 2.8}{0.4}  \\\midrule
  \DsName{20~Newsgroups} & \NDefFull{sci, soc,\\talk} & \PTestDefFull{alt, comp,\\ misc, rec} & \SamePTest {}
                  & 0.56 & 0.56 & \ResTSN{14.9}{1.3}  & \ResTSN{14.9}{1.4}     & \ResTSB{14.4}{1.5}      & \ResTSN{16.6}{2.5}   & \ResTSN{15.9}{1.8}     & \ResTSB{15.5}{1.9}  \\\cdashline{4-12}
    & &  &  \multirow{3}{*}{misc, rec}
                  & 0.37 & 0.56 & \ResTSN{12.8}{0.6}  & \ResTSN{12.9}{0.8}     & \ResTSB{12.4}{0.6}      & \ResTSN{14.2}{0.9}   & \ResTSN{13.7}{0.9}     & \ResTSB{13.1}{0.7}  \\\cdashline{5-12}
    & &  &        & 0.56 & 0.56 & \ResTSN{13.6}{0.9}  & \ResTSN{14.0}{0.9}     & \ResTSB{13.4}{0.9}      & \ResTSN{15.1}{1.3}   & \ResTSN{14.8}{1.1}     & \ResTSB{14.1}{1.1}  \\\cdashline{5-12}
    & &  &        & 0.65 & 0.56 & \ResTSN{14.0}{0.9}  & \ResTSN{14.4}{0.9}     & \ResTSB{13.8}{0.9}      & \ResTSN{15.8}{1.3}   & \ResTSN{15.3}{1.2}     & \ResTSB{14.6}{0.9}  \\\cdashline{4-12}
    & &  & \multirow{3}{*}{comp}
                  & 0.37 & 0.56 & \ResTSN{13.7}{0.6}  & \ResTSN{13.8}{0.6}     & \ResTSB{13.0}{0.7}      & \ResTSN{14.5}{0.8}   & \ResTSN{14.1}{0.7}     & \ResTSB{13.3}{0.7}  \\\cdashline{5-12}
    & &  &        & 0.56 & 0.56 & \ResTSN{14.9}{0.7}  & \ResTSN{15.7}{0.7}     & \ResTSB{14.3}{0.8}      & \ResTSN{15.7}{0.9}   & \ResTSN{15.9}{0.8}     & \ResTSB{14.6}{0.9}  \\\cdashline{5-12}
    & &  &        & 0.65 & 0.56 & \ResTSN{15.5}{1.1}  & \ResTSN{16.5}{1.0}     & \ResTSB{15.2}{1.2}      & \ResTSN{16.3}{1.4}   & \ResTSN{16.7}{1.3}     & \ResTSB{15.4}{1.3}  \\\cdashline{2-12}
    & {misc, rec} & {soc, talk} & {alt, comp}
                  & 0.55 & 0.46 & \ResTSB{ 7.2}{1.2}  & \ResTSN{ 8.1}{1.2}     & \ResTSN{ 7.5}{1.2}      & \ResTSB{ 5.8}{1.6}   & \ResTSN{ 7.1}{1.5}     & \ResTSB{ 5.8}{1.4}  \\\midrule
  \DsName{CIFAR10}  & \NDefFull{Bird, Cat,\\Deer, Dog,\\Frog, Horse} & \PTestDefFull{Plane, \\ Auto, Ship, \\ Truck} & \SamePTest{}
                  & 0.4  & 0.4  & \ResTSN{13.9}{1.2}  & \ResTSN{13.6}{0.9}     & \ResTSB{12.0}{0.7}      & \ResTSN{15.0}{1.2}   & \ResTSN{14.7}{0.9}     & \ResTSB{13.2}{0.8}  \\\cdashline{4-12}
    & & & \multirow{3}{*}{Plane}
                  & 0.14 & 0.4  & \ResTSB{12.0}{0.8}  & \ResTSN{12.1}{0.6}     & \ResTSN{12.2}{0.7}      & \ResTSN{12.5}{0.9}   & \ResTSN{11.8}{0.6}     & \ResTSB{11.7}{0.7}  \\\cdashline{5-12}
    & & &         & 0.4  & 0.4  & \ResTSN{14.4}{1.3}  & \ResTSN{15.4}{1.1}     & \ResTSB{14.1}{0.8}      & \ResTSN{14.9}{1.4}   & \ResTSN{15.0}{1.2}     & \ResTSB{13.2}{0.8}  \\\cdashline{5-12}
    & & &         & 0.6  & 0.4  & \ResTSB{16.7}{1.5}  & \ResTSN{20.0}{1.5}     & \ResTSN{16.9}{1.1}      & \ResTSN{20.1}{2.3}   & \ResTSN{20.0}{1.8}     & \ResTSB{15.5}{1.1}  \\\cdashline{4-12}
    & & & \multirow{3}{*}{\shortstack[l]{Auto,\\Truck}}
                  & 0.25 & 0.4  & \ResTSB{12.4}{0.7}  & \ResTSN{12.6}{0.7}     & \ResTSB{12.4}{0.7}      & \ResTSN{12.8}{0.7}   & \ResTSN{12.4}{0.7}     & \ResTSB{12.2}{0.7}  \\\cdashline{5-12}
    & & &         & 0.4  & 0.4  & \ResTSN{14.0}{1.2}  & \ResTSN{14.7}{1.1}     & \ResTSB{13.4}{0.8}      & \ResTSN{14.4}{1.2}   & \ResTSN{14.6}{1.3}     & \ResTSB{13.1}{0.8}  \\\cdashline{5-12}
    & & &         & 0.55 & 0.4  & \ResTSN{16.2}{1.6}  & \ResTSN{17.7}{1.8}     & \ResTSB{15.3}{1.0}      & \ResTSN{17.0}{2.1}   & \ResTSN{17.7}{2.1}     & \ResTSB{14.8}{1.0}  \\\cdashline{2-12}
    & {Deer, Horse} & {Plane, Auto} & {Cat, Dog}
                  & 0.5  & 0.5  & \ResTSB{15.1}{1.7}  & \ResTSN{20.2}{1.2}     & \ResTSN{19.2}{1.1}      & \ResTSB{11.2}{0.8}   & \ResTSN{16.3}{1.3}     & \ResTSN{14.2}{1.0}  \\
  \bottomrule
  \toprule
  \multirow{2}{*}{} & \multirow{2}{*}{N} & \multirow{2}{*}{\pDsTest} & \multirow{2}{*}{\pDsTrain} &  \multirow{2}{*}{$\prd{\tr}$} & \multirow{2}{*}{$\prd{\te}$} &  \TSAlgHeader{\pupnu{}}  & \TSAlgHeader{\punu{}} \\\cmidrule(lr){7-9}\cmidrule(l){10-12}
                  & &   & & &  & \TSResHeader{}          & \TSResHeader{}         \\\midrule
  \DsName{MNIST} & \NDefFull{0, 2, 4,\\ 6, 8} & \PTestDefFull{1, 3, 5,\\ 7, 9} & \SamePTest{}
                  & 0.5  & 0.5  & \ResTSN{10.2}{1.5}  & \ResTSDBB{0.4}{0.2}    & \ResTSDBB{2.5}{0.5}     & \ResTSN{11.8}{1.5}   & \ResTSDBB{1.3}{0.4}    & \ResTSDBB{2.6}{0.5}   \\\cdashline{4-12}
    &     &    &  \multirow{3}{*}{7, 9}
                  & 0.29 & 0.5  & \ResTSN{ 5.4}{0.5}  & \ResTSDBW{0.1}{\ZR}    & \ResTSDBB{0.5}{0.1}     & \ResTSN{ 6.1}{0.7}   & \ResTSDBB{0.5}{0.3}    & \ResTSDBB{0.5}{0.3}   \\\cdashline{5-12}
    &     &    &  & 0.5  & 0.5  & \ResTSN{ 6.9}{0.9}  & \ResTSDWW{0.8}{0.3}    & \ResTSDBB{1.0}{0.3}     & \ResTSN{ 8.0}{1.3}   & \ResTSDBB{0.5}{0.4}    & \ResTSDBB{1.6}{0.8}   \\\cdashline{5-12}
    &     &    &  & 0.71 & 0.5  & \ResTSN{11.0}{1.4}  & \ResTSDWB{1.9}{0.2}    & \ResTSDBB{1.1}{0.1}     & \ResTSN{14.9}{3.7}   & \ResTSDBB{2.0}{2.7}    & \ResTSDBB{5.0}{2.7}   \\\cdashline{4-12}
    &     &    & \multirow{3}{*}{1, 3, 5}
                  & 0.38 & 0.5  & \ResTSN{ 6.4}{0.8}  & \ResTSDWW{0.2}{\ZR}    & \ResTSDBB{0.7}{0.2}     & \ResTSN{ 7.6}{0.9}   & \ResTSDBB{0.6}{0.2}    & \ResTSDBB{1.0}{0.3}   \\\cdashline{5-12}
    &     &    &  & 0.5  & 0.5  & \ResTSN{ 8.4}{1.1}  & \ResTSDWW{0.6}{\ZR}    & \ResTSDBB{1.2}{0.2}     & \ResTSN{10.0}{1.4}   & \ResTSDBB{0.8}{0.5}    & \ResTSDBB{2.1}{0.7}   \\\cdashline{5-12}
    &     &    &  & 0.63 & 0.5  & \ResTSN{11.3}{1.4}  & \ResTSDWW{1.5}{\ZR}    & \ResTSDBW{1.1}{\ZR}     & \ResTSN{14.1}{2.5}   & \ResTSDBB{1.2}{1.3}    & \ResTSDBB{2.1}{0.7}   \\\cdashline{2-12}
    & {0, 2}  & {5, 7}  & {1, 3}
                  & 0.5  & 0.5  & \ResTSN{ 3.5}{1.0}  & \ResTSDWW{0.6}{0.1}    & \ResTSDBB{0.7}{0.4}     & \ResTSN{ 3.1}{0.7}   & \ResTSDWW{0.6}{0.3}    & \ResTSDBB{0.3}{0.3}   \\\midrule
  \DsName{20~Newsgroups} & \NDefFull{sci, soc,\\talk} & \PTestDefFull{alt, comp,\\ misc, rec} & \SamePTest {}
                  & 0.56 & 0.56 & \ResTSN{14.9}{1.3}  & \ResTSDSW{0.2}         & \ResTSDBW{0.5}{0.3}     & \ResTSN{16.6}{2.5}   & \ResTSDBB{0.7}{0.8}    & \ResTSDBB{1.1}{0.6}   \\\cdashline{4-12}
    & &  &  \multirow{3}{*}{misc, rec}
                  & 0.37 & 0.56 & \ResTSN{12.8}{0.6}  & \ResTSDWW{0.1}{0.1}    & \ResTSDBW{0.4}{\ZR}     & \ResTSN{14.2}{0.9}   & \ResTSDBW{0.5}{\ZR}    & \ResTSDBB{1.1}{0.2}   \\\cdashline{5-12}
    & &  &        & 0.56 & 0.56 & \ResTSN{13.6}{0.9}  & \ResTSDWW{0.3}{\ZR}    & \ResTSDBW{0.2}{\ZR}     & \ResTSN{15.1}{1.3}   & \ResTSDBB{0.3}{0.2}    & \ResTSDBB{1.0}{0.2}   \\\cdashline{5-12}
    & &  &        & 0.65 & 0.56 & \ResTSN{14.0}{0.9}  & \ResTSDWW{0.4}{\ZR}    & \ResTSDBB{0.2}{0.1}     & \ResTSN{15.8}{1.3}   & \ResTSDBB{0.5}{0.1}    & \ResTSDBB{1.2}{0.4}   \\\cdashline{4-12}
    & &  & \multirow{3}{*}{comp}
                  & 0.37 & 0.56 & \ResTSN{13.7}{0.6}  & \ResTSDWW{0.1}{\ZR}    & \ResTSDBW{0.6}{\ZR}     & \ResTSN{14.5}{0.8}   & \ResTSDBW{0.4}{\ZR}    & \ResTSDBW{1.2}{\ZR}   \\\cdashline{5-12}
    & &  &        & 0.56 & 0.56 & \ResTSN{14.9}{0.7}  & \ResTSDWW{0.8}{\ZR}    & \ResTSDBW{0.6}{\ZR}     & \ResTSN{15.7}{0.9}   & \ResTSDWB{0.2}{0.1}    & \ResTSDBB{0.1}{\ZR}   \\\cdashline{5-12}
    & &  &        & 0.65 & 0.56 & \ResTSN{15.5}{1.1}  & \ResTSDWW{1.0}{\ZR}    & \ResTSDBW{0.4}{0.1}     & \ResTSN{16.3}{1.4}   & \ResTSDWB{0.4}{0.1}    & \ResTSDBB{0.9}{0.1}   \\\cdashline{2-12}
    & {misc, rec} & {soc, talk} & {alt, comp}
                  & 0.55 & 0.46 & \ResTSN{ 7.2}{1.2}  & \ResTSDWW{1.0}{0.1}    & \ResTSDWW{0.3}{\ZR}     & \ResTSN{ 5.8}{1.6}   & \ResTSDWB{1.4}{0.1}    & \ResTSDSB{0.3}        \\\midrule
  \DsName{CIFAR10}  & \NDefFull{Bird, Cat,\\Deer, Dog,\\Frog, Horse} & \PTestDefFull{Plane, \\ Auto, Ship, \\ Truck} & \SamePTest{}
                  & 0.4  & 0.4  & \ResTSN{13.9}{1.2}  & \ResTSDBB{0.3}{0.3}    & \ResTSDBB{1.9}{0.5}     & \ResTSN{15.0}{1.2}   & \ResTSDBB{0.4}{0.3}    & \ResTSDBB{1.9}{0.4}   \\\cdashline{4-12}
    & & & \multirow{3}{*}{Plane}
                  & 0.14 & 0.4  & \ResTSN{12.0}{0.8}  & \ResTSDWB{0.1}{0.1}    & \ResTSDWB{0.1}{0.1}     & \ResTSN{12.5}{0.9}   & \ResTSDBB{0.8}{0.3}    & \ResTSDBB{0.8}{0.2}   \\\cdashline{5-12}
    & & &         & 0.4  & 0.4  & \ResTSN{14.4}{1.3}  & \ResTSDWB{1.0}{0.2}    & \ResTSDBB{0.3}{0.5}     & \ResTSN{14.9}{1.4}   & \ResTSDWB{0.1}{0.2}    & \ResTSDBB{1.8}{0.6}   \\\cdashline{5-12}
    & & &         & 0.6  & 0.4  & \ResTSN{16.7}{1.5}  & \ResTSDWW{3.3}{\ZR}    & \ResTSDWB{0.2}{0.4}     & \ResTSN{20.1}{2.3}   & \ResTSDBB{0.1}{0.5}    & \ResTSDBB{4.6}{1.2}   \\\cdashline{4-12}
    & & & \multirow{3}{*}{\shortstack[l]{Auto,\\Truck}}
                  & 0.25 & 0.4  & \ResTSN{12.4}{0.7}  & \ResTSDWW{0.2}{\ZR}    & \ResTSDSW{\ZR}          & \ResTSN{12.8}{0.7}   & \ResTSDBB{0.4}{0.1}    & \ResTSDBW{0.6}{\ZR}   \\\cdashline{5-12}
    & & &         & 0.4  & 0.4  & \ResTSN{14.0}{1.2}  & \ResTSDWB{0.6}{0.1}    & \ResTSDBB{0.6}{0.4}     & \ResTSN{14.4}{1.2}   & \ResTSDWW{0.3}{0.1}    & \ResTSDBB{1.3}{0.4}   \\\cdashline{5-12}
    & & &         & 0.55 & 0.4  & \ResTSN{16.2}{1.6}  & \ResTSDWW{1.5}{0.2}    & \ResTSDBB{0.9}{0.7}     & \ResTSN{17.0}{2.1}   & \ResTSDWW{0.7}{\ZR}    & \ResTSDBB{2.2}{1.1}   \\\cdashline{2-12}
    & {Deer, Horse} & {Plane, Auto} & {Cat, Dog}
                  & 0.5  & 0.5  & \ResTSN{15.1}{1.7}  & \ResTSDWB{5.1}{0.5}    & \ResTSDWB{4.1}{0.6}     & \ResTSN{11.2}{0.8}   & \ResTSDWW{5.1}{0.6}    & \ResTSDWW{3.0}{0.3}   \\
  \bottomrule
\end{tabular}
}
 \end{table}

\subsubsection{Discussion}

The experiments in the previous subsection demonstrate that the best performing step~\#1 method is benchmark/setup dependent.  If a user is highly confident that their data is readily and easily separable (like MNIST), top\K{} weighting may perform particularly well.  Although not shown here, we empirically observed that misestimation of training prior~$\prd{\tr}$ negatively affects top\K{} weighting's accuracy -- many times severely.

If the training datasets (e.g.,~$\pset{}$, $\uset{\tr}$, and~$\uset{\te}$) are large enough that asymptotic consistency guarantees generally apply, soft weighting may perform best. We made soft-weighting the focus of Section~\ref{sec:WUU} due to its stronger statistical guarantees.  Had top\K{} weighting been used in Section~\ref{sec:ExpRes:NonidenticalSupports}'s experiments instead of soft weighting, our performance advantage over the baselines, \PUc{} and~\nnpuOpt{}, would have widened.
 
\suppressfloats
\clearpage
\subsection{Analyzing the Effect of Positive and Negative Class-Conditional Distribution Shift}\label{sec:App:AddRes:BiasSweep}

The goal of these experiments is to:
\begin{enumerate}
  \setlength{\itemsep}{0pt}
  \item Demonstrate the effectiveness of our approaches across the entire spectrum of positive-train class\=/conditional distribution shift.
  \item Study how our methods perform when the assumption of a fixed negative class\=/conditional distribution is violated.
\end{enumerate}

We look at these trends across three datasets (as in Section~\ref{sec:ExpRes:NonidenticalSupports}): MNIST, 20~Newsgroups, and CIFAR10.  The positive and negatives classes are formed by combining two labels from the original dataset (the use of two labels per class is necessary for this experimental setup). Table~\ref{tab:App:AddRes:ShiftClassDefinition} enumerates each dataset's positive and negative class definitions; these definitions apply for both train and test.  The dataset sizes are listed in Table~\ref{tab:App:AddRes:ShiftDatasetSizes}; note that $\nTest$~is the size of the inductive test set used to measure performance.  The validation set was one-fifth the training set size.  The priors were also fixed such that ${\prd{\tr} = \prd{\te} = 0.5}$.

\begin{table}[h]
  \centering
  \caption{Positive and negative class definitions for the class-conditional bias experiments}\label{tab:App:AddRes:ShiftClassDefinition}
  \begin{tabular}{@{}lllll@{}}
    \toprule
    \multirow{2}{*}{Dataset}       & \multicolumn{2}{c}{Positive} & \multicolumn{2}{c}{Negative} \\\cmidrule(lr){2-3}\cmidrule(l){4-5}
                  & $C_1$   & $C_2$    & $C_1$   & $C_2$ \\\midrule
    MNIST         & 8       & 9        & 3       & 4        \\
    20~Newsgroups & sci     & rec      & comp    & talk     \\
    CIFAR10       & Auto    & Plane    & Ship    & Truck    \\
    \bottomrule
  \end{tabular}
\end{table}

\begin{table}[h]
  \centering
  \caption{Dataset sizes for the class-conditional bias experiments}\label{tab:App:AddRes:ShiftDatasetSizes}
  \begin{tabular}{@{}lrrrr@{}}
    \toprule
    Dataset       & $\np$   & $\nTrU$  & $\nTeU$ & $\nTest$ \\\midrule
    MNIST         & 250     & 5,000    & 5,000   & 1,500   \\
    20~Newsgroups & 500     & 2,000    & 2,000   & 1,000   \\
    CIFAR10       & 500     & 5,000    & 5,000   & 1,500   \\
    \bottomrule
  \end{tabular}
\end{table}

The default rule in this section is that the positive/negative train/test classes are selected uniformly at random without replacement from their respective subclasses.  In each experiment, either the positive-train or negative-train class\=/conditional distribution is shifted (never both).  The test distribution is never biased and is identical for all experiments.

\paragraph{Positive-Train Shift} In these experiments, the positive-train class\=/conditional distribution~(i.e.,~$\plike{\tr}$) is shifted.  Recall that each positive class is composed of two labels; denote them~$C_1$ and~$C_2$ (e.g.,~${C_1 = \text{Auto}}$ and ${C_2 = \text{Plane}}$ for~CIFAR10).  ${\Pr\sbrack{\text{Label}_{\tr}{ = }C_1 \vert \yrv = \yp}}$ is the probability that any positive-valued \textit{training} example has original label~$C_1$.  Since there are two labels per class,
\begin{equation}\label{eq:}
  \Pr\sbrack{\text{Label}_{\tr}{ =}C_2 \vert \yrv = \yp} = 1- \Pr\sbrack{\text{Label}_{\tr}{ = }C_1 \vert \yrv = \yp}\text{.}
\end{equation}
\noindent
The positive-train class\=/conditional distribution shift entails sweeping ${\Pr\sbrack{\text{Label}_{\tr}{ = }C_1 \vert \yrv = \yp}}$ from~0.5 to~1 (i.e.,~from unbiased on the left to maximally biased on the right).  This setup is more challenging than shifting the positive-test distribution since it entails the learner seeing fewer \textit{labeled} examples from positive subclass~$C_2$.

Figures~\ref{fig:App:AddRes:MNIST:PosClassConditional},~\ref{fig:App:AddRes:20NG:PosClassConditional}, and~\ref{fig:App:AddRes:CIFAR:PosClassConditional} show the positive-train shift's effect on the MNIST, 20~Newsgroups, and~CIFAR10 misclassification rate respectively (where~$C_1$ corresponds to digit~8, document category~``rec'', and image type~``automobile'').  \nnpurpl's performance was consistent across the entire bias range while the two step methods'~(\punu\ and~\pupnu) performance improved as bias increased (due to easier identification of negative examples as explained in Section~\ref{sec:ExpRes:NonidenticalSupports}).  In contrast, \PUc's performance degrades as bias increases; this degradation is largely due to poor density estimation and demonstrates why covariate shift methods can be non\=/ideal.

\PNtr{}~and \PNte{}~are trained using (labeled)~$\uset{\tr}$ and~$\uset{\te}$. Since the test distributions are never biased, \PNte~is unaffected by shift. In contrast, as ${\Pr\sbrack{\text{Label}_{\tr}{ = }C_1 \vert \yrv = \yp}}$ increases, there are fewer examples in~$\uset{\tr}$ with label~$C_2$ causing a degradation in~\PNtr{}'s performance.

\PUc's and \nnpuOpt's performance begins to degrade at the same point where \PNtr's~and \PNte's~performance begins to diverge.  For \nnpuOpt{} in particular, this degradation is primarily attributable to fewer examples labeled~$C_2$ in~$\pset{}$. \PUc{}~is more robust to bias than \nnpuOpt{} (as shown by the slower rate of degradation) since it considers distributional shifts.

\paragraph{Negative-Train Shift} These experiments follow the same basic concept as the positive-train class\=/conditional distribution shift described above except that the bias is instead applied to the negative-train class\=/conditional distribution, i.e.,~$\nlike{\tr}$.  This bias means that ${\nlike{\tr} \ne \nlike{\te}}$.  To reiterate, \textit{these experiments deliberately violate Eq.~\eqref{eq:PreviousWork:NegAssume}'s~assumption} upon which our methods are predicated.  The goal here is to understand our methods' robustness under intentionally deleterious conditions.  It is more deleterious to bias the negative class in~$\uset{\tr}$ since both two-step methods and \nnpurpl{} use $\uset{\tr}$'s~negative risk in dependent calculations; any error propagates and compounds in these subsequent operations.

Let~$C_1$ and~$C_2$ now be the two labels that make up the negative class (e.g.,~${C_1 = \text{Ship}}$ and ${C_2 = \text{Truck}}$ for~CIFAR10).  Now, ${\Pr\sbrack{\text{Label}_{\tr}\text{ = }C_1 \vert \yrv = \yn}}$ is swept along the x\=/axis from~0.5 to~1 (unbiased to maximally biased).  The results for MNIST, 20~Newsgroups, and CIFAR10 are in Figures~\ref{fig:App:AddRes:MNIST:NegClassConditional},~\ref{fig:App:AddRes:20NG:NegClassConditional}, and~\ref{fig:App:AddRes:CIFAR:NegClassConditional} respectively.

With the exception of \punu~on MNIST, all of our methods showed moderate robustness to some negative class\=/conditional distribution bias.  In particular, \pupnu\ was almost as robust as \PUc\ in some cases.  \nnpuOpt's robustness here is expected since anything not in $\pset{}$~is assumed negative; even under bias, sufficient negative examples exist for each label in~$\uset{\te}$ to allow \nnpuOpt\ to learn how to classify those examples.

\begin{figure*}[h]
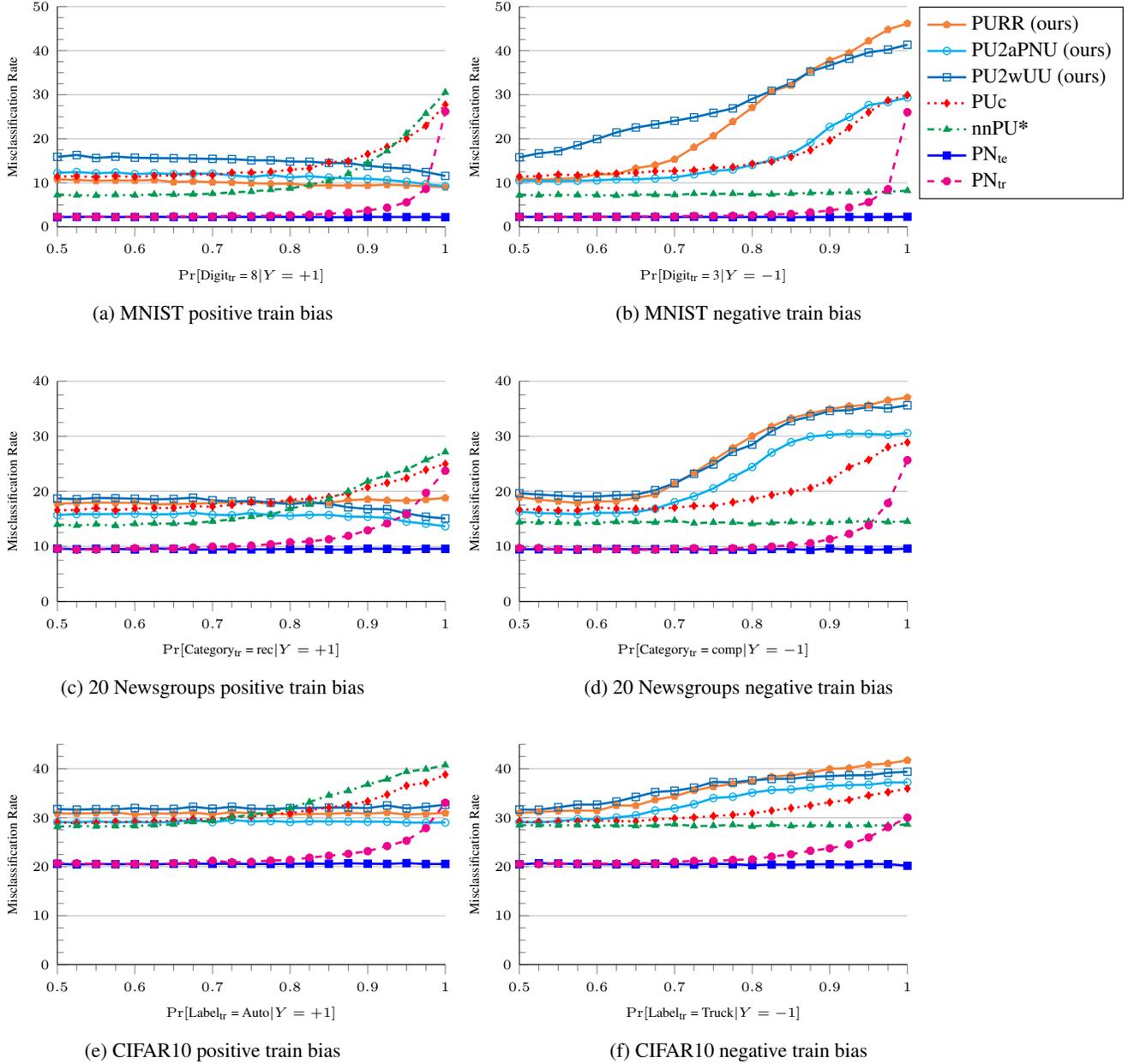

  \centering
  \hspace{-1.0cm}
  \begin{subfigure}[b]{0.40\textwidth}
    \shiftPlot{mnist_positive}{Digit\textsubscript{\tr} = 8}{+1}{50}{\ShiftLegendOff}
    \caption{MNIST positive train bias}\label{fig:App:AddRes:MNIST:PosClassConditional}
  \end{subfigure}
  {}~\hspace{0.3cm}
  \begin{subfigure}[b]{0.52\textwidth}
    \shiftPlot{mnist_negative}{Digit\textsubscript{\tr} = 3}{-1}{50}{}
    \caption{MNIST negative train bias}\label{fig:App:AddRes:MNIST:NegClassConditional}
  \end{subfigure}

  \vspace{18pt}
  \hspace{-1.0cm}
  \begin{subfigure}[t]{0.40\textwidth}
    \shiftPlot{newsgroups_positive}{Category\textsubscript{\tr} = rec}{+1}{40}{\ShiftLegendOff}%
    \caption{20~Newsgroups positive train bias}\label{fig:App:AddRes:20NG:PosClassConditional}%
  \end{subfigure}
  {}~\hspace{0.3cm}
  \begin{subfigure}[t]{0.52\textwidth}
    \shiftPlot{newsgroups_negative}{Category\textsubscript{\tr} = comp}{-1}{40}{\ShiftLegendOff}
    \caption{20~Newsgroups negative train bias}\label{fig:App:AddRes:20NG:NegClassConditional}%
  \end{subfigure}

  \vspace{18pt}
  \hspace{-1.0cm}
  \begin{subfigure}[b]{0.40\textwidth}
    \shiftPlot{cifar_positive}{Label\textsubscript{\tr} = Auto}{+1}{45}{\ShiftLegendOff}
    \caption{CIFAR10 positive train bias}\label{fig:App:AddRes:CIFAR:PosClassConditional}
  \end{subfigure}
  {}~\hspace{0.3cm}
  \begin{subfigure}[b]{0.52\textwidth}
    \shiftPlot{cifar_negative}{Label\textsubscript{\tr} = Truck}{-1}{45}{\ShiftLegendOff}
    \caption{CIFAR10 negative train bias}\label{fig:App:AddRes:CIFAR:NegClassConditional}
  \end{subfigure}
  \caption{Effect of positive~($\plikeTR$) or negative~($\nlikeTR$) training class\=/conditional distribution shift on inductive misclassification rate~(\%) for the MNIST, 20~Newsgroups, and CIFAR10 datasets.  The x\=/axis corresponds to~${\Pr\sbrack{\text{Label}_{\tr}\text{ = }C_1 \vert y = \yhat}}$ where~${\yhat \in \set{\pm1}}$. Each data point is the average of 100~trials.}\label{fig:App:AddRes:ClassConditionalShift}
\end{figure*}
\suppressfloats
\clearpage
\newpage

\subsection{Effect of Prior Probability Misestimation}\label{sec:App:AddRes:PriorEstimationError}

As explained in Section~\ref{sec:APULearning}, this work treats the positive-class priors, $\prd{\tr}$~and~$\prd{\te}$, as known.  This set of experiments examines our methods' performance when the priors are misspecified.

\paragraph{Experimental Setup}  These experiments reuse the partially disjoint positive-support experiment setups from Section~\ref{sec:ExpRes:NonidenticalSupports}'s Table~\ref{tab:ExpRes:Nonoverlap}.  Therefore, we are specifically considering the MNIST, 20~Newsgroups, and~CIFAR10 datasets with Table~\ref{tab:App:PriorShift:ExpSetup} summarizing the experimental setups.

$\prd{\tr}$ and~$\prd{\te}$ in Table~\ref{tab:App:PriorShift:ExpSetup} are the \textit{actual} prior probabilities used to construct each training and test data set. We tested our methods' performance when each prior was specified correctly and when each prior was misspecified by~${\pm20\%}$ for a total of ${9 = 3 \times 3}$ conditions per learner.  \PUc~estimates~$\prd{\te}$ as part of its density-ratio estimation.  As such, we only report three bias conditions for~\PUc, all over training prior~$\prd{\tr}$.  Like all previous experiments, performance was evaluated using the inductive (test) misclassification rate, and all methods saw identical datasets splits in each trial.

\paragraph{Analysis} Tables~\ref{tab:App:PriorShift:MNIST},~\ref{tab:App:PriorShift:20NG}, and~\ref{tab:App:PriorShift:CIFAR10} contain the results for MNIST, 20~Newsgroups, and CIFAR10 respectively.  Each learner's results are presented in a ${3 \times 3}$~grid with~$\prd{\tr}$ changing row by row and $\prd{\te}$~changing column by column.  Each cell is shaded red, with a darker background denoting worse performance, i.e.,~a greater misclassification rate.  In all but one setup, our methods outperformed~\PUc.

Similar to Section~\ref{sec:App:AddRes:BiasSweep}'s experiments, the MNIST results were most affected by bias. The 20~Newsgroups and CIFAR10 results were more immune due to the richer feature representations generated through transfer learning.

Of our three methods, \pupnu{}~was the least affected by misspecified priors.  For the two-step methods, the worst performing misestimation profile was dataset specific.  In contrast, \nnpurpl{}'s performance was always worst when the train and test priors were misspecified in opposite directions.  To understand why this is, recall that \nnpurpl{}'s definition in Eq.~\eqref{eq:PURPL:Complete} includes prior ratio~\eqsmall{$\frac{1 -\prd{\te}}{1 - \prd{\tr}}$}. This ratio compounds prior misestimations with opposite signs.

As an example, consider the MNIST experiment below with true priors ${\prd{\tr} = \prd{\te} = 0.5}$, making \nnpurpl{}'s ideal prior ratio~\eqsmall{${\frac{1-0.5}{1-0.5} = 1}$}.  This ratio remains~1 even if the priors are misspecified as ${\prd{\tr} = \prd{\te} = 0.6}$ or ${\prd{\tr} = \prd{\te} = 0.4}$.  In contrast, if ${\prd{\tr} = 0.4}$ and ${\prd{\te} = 0.6}$, \nnpurpl{}'s (erroneous) prior ratio is~\eqsmall{${\frac{1 - 0.6}{1-0.4} \approx 0.67}$} --~a 33\% error.  Furthermore, when the priors are misspecified as ${\prd{\tr} = 0.6}$ and ${\prd{\te} = 0.4}$, the prior ratio jumps to~\eqsmall{${\frac{1 - 0.4}{1 - 0.6} = 1.5}$} -- a 50\%~error.  This is why over-estimation of the training prior and underestimation of the test prior is always \nnpurpl{}'s worst performing configuration.

\begin{table}[ht]
  \centering
  \caption{Positive train~(\pDsTrain), positive test~(\pDsTest), and negative~(N) class definitions and actual prior probabilities for the experiments examining the effect of misspecified prior(s) on our algorithms' performance.}\label{tab:App:PriorShift:ExpSetup}
\newcommand{\VertCentered}[1]{\begin{tabular}{l} #1 \end{tabular}}
\newcommand{\PriorDsName}[1]{\VertCentered{#1}}
\newcommand{\PriorNDef}[1]{\makecell[l]{#1}}
\newcommand{\PriorPDef}[1]{\PriorNDef{#1}}

\begin{tabular}{@{}llllll@{}}
  \toprule
   & N   &   \pDsTrain{}     & \pDsTest{}     & $\prd{\tr}$  & $\prd{\te}$ \\\midrule
  \PriorDsName{MNIST}    & \PriorNDef{0, 2, 4,\\ 6, 8} & \PriorPDef{1, 3, 5,\\ 7, 9}        & 7, 9      & 0.5  & 0.5  \\\hdashline
  \PriorDsName{20~News.} & \PriorNDef{sci, soc,\\talk} & \PriorPDef{alt, comp,\\ misc, rec} & \PriorPDef{misc,\\ rec} & 0.37 & 0.56 \\\hdashline
  \PriorDsName{CIFAR10}  & \PriorNDef{Bird, Cat,\\Deer, Dog,\\Frog, Horse} & \PriorPDef{Plane, \\ Auto, Ship, \\ Truck} & Plane & 0.4  & 0.4   \\
  \bottomrule
\end{tabular}
 \end{table}

\clearpage
\begin{table}[ht]
  \centering
  \caption{\PriorTableCaptionText{MNIST}}\label{tab:App:PriorShift:MNIST}
  \PriorTableBase{mnist}
\end{table}

\begin{table}[ht]
  \centering
  \caption{\PriorTableCaptionText{20~Newsgroups}}\label{tab:App:PriorShift:20NG}
  \PriorTableBase{20ng}
\end{table}

\begin{table}[h]
  \centering
  \caption{\PriorTableCaptionText{CIFAR10}}\label{tab:App:PriorShift:CIFAR10}
  \PriorTableBase{cifar10}
\end{table}
   \end{document}